\newif\iffull
\def\draft{1}
    \def\ShowAuthNotes{1}
    \def\ShowAuthNotes{0}
\newcommand{\cS}{\mathcal{S}}
\newcommand{\eps}{\varepsilon}
\newcommand{\R}{\mathbb{R}}
\newcommand{\A}{{\cal A}}
\newcommand{\cC}{{\cal C}}
\newcommand{\cX}{{\cal X}}
\newcommand{\cY}{{\cal Y}}
\newcommand{\cZ}{{\cal Z}}
\newcommand{\eg}{{\it e.g.}}
\newcommand{\calP}{\mathcal {P}}
\newcommand{\D}{{\cal D}}
\newcommand{\X}{\mathcal{X}}
\newcommand{\zo}{\{0,1\}}
\newcommand{\pr}{\Pr} % Generic probability
\newcommand{\sgn}{\textrm{sgn}}
\newcommand{\remove}[1]{}
\newcommand{\op}[1]{}
\newtheorem{theorem}{Theorem}
\newtheorem{corollary}[theorem]{Corollary}
\newtheorem{lemma}[theorem]{Lemma}
\newtheorem{definition}[theorem]{Definition}
\newtheorem{remark}[theorem]{Remark}
\definecolor{darkred}{rgb}{0.5,0,0}
\definecolor{brown}{rgb}{0.5,0.25,0}
\definecolor{darkgreen}{rgb}{0,.5,0}
\definecolor{lightgray}{gray}{.8}
\definecolor{lightrose}{rgb}{1, .7, .7}
\definecolor{lightskyblue}{rgb}{.9,.9,1}
\definecolor{aquamarine}{rgb}{.442,1,.812}
\definecolor{lightgreen}{rgb}{.7,1,.7}
\definecolor{lightpink}{rgb}{1,.8,0.8}
\definecolor{thistle}{rgb}{0.9,0.749,0.9}
\definecolor{lightblue}{rgb}{0.67843,0.847,0.9}
\newcommand{\authnote}[2]{{ \footnotesize \bf{\color{darkred}[#1's Note:
{\color{darkgreen}#2}]}}}
\newcommand{\authnote}[2]{}
\newcommand{\Esymb}{\mathbb{E}}
\newcommand{\Psymb}{\mathbb{P}}
\DeclareMathOperator*{\Expect}{\Esymb}
\DeclareMathOperator*{\Prob}{\Psymb}
\renewcommand{\E}{\Expect}
\renewcommand{\pr}{\Prob}
\renewcommand{\Pr}{\Prob}
\newcommand{\cE}{{\cal E}}
\newcommand{\bX}{{\bm X}}
\providecommand{\X}{{\cal X}}
\newcommand{\bY}{{\bm Y}}
\newcommand{\bZ}{{\bm Z}}
\newcommand{\Y}{{\cal Y}}
\newcommand{\bS}{{\bm S}}
\newcommand{\bphi}{{\bm \phi}}
\newcommand{\ba}{{\bm a}}
\providecommand{\cO}{{\mathcal O}}
\providecommand{\cQ}{{\mathcal Q}}
\providecommand{\cS}{{\mathcal S}}
\newcommand{\Tho}{{\sf Thresholdout}\xspace}
\newcommand{\Sparse}{{\sf SparseValidate}\xspace}
\newcommand{\Tho}{{\footnotesize{\sf Thresholdout}}\xspace}
\newcommand{\Sparse}{{\footnotesize{\sf SparseValidate}}\xspace}
\newcommand{\para}[1]{\paragraph{#1}}
\newcommand{\para}[1]{\noindent{\bf #1}}
\begin{document}
\title{Generalization in Adaptive Data Analysis and Holdout Reuse}
\author{Cynthia Dwork\thanks{Microsoft Research} \and Vitaly Feldman\thanks{IBM Almaden Research Center. Part of this work done while visiting the Simons Institute, UC Berkeley}  \and Moritz Hardt\thanks{Google Research} \and Toniann Pitassi\thanks{University of Toronto} \and Omer Reingold\thanks{Samsung Research America} \and Aaron Roth\thanks{Department of Computer and Information Science, University of Pennsylvania}}

\maketitle

\begin{abstract}
Overfitting is the bane of data analysts, even when data are plentiful. Formal approaches to understanding this problem focus on statistical inference and generalization of individual analysis procedures. Yet the practice of data analysis is an inherently interactive and adaptive process: new analyses and hypotheses are proposed after seeing the results of previous ones, parameters are tuned on the basis of obtained results, and datasets are shared and reused.  An investigation of this gap has recently been initiated by the authors in \cite{DworkFHPRR14:arxiv}, where we focused on the problem of estimating expectations of adaptively chosen functions.

In this paper, we give a simple and practical method for reusing a holdout (or testing) set to validate the accuracy of hypotheses produced by a learning algorithm operating on a training set. Reusing a  holdout set adaptively multiple times can easily lead to overfitting to the holdout set itself. We give an algorithm that enables the validation of a large number of adaptively chosen hypotheses, while provably avoiding overfitting. We illustrate the advantages of our algorithm over the standard use of the holdout set via a simple synthetic experiment.

We also formalize and address the general problem of data reuse in adaptive data analysis. We show how the differential-privacy based approach given in \cite{DworkFHPRR14:arxiv} is applicable much more broadly to adaptive data analysis. We then show that a simple approach based on description length can also be used to give guarantees of statistical validity in adaptive settings. Finally, we demonstrate that these incomparable approaches can be unified via the notion of approximate max-information that we introduce. This, in particular, allows the preservation of statistical validity guarantees even when an analyst adaptively composes algorithms which have guarantees based on either of the two approaches.

\end{abstract}

\section{Introduction}
The goal of machine learning is to produce hypotheses or models that generalize well to the unseen instances of the problem. More generally, statistical data analysis is concerned with estimating properties of the underlying data distribution, rather than properties that are specific to the finite data set at hand. Indeed, a large body of theoretical and empirical research was developed for ensuring generalization in a variety of settings. In this work, it is commonly assumed that each analysis procedure (such as a learning algorithm) operates on a freshly sampled dataset -- or if not, is validated on a freshly sampled \emph{holdout} (or testing) set.

Unfortunately, learning and inference can be more difficult in practice, where
data samples are often reused. For example, a common practice is to perform feature selection on a dataset, and then use the features for some supervised learning task. When these two steps are performed on the same dataset, it is no longer clear that the results obtained from the combined algorithm will generalize. Although not usually understood in these terms, ``Freedman's paradox" is an elegant demonstration
of the powerful (negative) effect of adaptive analysis on the same data \cite{Freedman83}. In Freedman's simulation, variables with significant $t$-statistic are selected and linear regression is performed on this adaptively chosen subset of variables, with famously misleading results: when the relationship between the dependent and explanatory variables is non-existent, the procedure overfits, erroneously declaring significant relationships.

Most of machine learning practice does not rely on formal guarantees of generalization for learning algorithms. Instead a dataset is split randomly into two (or sometimes more) parts: the training set and the testing, or  holdout, set. The training set is used for learning a predictor, and then the holdout set is used to estimate the accuracy of the predictor on the true distribution\footnote{Additional averaging over different partitions is used in cross-validation.}. Because the predictor is independent of the holdout dataset, such an estimate is a valid estimate of the true prediction accuracy (formally, this allows one to construct a confidence interval for the prediction accuracy on the data distribution). However, in practice the holdout dataset is rarely used only once, and as a result the predictor may not be independent of the holdout set, resulting in overfitting to the holdout set \cite{Reunanen:03,RaoF:08,CawleyT10}. One well-known reason for such dependence is that the holdout data is used to test a large number of predictors and only the best one is reported. If the set of all tested hypotheses is known and independent of the holdout set, then it is easy to account for such multiple testing or use the more sophisticated approach of Ng \cite{Ng97}.

However such static approaches do not apply if the estimates or hypotheses tested on the holdout are chosen adaptively: that is, if the choice of hypotheses depends on previous analyses performed on the dataset. One prominent example in which a holdout set is often adaptively reused is hyperparameter tuning (\eg \cite{DoFN07}).
Similarly, the holdout set in a machine learning competition, such as the famous ImageNet competition, is typically reused many times adaptively.
Other examples include using the holdout set for feature selection, generation of base learners (in aggregation techniques such as boosting and bagging), checking a stopping condition, and analyst-in-the-loop decisions. See \cite{Lanford05blog} for a discussion of several subtle causes of overfitting.

The concrete practical problem we address is how to ensure that the holdout set can be reused to perform validation in the adaptive setting. Towards addressing this problem we also ask the more general question of how one can ensure that the final output of adaptive data analysis generalizes to the underlying data distribution. This line of research was recently initiated by the authors in \cite{DworkFHPRR14:arxiv}, where we focused on the case of estimating expectations of functions from i.i.d.~samples (these are also referred to as statistical queries). They show how to answer a large number of adaptively chosen statistical queries using techniques from differential privacy \cite{DMNS06}\iffull (see Sec.~\ref{sec:related} and Sec.~\ref{sec:dp} for more details)\fi.

\subsection{Our Results}
We propose a simple and general formulation of the problem of preserving statistical validity in adaptive data analysis. We show that the connection between differentially private algorithms and generalization from \cite{DworkFHPRR14:arxiv} can be extended to this more general setting, and show that similar (but sometimes incomparable) guarantees can be obtained from algorithms whose outputs can be described by short strings. We then define a new notion, {\em approximate max-information}, that unifies these two basic techniques and gives a new perspective on the problem. In particular, we give an adaptive composition theorem for max-information, which gives a simple way to obtain generalization guarantees for analyses in which some of the procedures are differentially private and some have short description length outputs.  We apply our techniques to the problem of reusing the holdout set for validation in the adaptive setting.

%\iffull
\subsubsection{A Reusable Holdout}
%\else
%\paragraph{A Reusable Holdout:}
%\fi
We describe a simple and general method, together with two specific instantiations, for {\em reusing} a holdout set for validating results while provably avoiding overfitting to the holdout set. The analyst can perform any analysis on the training dataset, but can only access the holdout set via an algorithm that allows the analyst to validate her hypotheses against the holdout set. Crucially, our algorithm prevents overfitting to the holdout set even when the analyst’s hypotheses are chosen adaptively on the basis of the previous responses of our algorithm.

Our first algorithm, referred to as \Tho, derives its guarantees from differential privacy and the results in \cite{DworkFHPRR14:arxiv,NissimS15}. For any function $\phi:\X \rightarrow [0,1]$ given by the analyst, \Tho uses the holdout set to validate that $\phi$ does not overfit to the training set, that is, it checks that the mean value of $\phi$ evaluated on the training set is close to the mean value of $\phi$ evaluated on the distribution $\calP$ from which the data was sampled. The standard approach to such validation would be to compute the mean value of $\phi$ on the holdout set. The use of the holdout set in \Tho differs from the standard use in that it exposes very little information about the mean of $\phi$ on the holdout set: if $\phi$ does not overfit to the training set, then the analyst receives only the confirmation of closeness, that is, just a single bit. On the other hand, if $\phi$ overfits then \Tho returns the mean value of $\phi$ on the training set perturbed by carefully calibrated noise.

Using results from  \cite{DworkFHPRR14:arxiv,NissimS15} we show that for datasets consisting of i.i.d.~samples these modifications provably prevent the analyst from constructing functions that overfit to the holdout set. This ensures correctness of \Tho's responses. Naturally, the specific guarantees depend on the number of samples $n$ in the holdout set. The number of queries that \Tho can answer is exponential in $n$ as long as the number of times that the analyst overfits is at most quadratic in $n$.

Our second algorithm \Sparse is based on the idea that if most of the time the analyst’s procedures generate results that do not overfit, then validating them against the holdout set does not reveal much information about the holdout set. Specifically, the generalization guarantees of this method follow from the observation that the transcript of the interaction between a data analyst and the holdout set can be described concisely. More formally, this method allows the analyst to pick any Boolean function of a dataset $\psi$ (described by an algorithm) and receive back its value on the holdout set. A simple example of such a function would be whether the accuracy of a predictor on the holdout set is at least a certain value $\alpha$. (Unlike in the case of \Tho, here there is no need to assume that the function that measures the accuracy has a bounded range or even Lipschitz, making it qualitatively different from the kinds of results achievable subject to differential privacy). A more involved example of validation would be to run an algorithm on the holdout dataset to select an hypothesis and check if the hypothesis is similar to that obtained on the training set (for any desired notion of similarity). Such validation can be applied to other results of analysis; for example one could check if the variables selected on the holdout set have large overlap with those selected on the training set. An instantiation of the \Sparse algorithm has already been applied to the problem of answering statistical (and more general) queries in the adaptive setting \cite{BassilySSU15}. \iffull We describe the formal guarantees for \Sparse in Section \ref{sec:sparse}.\fi

\iffull
In Section \ref{sec:experiments} we \else We \fi  describe a simple experiment on synthetic data that illustrates the danger of reusing a standard holdout set, and how this issue can be resolved by our reusable holdout. The design of this experiment is inspired by Freedman's classical experiment, which demonstrated the dangers of performing variable selection and regression on the same data \cite{Freedman83}.

\subsection{Generalization in Adaptive Data Analysis}
We view adaptive analysis on the same dataset as an execution of a sequence of steps $\A_1 \rightarrow \A_2 \rightarrow \cdots \rightarrow \A_m$. Each step is described by an algorithm $\A_i$ that takes as input a fixed dataset $S=(x_1,\ldots,x_n)$ drawn from some distribution $\D$ over $\X^n$, which remains unchanged over the course of the analysis. Each algorithm $A_i$ also takes as input the outputs of the previously run algorithms $\A_1$ through $\A_{i-1}$ and produces a value in some range $\Y_i$.
The dependence on previous outputs represents all the adaptive choices that are made at step $i$ of data analysis. For example, depending
on the previous outputs, $\A_i$ can run different types of analysis on $S$. We note that at this level of generality, the algorithms can represent the choices of the data analyst, and need not be explicitly specified. We assume that the analyst uses algorithms which {\it individually} are known to generalize when executed on a fresh dataset sampled independently from a distribution $\D$. We formalize this by assuming that for every fixed value $y_1,\ldots,y_{i-1} \in \Y_1 \times \cdots \times \Y_{i-1}$, with probability at least $1-\beta_i$ over the choice of $S$ according to distribution $\D$, the output of $\A_i$ on inputs $y_1,\ldots,y_{i-1}$ and $S$ has a desired property relative to the data distribution $\D$ (for example has low generalization error). Note that in this assumption $y_1,\ldots,y_{i-1}$ are fixed and independent of the choice of $S$, whereas the analyst will execute $\A_i$ on values $\bY_1,\ldots, \bY_{i-1}$, where $\bY_j = \A_j(S, \bY_{1},\ldots, \bY_{j-1})$. In other words, in the adaptive setup, the algorithm $\A_i$ can depend on the previous outputs, which depend on $S$, and thus the set $S$ given to $\A_i$ is no longer an independently sampled dataset. Such dependence invalidates the generalization guarantees of individual procedures, potentially leading to overfitting.

\para{Differential privacy:}
First, we spell out how the differential privacy based approach from  \cite{DworkFHPRR14:arxiv} can be applied to this more general setting. Specifically, a simple corollary of results in \cite{DworkFHPRR14:arxiv} is that for a dataset consisting of i.i.d.~samples any output of a differentially-private algorithm can be used in subsequent analysis while controlling the risk of overfitting, even beyond the setting of statistical queries studied in \cite{DworkFHPRR14:arxiv}. A key property of differential privacy in this context is that it composes adaptively: namely if each of the algorithms used by the analyst is differentially private, then the whole procedure will be differentially private (albeit with worse privacy parameters). Therefore, one way to avoid overfitting in the adaptive setting is to use algorithms that satisfy (sufficiently strong) guarantees of differential-privacy. \iffull In Section \ref{sec:dp} we describe this result formally.\fi

\para{Description length:}
We then show how description length bounds can be applied in the context of guaranteeing generalization in the presence of adaptivity. If the total length of the outputs of algorithms $\A_1, \ldots, \A_{i-1}$ can be described with $k$  bits then there are at most $2^k$ possible values of the input $y_1,\ldots,y_{i-1}$ to $\A_i$. For each of these individual inputs $\A_i$ generalizes with probability $1-\beta_i$. Taking a union bound over failure probabilities implies generalization with probability at least $1-2^k\beta_i$.
Occam's Razor famously implies that shorter hypotheses have lower generalization error. Our observation is that shorter hypotheses (and the results of analysis more generally) are also better in the adaptive setting since they reveal less about the dataset and lead to better generalization of {\em subsequent} analyses. Note that this result makes no assumptions about the data distribution $\D$. \iffull We provide the formal details in Section \ref{sec:descr}. In Section \ref{sec:median} \else In the full version\fi we also show that description length-based analysis suffices for obtaining an algorithm (albeit not an efficient one) that can answer an exponentially large number of adaptively chosen statistical queries. This provides an alternative proof for one of the results in \cite{DworkFHPRR14:arxiv}.

\para{Approximate max-information:}
Our main technical contribution is the introduction and analysis of a new  information-theoretic measure, which unifies the generalization arguments that come from both differential privacy and description length, and that quantifies how much information has been learned about the data by the analyst. Formally, for jointly distributed random variables $(\bS,\bY)$, the max-information is the maximum of the logarithm of the factor by which uncertainty about $\bS$ is reduced given the value of $\bY$, namely $I_\infty(\bS,\bY) \doteq \log \max \frac{\pr[\bS = S \cond \bY = y]}{\pr[\bS = S]}$, where the maximum is taken over all $S$ in the support of $\bS$ and $y$ in the support $\bY$. \iffull Informally, $\beta$-approximate max-information requires that the logarithm above be bounded with probability at least $1-\beta$ over the choice of $(\bS,\bY)$ (the actual definition is slightly weaker, see Definition \ref{def:maxinfo} for details).\else Approximate max-information is a relaxation of max-information.\fi
In our use, $\bS$ denotes a dataset drawn randomly from the distribution $\D$ and $\bY$ denotes the output of a (possibly randomized) algorithm on $\bS$.
We prove that approximate max-information has the following properties
\begin{itemize}
\item An upper bound on (approximate) max-information gives generalization guarantees.
\item Differentially private algorithms have low max-information for any distribution $\D$ over datasets. A stronger bound holds for approximate max-information on i.i.d.~datasets. These bounds apply only to so-called pure differential privacy (the $\delta=0$ case).
\item Bounds on the description length of the output of an algorithm give bounds on the approximate max-information of the algorithm for any $\D$.
\item Approximate max-information composes adaptively.
\item Approximate max-information is preserved under post-processing.
\end{itemize}
Composition properties of approximate max-information imply that one can easily obtain generalization guarantees for adaptive sequences of algorithms, some of which are differentially private, and others of which have outputs with short description length. These properties also imply that differential privacy can be used to control generalization for any distribution $\D$ over datasets, which extends its generalization guarantees beyond the restriction to datasets drawn i.i.d.~from a fixed distribution, as in  \cite{DworkFHPRR14:arxiv}.

We remark that (pure) differential privacy and description length are otherwise incomparable -- low description length is \emph{not} a sufficient condition for differential privacy, since differential privacy precludes revealing even a small number of bits of information about any single individual in the data set. At the same time differential privacy does not constrain the description length of the output. Bounds on max-information or differential privacy of an algorithm can,
however, be translated to bounds on {\em randomized description length} for a different algorithm with statistically indistinguishable output. Here we say that a randomized algorithm has randomized description length of $k$ if for every fixing of the algorithm's random bits, it has description length of $k$.
Details of these results and additional discussion appear in \iffull Sections \ref{sec:maxinfo} and \ref{sec:max-info2dl}.\else Section \ref{sec:maxinfo} and supplemental material.\fi

\subsection{Related Work}
\label{sec:related}
 This work builds on \cite{DworkFHPRR14:arxiv} where we initiated the formal study of adaptivity in data analysis. The primary focus of \cite{DworkFHPRR14:arxiv} is the problem of answering adaptively chosen statistical queries. The main technique is a strong connection between differential privacy and generalization: differential privacy guarantees that the distribution of outputs does not depend too much on any one of the data samples, and thus, differential privacy gives a strong stability guarantee that behaves well under adaptive data analysis. The link between generalization and approximate differential privacy made in \cite{DworkFHPRR14:arxiv} has been subsequently strengthened, both qualitatively  --- by \cite{BassilySSU15}, who make the connection for a broader range of queries --- and quantitatively, by \cite{NissimS15} and \cite{BassilySSU15}, who give tighter quantitative bounds. These papers, among other results, give methods for accurately answering exponentially (in the dataset size) many adaptively chosen queries, but the algorithms for this task are not efficient. It turns out this is for fundamental reasons -- Hardt and Ullman \cite{HU14} and Steinke and Ullman \cite{SU14} prove that, under cryptographic assumptions, no efficient algorithm can answer more than quadratically many statistical queries chosen adaptively by an adversary who knows the true data distribution.% chosenin the worst case.

\iffull
Differential privacy emerged from a line of work \cite{DN03,DworkNi04,BlumDMN05}, culminating in the definition given by
\cite{DMNS06}. There is a very large body of work designing
differentially private algorithms for various data analysis tasks, some of
which we leverage in our applications. See \cite{Dwork11} for a short survey and \cite{DworkR14} for a textbook introduction to differential privacy.
\fi

The classical approach in theoretical machine learning to ensure that
empirical estimates generalize to the underlying distribution is based on the
various notions of complexity of the set of functions output by the algorithm,
most notably the VC dimension\iffull (see e.g.~\cite{Shalev-ShwartzBen-David:2014} for a textbook
introduction)\fi. If one has a sample of data large enough to guarantee
generalization for all functions in some class of bounded complexity, then it
does not matter whether the data analyst chooses functions in this class
adaptively or non-adaptively. Our goal, in contrast, is to prove
generalization bounds \emph{without} making any assumptions about the class
from which the analyst can choose query functions. In this case the adaptive
setting is very different from the non-adaptive setting.

An important line of work~\cite{BousquettE02,MukherjeeNPR06,PoggioRMN04,ShwartzSSS10}
establishes connections between the \emph{stability} of a learning algorithm
and its ability to generalize. Stability is a measure of how much the output of
a learning algorithm is perturbed by changes to its input. It is known that
certain stability notions are necessary and sufficient for generalization.
Unfortunately, the stability notions considered in these prior works do not
compose in the sense that running multiple stable algorithms sequentially and
adaptively may result in a procedure that is not stable. The measure we introduce in this work (max information), like differential privacy, has the strength that it enjoys adaptive composition guarantees. This makes it amenable to reasoning about the generalization properties of adaptively applied sequences of algorithms, while having to analyze only the individual components of these algorithms.
Connections between stability, empirical risk minimization and differential privacy in the context of learnability have been recently explored in \cite{WangLF15}.

Freund  gives an approach to obtaining data-dependent generalization bounds that takes into account the set of statistical queries that a given learning algorithm can produce for the distribution from which the data was sampled \cite{Freund98}. A related approach of Langford and Blum  also allows to obtain data-dependent generalization bounds based on the description length of functions that can be output for a data distribution \cite{LangfordB03}. Unlike our work, these approaches require the knowledge of the structure of the learning algorithm to derive a generalization bound. More importantly, the focus of our framework is on the design of {\em new} algorithms with better generalization properties in the adaptive setting.

Finally, inspired by our work, Blum and Hardt \cite{BlumH15} showed how to reuse the holdout set to maintain an accurate leaderboard in a machine learning competition that allows the participants to submit adaptively chosen models in the process of the competition (such as those organized by Kaggle Inc.). Their analysis also relies on the description length-based technique we used to analyze \Sparse.
%% Explained this earlier:
%Models submitted by participants can depend on the published leaderboard for previous submissions and hence, as pointed out in \cite{DworkFHPRR14:arxiv}, reliable estimation of accuracy needs to address the possibility of overfitting due to adaptivity.

\iffull
\section{Preliminaries and Basic Techniques}
\else
\subsection{Preliminaries}
\fi
\label{sec:prelims}
In the discussion below $\log$ refers to binary logarithm and $\ln$ refers to the natural logarithm. \iffull For simplicity we restrict our random variables to finite domains (extension of the claims to continuous domains is straightforward using the standard formalism). \fi
For two random variables $\bX$ and $\bY$ over the same domain $\X$ the max-divergence of $\bX$ from $\bY$ is defined as
$$D_\infty (\bX\| \bY) = \log \max_{x \in \X} \frac{\pr[\bX = x]}{\pr[\bY= x]}.$$
$\delta$-approximate max-divergence is defined as
$$D_\infty^\delta (\bX\| \bY) = \log \max_{\cO \subseteq \X,\ \pr[\bX \in \cO]>\delta} \frac{\pr[\bX \in \cO]-\delta}{\pr[\bY \in \cO]}.$$

% The total variation distance between $\bX$ and $\bY$ is defined as $\Delta(\bX,\bY) = \max_{\cO \subseteq \X} |\pr[\bX \in \cO] - \pr[\bY \in \cO]|$.
\iffull
We say that a real-valued function over datasets $f:\X^n \rightarrow \R$ has sensitivity $c$ for all $i\in
[n]$ and $x_1,x_2,\ldots,x_n,x_i'\in\X$,
$f(x_1,\ldots,x_i,\ldots,x_n) - f(x_1,\ldots,x_i',\ldots,x_n) \leq c$.
We review McDiarmid's concentration inequality for functions of low-sensitivity.
\begin{lemma}[McDiarmid's inequality]
\label{lem:mcdiarmid}
Let $\bX_1,X_2, \ldots, \bX_n$ be independent random variables taking values in the set $\X$. Further let $f:\X^n \rar \R$ be a function of sensitivity $c>0$. Then for all $\al > 0$, and $\mu = \E\lb f(\bX_1,\ldots,\bX_n) \rb$, $$\pr\lb f\b(\bX_1,\ldots,\bX_n)- \mu \geq  \alpha \rb \leq \exp\lp\frac{-2\alpha^2}{n \cdot c^2}\rp.$$
\end{lemma}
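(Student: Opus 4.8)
The plan is to prove this by the classical Doob martingale argument (method of bounded differences) combined with the Chernoff method. First I would note that although the hypothesis states one-sided sensitivity, swapping the roles of $x_i$ and $x_i'$ immediately upgrades it to the two-sided bounded-differences property $|f(x_1,\ldots,x_i,\ldots,x_n) - f(x_1,\ldots,x_i',\ldots,x_n)| \le c$ for every coordinate $i$ and all values.

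Next, with $\bX_1,\ldots,\bX_n$ independent, define the Doob martingale $Z_k = \E[f(\bX_1,\ldots,\bX_n) \mid \bX_1,\ldots,\bX_k]$ for $k=0,1,\ldots,n$, so that $Z_0 = \mu$ and $Z_n = f(\bX_1,\ldots,\bX_n)$, and let $D_k = Z_k - Z_{k-1}$ denote the increments, which satisfy $\E[D_k \mid \bX_1,\ldots,\bX_{k-1}] = 0$ and $\sum_{k=1}^n D_k = f(\bX_1,\ldots,\bX_n) - \mu$. The key structural step is to show that, conditioned on $\bX_1 = x_1,\ldots,\bX_{k-1} = x_{k-1}$, the increment $D_k$ lies in an interval of length at most $c$. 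Writing $g(x) = \E\!\left[f(x_1,\ldots,x_{k-1},x,\bX_{k+1},\ldots,\bX_n)\right]$ (using independence to drop the later variables), the bounded-differences property gives $|g(x)-g(x')| \le c$ for all $x,x'$; since conditionally $D_k = g(\bX_k) - \E[g(\bX_k)\mid \bX_1,\ldots,\bX_{k-1}]$, it is supported on an interval of width $\sup_x g(x) - \inf_x g(x) \le c$.

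Given this, I would invoke Hoeffding's lemma for the conditional law of each increment: a mean-zero random variable supported on an interval of length $c$ has $\E[e^{\lambda D_k}\mid \bX_1,\ldots,\bX_{k-1}] \le e^{\lambda^2 c^2/8}$ for all $\lambda > 0$. Applying this iteratively from $k=n$ down to $k=1$ — conditioning, factoring out the conditional moment-generating bound, and using the tower property — yields $\E\!\left[e^{\lambda(f(\bX_1,\ldots,\bX_n)-\mu)}\right] \le e^{\lambda^2 n c^2/8}$. Markov's inequality then gives $\Pr[f(\bX_1,\ldots,\bX_n) - \mu \ge \alpha] \le e^{-\lambda\alpha + \lambda^2 n c^2/8}$, and choosing $\lambda = 4\alpha/(n c^2)$ makes the exponent equal to $-2\alpha^2/(n c^2)$, which is exactly the claimed bound.

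The main obstacle is the structural step bounding the conditional range of the martingale increments — this is the only place the bounded-differences hypothesis enters, and it also requires the small care noted above to handle the one-sided form of the assumption in the paper. The remaining ingredients (Hoeffding's lemma and the Chernoff optimization over $\lambda$) are entirely routine. If one preferred, this lemma could instead be cited as a standard black box, but since it is stated here as a lemma a short martingale proof along these lines is the natural choice.
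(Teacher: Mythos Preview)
Your proof is correct and is exactly the standard Doob-martingale argument for McDiarmid's inequality; the one-sided-to-two-sided observation, the bound on the conditional range of the increments, Hoeffding's lemma, and the Chernoff optimization are all handled properly. Note, however, that the paper does not actually prove this lemma: it is stated in the preliminaries as a reviewed, well-known inequality and used as a black box thereafter, so there is no ``paper's own proof'' to compare against. Your writeup would serve perfectly well as a self-contained proof if one wanted to include it.
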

\fi
For a function $\phi:\X \rightarrow \R$ and a dataset $S=(x_1,\ldots,x_n)$, let $\cE_S[\phi] \doteq \frac1n\sum_{i=1}^n\phi(x_i)$.
\iffull
Note that if the range of $\phi$ is in some interval of length $\alpha$ then $f(S)=\cE_S[\phi]$ has sensitivity $\alpha/n$.
For a distribution $\calP$ over $\X$ and a function $\phi:\X \rar \R$, let $\calP[\phi] \doteq\E_{x\sim \calP}[\phi(x)]$.
\fi
\iffull
\subsection{Differential Privacy}
On an intuitive level, differential privacy hides the data of any single individual.  We are thus interested in pairs of datasets $S,S'$ that differ in a single element, in which case we say $S$ and $S'$ are {\em adjacent}.
\fi
\begin{definition}{\cite{DMNS06,DKMMN06}}
\label{def:dp}
A randomized algorithm $\A$ with domain $\X^n$ for $n > 0$ is $(\eps, \delta)$-differentially private if for all pairs of datasets that differ in a single element $S,S' \in \X^n$:
$D_\infty^\delta( \A(S) \| \A(S') ) \leq \log(e^\eps).$
The case when $\delta = 0$ is sometimes referred to as {\em pure} differential privacy, and in this case we may say simply that $\A$ is $\eps$-differentially private.
\end{definition}
\iffull
Differential privacy is preserved under adaptive composition.
 Adaptive composition of algorithms is a sequential execution of algorithms on the same dataset in which an algorithm at step $i$ can depend on the outputs of previous algorithms. More formally,
let $\A_1, \A_2, \ldots, \A_m$ be a sequence of algorithms. Each algorithm $\A_i$ outputs a value in some range $\Y_i$ and takes as  an input dataset in $\X^n$ as well as a value in $\bar{\Y}_{i-1} \doteq \Y_1 \times \cdots \times \Y_{i-1}$. Adaptive composition of these algorithm is the algorithm that takes as an input a dataset $S \in \X^n$ and executes $\A_1 \rightarrow \A_2 \rightarrow \cdots \rightarrow \A_m$ sequentially with the input to $\A_i$ being $S$ and the outputs $y_1,\ldots,y_{i-1}$ of $\A_1,\ldots,\A_{i-1}$.
Such composition captures the common practice in data analysis of using the outcomes of previous analyses (that is  $y_1,\ldots,y_{i-1}$) to select an algorithm that is executed on $S$.

For an algorithm that in addition to a dataset has other input we say that it is $(\eps,\delta)$-differentially private if it is $(\eps,\delta)$-differentially private for every setting of additional parameters. The basic property of adaptive composition of differentially private algorithms is the following result (\eg \cite{DworkL09}):
\begin{theorem}
\label{thm:easy-composition}
Let $\A_i: \X^n \times \Y_1 \times \cdots \times \Y_{i-1} \rightarrow \Y_i$ be an $(\eps_i,\delta_i)$-differentially private algorithm for $i \in [m]$.  Then the algorithm $\B: \X^n  \rightarrow \Y_m$ obtained by composing $\A_i$'s adaptively is  $(\sum_{i=1}^m\eps_i,\sum_{i=1}^m \delta_i)$-differentially private.
\end{theorem}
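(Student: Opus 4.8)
The plan is to induct on $m$, which reduces the claim to composing two algorithms. The base case $m=1$ is immediate. For the inductive step, regard the adaptive composition of $\A_1,\ldots,\A_m$ as first running the composition $\B'$ of $\A_1,\ldots,\A_{m-1}$ --- which outputs the tuple $(y_1,\ldots,y_{m-1})\in\Y_1\times\cdots\times\Y_{m-1}$ and, by the inductive hypothesis, is $(\sum_{i<m}\eps_i,\ \sum_{i<m}\delta_i)$-differentially private --- and then running $\A_m$ on $S$ with that tuple as its auxiliary input, where $\A_m$ is $(\eps_m,\delta_m)$-differentially private for every fixed value of the tuple. Hence it suffices to prove the following two-fold statement: if $\A\colon\X^n\to\cU$ is $(\eps_1,\delta_1)$-DP and $\A'\colon\X^n\times\cU\to\Y$ is $(\eps_2,\delta_2)$-DP for every fixed first coordinate $u\in\cU$, then the algorithm $\B$ that on input $S$ outputs $(U,\A'(S,U))$ with $U=\A(S)$ is $(\eps_1+\eps_2,\ \delta_1+\delta_2)$-DP. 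Throughout I will use the standard reformulation that $D_\infty^{\delta}(P\|Q)\le\eps$ is equivalent to $P(\cO)\le e^{\eps}Q(\cO)+\delta$ for \emph{every} event $\cO$.

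The guiding intuition for the two-fold statement is that along any fixed transcript the log-ratio $\log(\pr[\B(S)=(u,y)]/\pr[\B(S')=(u,y)])$ splits as a sum of the two per-step privacy losses, each ``typically'' at most $\eps_i$, so a union bound over the two steps should cost only $\delta_1+\delta_2$. To turn this into a clean bound I would first record the elementary fact that $(\eps_1,\delta_1)$-privacy of $\A$ implies, for adjacent $S,S'$, the excess-mass inequality $\sum_{u}\big(\pr[\A(S)=u]-e^{\eps_1}\pr[\A(S')=u]\big)_+\le\delta_1$; this follows by applying the definition of $D_\infty^{\delta_1}$ to the set $B=\{u:\pr[\A(S)=u]>e^{\eps_1}\pr[\A(S')=u]\}$ and distinguishing the cases $\pr[\A(S)\in B]\le\delta_1$ and $\pr[\A(S)\in B]>\delta_1$. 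Equivalently, the sub-probability measure $\hat P$ defined by $\hat P(u)\defeq\min\{\pr[\A(S)=u],\,e^{\eps_1}\pr[\A(S')=u]\}$ satisfies $\hat P(u)\le e^{\eps_1}\pr[\A(S')=u]$ for all $u$ and has total mass at least $1-\delta_1$.

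Now fix adjacent $S,S'$ and a set $\cO\subseteq\cU\times\Y$, write $\cO_u=\{y:(u,y)\in\cO\}$, and expand $\pr[\B(S)\in\cO]=\sum_u\pr[\A(S)=u]\,g(u)$ with $g(u)\defeq\pr[\A'(S,u)\in\cO_u]\in[0,1]$. Since $\pr[\A(S)=\cdot]-\hat P$ is a non-negative measure of total mass at most $\delta_1$ and $g\le 1$, replacing $\pr[\A(S)=u]$ by $\hat P(u)$ costs at most $\delta_1$; then for each $u$ the $(\eps_2,\delta_2)$-privacy of $\A'(\cdot,u)$ gives $g(u)\le e^{\eps_2}\pr[\A'(S',u)\in\cO_u]+\delta_2$, and since $\hat P$ has total mass at most $1$ the additive $\delta_2$ contributes at most $\delta_2$ after summing against $\hat P$; finally $\hat P(u)\le e^{\eps_1}\pr[\A(S')=u]$ converts the remaining term into $e^{\eps_1+\eps_2}\pr[\B(S')\in\cO]$. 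Assembling the three steps yields $\pr[\B(S)\in\cO]\le e^{\eps_1+\eps_2}\pr[\B(S')\in\cO]+\delta_1+\delta_2$ for every $\cO$ (and symmetrically with $S,S'$ swapped), which is the desired privacy guarantee. The pure case $\delta_1=\delta_2=0$ needs none of this: the probability ratio of $\B$ at any transcript factors into the two per-step ratios, each at most $e^{\eps_i}$. I expect the only real difficulty to be the bookkeeping in the two-fold step --- in particular, routing the $\delta$'s through the sub-measure $\hat P$ so that they add rather than compound into a term like $e^{\eps_2}\delta_1$ --- together with the routine verification that ``private for every value of the auxiliary input'' is exactly the hypothesis needed to chain the two steps.
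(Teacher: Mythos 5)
The paper does not actually prove Theorem~\ref{thm:easy-composition}; it is stated as a known result with a citation to the literature, so there is no in-paper argument to compare against. Your proof is correct and is the standard argument for basic composition of approximate differential privacy. The reduction by induction to the two-algorithm case is sound (note that your induction in fact establishes the stronger statement that the \emph{full transcript} $(y_1,\ldots,y_m)$ is $(\sum_i\eps_i,\sum_i\delta_i)$-differentially private, from which the stated claim about the $\Y_m$-valued output follows by closure under post-processing, Lemma~\ref{lem:post-process}). The one step that genuinely needs care --- extracting from the event-wise definition the pointwise excess-mass bound $\sum_u\bigl(\pr[\A(S)=u]-e^{\eps_1}\pr[\A(S')=u]\bigr)_+\le\delta_1$ and routing both $\delta$'s through the sub-measure $\hat P$ of total mass in $[1-\delta_1,1]$ so that they add rather than pick up a factor $e^{\eps_2}$ --- is exactly the step you carry out, and you carry it out correctly.
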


A more sophisticated argument yields significant improvement when $\eps < 1$ (\eg \cite{DworkR14}).
\begin{theorem}
\label{thm:composition-advanced}
For all $\eps,\delta, \delta' \geq 0$, the adaptive composition of $m$ arbitrary $(\eps,\delta)$-differentially private algorithms is $(\eps',m\delta+\delta')$-differentially private, where
$$\eps' = \sqrt{2m\ln(1/\delta')}\cdot \eps + m\eps(e^\eps-1).$$
\end{theorem}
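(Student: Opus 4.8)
The plan is to run the privacy-loss random variable argument. Fix adjacent datasets $S,S'\in\X^n$, let $\B$ be the $m$-fold adaptive composition, and let $\bY=(\bY_1,\ldots,\bY_m)$ be the transcript of $\B(S)$, so that $\bY_i\sim\A_i(S,\bY_1,\ldots,\bY_{i-1})$. Define the privacy loss $L=\ln\frac{\Pr[\B(S)=\bY]}{\Pr[\B(S')=\bY]}$ and the per-step losses $L_i=\ln\frac{\Pr[\A_i(S,\bY_{<i})=\bY_i]}{\Pr[\A_i(S',\bY_{<i})=\bY_i]}$, where $\bY_{<i}=(\bY_1,\ldots,\bY_{i-1})$. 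Because the transcript is generated sequentially, its joint law factors as a product over the steps, so the chain rule gives $L=\sum_{i=1}^m L_i$. Since an algorithm is $(\eps',\delta')$-differentially private whenever $\Pr_{\bY\sim\B(S)}[L>\eps']\le\delta'$ for all adjacent $S,S'$ (the standard characterization of approximate differential privacy through the privacy loss), it suffices to establish a high-probability upper bound on $L$.

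First I would reduce to the pure case $\delta=0$. An $(\eps,\delta)$-differentially private $\A_i$ agrees, up to an event of probability at most $\delta$ on each of its two adjacent inputs, with a mechanism whose output density ratio between $S$ and $S'$ lies in $[e^{-\eps},e^{\eps}]$. Carving such an event out at each of the $m$ steps removes at most $m\delta$ total probability mass --- these events are adaptive, so one sums the per-step failure probabilities by conditioning on $\bY_{<i}$ rather than union-bounding over a single fixed event --- and this $m\delta$ is absorbed into the additive slack of the final guarantee. On the complementary event the argument proceeds exactly as if each $\A_i$ were $\eps$-differentially private.

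For the pure case I would use two properties of $L_i$, conditioned on $\bY_{<i}$: (i) $|L_i|\le\eps$ surely, from the two-sided density-ratio bound; and (ii) $\E[L_i\mid\bY_{<i}]=\mathrm{KL}\big(\A_i(S,\bY_{<i})\,\big\|\,\A_i(S',\bY_{<i})\big)\le\eps(e^\eps-1)$, a standard estimate for distributions whose density ratio lies in $[e^{-\eps},e^\eps]$. Summing (ii) gives $\sum_{i}\E[L_i\mid\bY_{<i}]\le m\eps(e^\eps-1)$ surely. Treating $\big(L_i-\E[L_i\mid\bY_{<i}]\big)_i$ as a martingale difference sequence and applying Hoeffding's lemma conditionally to each $L_i$ --- which lies in an interval of length $2\eps$ and thus contributes $(2\eps)^2/8=\eps^2/2$ to the exponent --- yields $\E\big[e^{\lambda(L-\sum_i\E[L_i\mid\bY_{<i}])}\big]\le e^{m\lambda^2\eps^2/2}$ for all $\lambda>0$. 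A Chernoff bound, optimized over $\lambda$, then gives $\Pr\big[L>m\eps(e^\eps-1)+\sqrt{2m\ln(1/\delta')}\,\eps\big]\le\delta'$, that is $\Pr[L>\eps']\le\delta'$. Together with the $m\delta$ lost in the reduction, this shows $\B$ is $(\eps',m\delta+\delta')$-differentially private.

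The step I expect to be the main obstacle is the reduction to $\delta=0$: making the ``removal of a $\delta$-mass bad event'' precise so that it composes cleanly over adaptively chosen steps (one cannot union-bound over a fixed event, since later bad events depend on earlier outputs) and verifying that it leaves the $\eps$ part of the bound untouched. The pure-DP core --- the chain-rule decomposition of $L$, the $\mathrm{KL}$ bound $\eps(e^\eps-1)$, and the martingale concentration --- is routine; the only delicate point there is to apply Hoeffding's lemma on an interval of length exactly $2\eps$, which is what recovers the constant $\sqrt{2m\ln(1/\delta')}$ rather than a larger one.
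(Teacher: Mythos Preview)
The paper does not prove this theorem; it is stated as a preliminary and attributed to \cite{DworkR14}. Your proposal is exactly the standard argument from that reference (originally due to Dwork, Rothblum, and Vadhan): the chain-rule decomposition of the privacy loss, the bound $\E[L_i\mid \bY_{<i}]\le\eps(e^\eps-1)$ on the per-step KL, Azuma--Hoeffding on the bounded martingale increments to get the $\sqrt{2m\ln(1/\delta')}\,\eps$ term, and the adaptive carving out of a $\delta$-mass bad event at each step to reduce to the pure case. Your assessment that the only delicate point is the $\delta>0$ reduction --- because later bad events depend on earlier outputs and must be handled by conditioning on the prefix rather than by a single union bound --- is also accurate; the pure-DP martingale core is routine once that reduction is in place.
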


Another property of differential privacy important for our applications is preservation of its guarantee under post-processing (\eg \cite[Prop.~2.1]{DworkR14}):
\begin{lemma}
\label{lem:post-process}
If $\mathcal{A}$ is an $(\epsilon,\delta)$-differentially private algorithm with domain $\mathcal{X}^n$ and range $\Y$, and $\mathcal{B}$ is any, possibly randomized, algorithm with domain $\Y$ and range $\Y''$, then the algorithm $\mathcal{B} \circ \mathcal{A}$ with domain $\mathcal{X}^n$ and range $\Y'$ is also $(\epsilon,\delta)$-differentially private.
\end{lemma}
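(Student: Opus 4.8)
The plan is to reduce to the case where $\mathcal{B}$ is deterministic, and then to recover the general case by averaging over $\mathcal{B}$'s internal randomness. First I would record the standard additive reformulation of Definition~\ref{def:dp}: a randomized algorithm $\mathcal{A}$ with domain $\mathcal{X}^n$ is $(\eps,\delta)$-differentially private if and only if for every pair of adjacent datasets $S,S' \in \mathcal{X}^n$ and every event $\cO$ in the range of $\mathcal{A}$ one has $\Pr[\mathcal{A}(S) \in \cO] \le e^\eps \cdot \Pr[\mathcal{A}(S') \in \cO] + \delta$. This is immediate from the definition of $D_\infty^\delta$: if $\Pr[\mathcal{A}(S)\in\cO] > \delta$, the bound $D_\infty^\delta(\mathcal{A}(S)\|\mathcal{A}(S')) \le \log(e^\eps)$ rearranges to exactly this inequality, and if $\Pr[\mathcal{A}(S)\in\cO] \le \delta$ the inequality is trivial; conversely, taking the supremum over events $\cO$ with $\Pr[\mathcal{A}(S)\in\cO]>\delta$ recovers the $D_\infty^\delta$ bound.

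Next I would handle a deterministic post-processing map $\mathcal{B}: \Y \to \Y''$. Fix an arbitrary event $T$ in the range $\Y''$ and let $\cO_T = \{\, y \in \Y : \mathcal{B}(y) \in T \,\}$ be its preimage. Then $\Pr[\mathcal{B}(\mathcal{A}(S)) \in T] = \Pr[\mathcal{A}(S) \in \cO_T]$, and applying the additive form to the event $\cO_T$ for adjacent $S,S'$ gives $\Pr[\mathcal{B}(\mathcal{A}(S)) \in T] \le e^\eps \Pr[\mathcal{A}(S') \in \cO_T] + \delta = e^\eps \Pr[\mathcal{B}(\mathcal{A}(S')) \in T] + \delta$. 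Since $T$ was arbitrary, $\mathcal{B}\circ\mathcal{A}$ satisfies the additive form, hence by the converse direction it is $(\eps,\delta)$-differentially private.

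For a general randomized $\mathcal{B}$, I would view it as drawing internal random coins $r$ from a distribution independent of $\mathcal{A}$'s output and then applying a deterministic map $\mathcal{B}_r: \Y \to \Y''$. For each fixed $r$ the previous step yields $\Pr[\mathcal{B}_r(\mathcal{A}(S)) \in T] \le e^\eps \Pr[\mathcal{B}_r(\mathcal{A}(S')) \in T] + \delta$, and averaging over $r$ (linearity of expectation) gives $\Pr[\mathcal{B}(\mathcal{A}(S)) \in T] = \E_r\big[\Pr[\mathcal{B}_r(\mathcal{A}(S)) \in T]\big] \le e^\eps \E_r\big[\Pr[\mathcal{B}_r(\mathcal{A}(S')) \in T]\big] + \delta = e^\eps \Pr[\mathcal{B}(\mathcal{A}(S')) \in T] + \delta$, so $\mathcal{B}\circ\mathcal{A}$ again satisfies the additive form and is therefore $(\eps,\delta)$-differentially private.

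The argument is essentially bookkeeping, and I do not expect a genuine obstacle; the only points requiring a little care are (i) translating cleanly between the $D_\infty^\delta$ formulation of Definition~\ref{def:dp} and the additive $(e^\eps,\delta)$-inequality form, being careful that the $\delta$ appears on the correct side and that the degenerate case $\Pr[\mathcal{A}(S)\in\cO]\le\delta$ is dispatched, and (ii) justifying that a randomized post-processing algorithm decomposes as a mixture of deterministic ones — trivial in the finite-domain setting adopted here, and standard (via Fubini/Tonelli) in general.
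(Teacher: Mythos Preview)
Your proof is correct and is the standard argument. Note, however, that the paper does not supply its own proof of this lemma: it simply cites \cite[Prop.~2.1]{DworkR14} and states the result. The argument you give---reduce to deterministic post-processing via preimages, then average over the internal coins of a randomized $\mathcal{B}$---is exactly the textbook proof from that reference, so there is nothing to compare beyond saying your proposal matches the cited source.
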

\fi

\iffull
\subsection{Generalization via Differential Privacy}
\label{sec:dp}
Generalization in special cases of our general adaptive analysis setting can be obtained directly from results in \cite{DworkFHPRR14:arxiv} and composition properties of differentially private algorithms. For the case of pure differentially private algorithms with general outputs over i.i.d.~datasets, in \cite{DworkFHPRR14:arxiv} we prove the following result.
\begin{theorem}
\label{thm:pure-iid-bound}
Let $\A$ be an $\eps$-differentially private algorithm with range $\Y$
and let $\bS$ be a random variable drawn from a distribution $\calP^n$
 over $\X^n.$ Let $\bY = \A(\bS)$ be the corresponding output distribution.
Assume that for each element $y \in \Y$ there is a subset
$R(y)\subseteq\X^n$ so that $\max_{y\in \Y} \pr[\bS \in R(y)]\le\beta$.
Then, for $\eps \leq \sqrt{\frac{\ln(1/\beta)}{2n}}$ we have
$\Pr[\bS \in R(\bm Y) ] \leq 3\sqrt{\beta}$.
\end{theorem}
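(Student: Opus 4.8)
The plan is to route through the notion of approximate max-information (developed later, in Section~\ref{sec:maxinfo}): I will show that conditioning on $\bY$ does not multiplicatively inflate the probability of any set of datasets by more than $\beta^{-1/2}$, except with probability $\sqrt\beta$ over $(\bS,\bY)$, and then combine this with the hypothesis that each bad set $R(y)$ has mass at most $\beta$ under $\calP^n$.

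\textbf{Step 1: reduction to a posterior-tilt bound.} First I would reduce the theorem to the following claim: there is an event $B$ over pairs $(S,y)$ with $\Pr_{(\bS,\bY)}[(\bS,\bY)\in B]\le\sqrt\beta$ such that for every $(S,y)\notin B$ we have $\Pr[\bS=S\mid\bY=y]\le\beta^{-1/2}\,\Pr[\bS=S]$. Given this, write $\Pr[\bS\in R(\bY)]=\sum_{(S,y):\,S\in R(y)}\Pr[\bY=y]\,\Pr[\bS=S\mid\bY=y]$ and split according to whether $(S,y)\in B$. The contribution from $B$ is at most $\sqrt\beta$; on its complement, bound each term by $\Pr[\bY=y]\,\beta^{-1/2}\Pr[\bS=S]$ and sum first over $S\in R(y)$ (using $\Pr[\bS\in R(y)]\le\beta$) and then over $y$, obtaining another $\beta^{-1/2}\cdot\beta=\sqrt\beta$. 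Hence $\Pr[\bS\in R(\bY)]\le 2\sqrt\beta\le 3\sqrt\beta$, the slack in the constant $3$ absorbing the slightly weaker "$-\delta$'' form of the approximate max-divergence that appears in the actual definition.

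\textbf{Step 2: from pure differential privacy on i.i.d.\ data to the posterior-tilt bound.} This is the technical heart. Write the log-tilt $L(S,y)=\ln\frac{\Pr[\A(S)=y]}{\Pr[\bY=y]}$, so $\Pr[\bS=S\mid\bY=y]=\Pr[\bS=S]\,e^{L(S,y)}$, and the claim of Step~1 becomes: $L(\bS,\bY)\le\frac12\ln(1/\beta)$ except with probability $\sqrt\beta$. Since $\Pr[\bY=y]=\E_{\bS'\sim\calP^n}[\Pr[\A(\bS')=y]]$, Jensen's inequality gives $L(S,y)\le\E_{\bS'}\big[\ln\Pr[\A(S)=y]-\ln\Pr[\A(\bS')=y]\big]$. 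I then interpolate between $\bS$ and a fresh copy $\bS'$ one coordinate at a time along a path $\bS=\bS^{(0)},\bS^{(1)},\dots,\bS^{(n)}=\bS'$ whose consecutive datasets differ in a single element; by $\eps$-differential privacy each telescoping increment $\ln\Pr[\A(\bS^{(i-1)})=y]-\ln\Pr[\A(\bS^{(i)})=y]$ lies pointwise in $[-\eps,\eps]$. Thus $L(\bS,\bY)$ is dominated by a sum of $n$ bounded terms, whose expectation is $O(n\eps^2)$ and which, by a Hoeffding/Azuma-type concentration (the same mechanism underlying advanced composition, Theorem~\ref{thm:composition-advanced}), exceeds its mean plus $t$ with probability at most $\exp(-t^2/2n\eps^2)$. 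Taking $t\asymp\eps\sqrt{2n\ln(1/\beta)}$ makes this failure probability at most $\sqrt\beta$ and gives $L\le O\big(n\eps^2+\eps\sqrt{n\ln(1/\beta)}\big)$; substituting $\eps\le\sqrt{\ln(1/\beta)/(2n)}$ drives this below $\frac12\ln(1/\beta)$, which yields the event $B$ of Step~1.

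\textbf{Main obstacle.} The delicate point is Step~2: converting the per-coordinate $\eps$-stability of pure differential privacy into an aggregate bound scaling like $\sqrt n\,\eps$ rather than the trivial $n\eps$ from group privacy (the latter being useless here, since $\eps\approx 1/\sqrt n$ makes $e^{n\eps}$ enormous). Getting the $\sqrt n$ requires coupling $\bS$ to a fresh i.i.d.\ copy so the interpolation increments behave like an independent bounded sequence, and then invoking the martingale concentration carefully; pinning down the constants so the bound lands at exactly $3\sqrt\beta$ over the stated range of $\eps$ is the remaining bookkeeping, which is why the statement keeps a generous constant and a restricted $\eps$.
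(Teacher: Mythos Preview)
Your plan is exactly the route this paper takes: Theorem~\ref{thm:pure-iid-bound} is recovered from (the proof of) Theorem~\ref{thm:concentrated-divergence} together with Theorem~\ref{thm:maxinfo}, which is precisely your Step~1/Step~2 decomposition into ``posterior-tilt bound implies the result'' and ``pure DP on i.i.d.\ data implies the posterior-tilt bound.''

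Two points where the paper's execution of Step~2 is tighter than your sketch. First, the paper works directly with $g_y(S):=L(S,y)$ for a \emph{fixed} $y$: Jensen gives $\E_{\bS}[g_y(\bS)]\le 0$, and $\eps$-differential privacy gives $|g_y(S)-g_y(S')|\le\eps$ for adjacent $S,S'$, so McDiarmid's inequality (Lemma~\ref{lem:mcdiarmid}) yields $\Pr_{\bS}[g_y(\bS)\ge t]\le e^{-2t^2/(n\eps^2)}$ in one line. Your interpolation against a fresh $\bS'$ is a way of re-deriving McDiarmid, but the intermediate claim that the ``sum of $n$ bounded terms has expectation $O(n\eps^2)$'' is neither needed nor obvious from that angle; the relevant mean is actually $\le 0$.

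Second --- and this is the substantive step you elide --- the concentration above is over the \emph{prior} distribution of $\bS$, whereas your Step~1 needs the bad event to be small under the \emph{joint} law of $(\bS,\bY)$. Since $\Pr[\bS=S\mid\bY=y]=\Pr[\bS=S]\,e^{g_y(S)}$, conditioning on $\bY=y$ tilts the law of $\bS$ by exactly the quantity you are trying to control, so one cannot simply carry the McDiarmid bound across. The paper closes this gap by a bucketing argument: partition $\{g_y>t_1\}$ into shells $B_i=\{t_i<g_y\le t_{i+1}\}$, on each of which the tilt is at most $e^{t_{i+1}}$, so that $\Pr[\bS\in B_i\mid\bY=y]\le e^{t_{i+1}}\Pr_{\bS}[g_y>t_i]$; a judicious choice of the $t_i$ makes these terms sum geometrically. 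This change-of-measure step is where the assumed bound on $\eps$ is actually used, and without it the argument as you wrote it does not close.
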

An immediate corollary of Thm.~\ref{thm:pure-iid-bound} together with Lemma \ref{lem:mcdiarmid} is that differentially private algorithms that output low-sensitivity functions generalize.
\begin{corollary}
\label{cor:pure-strong bound}
Let $\A$ be an algorithm that outputs a $c$-sensitive function $f:\X^n \rar \R$. Let $\bS$ be a random dataset chosen according to distribution $\calP^n$ over $\X^n$ and let $\bm{f} = \A(\bS)$. If $\A$ is $\tau/(cn)$-differentially private then $\pr[\bm{f}(\bS) - \calP^n[\bm{f}] \geq \tau]\leq 3\exp{(-\tau^2/(c^2 n))}$.
\end{corollary}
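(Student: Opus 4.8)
The plan is to instantiate Theorem~\ref{thm:pure-iid-bound} with the ``bad event'' sets $R(y)$ chosen to capture exactly the overfitting event, feeding it the tail bound supplied by McDiarmid's inequality (Lemma~\ref{lem:mcdiarmid}).

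First I would fix the bad sets. Each element $y$ in the range $\Y$ of $\A$ is, by hypothesis, a $c$-sensitive function $f_y:\X^n\to\R$. Define
$$R(y) \doteq \{S \in \X^n : f_y(S) - \calP^n[f_y] \geq \tau\},$$
where $\calP^n[f_y] = \E_{S\sim\calP^n}[f_y(S)]$. Since $f_y$ has sensitivity $c$ and $\bS\sim\calP^n$ is a product distribution, Lemma~\ref{lem:mcdiarmid} applied to $f_y$ gives
$$\pr_{\bS\sim\calP^n}[\bS \in R(y)] = \pr\left[f_y(\bS) - \E[f_y(\bS)] \geq \tau\right] \leq \exp\!\left(\frac{-2\tau^2}{nc^2}\right),$$
and this holds for \emph{every} $y\in\Y$, so $\max_{y\in\Y}\pr[\bS\in R(y)]\leq\beta$ with $\beta \doteq \exp(-2\tau^2/(nc^2))$.

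Next I would check the differential-privacy parameter against the hypothesis of Theorem~\ref{thm:pure-iid-bound}. With the above $\beta$ we have $\ln(1/\beta) = 2\tau^2/(nc^2)$, hence $\sqrt{\ln(1/\beta)/(2n)} = \tau/(cn)$; thus the assumption ``$\A$ is $\tau/(cn)$-differentially private'' is exactly the condition $\eps \leq \sqrt{\ln(1/\beta)/(2n)}$ (with equality). Theorem~\ref{thm:pure-iid-bound} then yields $\pr[\bS\in R(\bm Y)]\leq 3\sqrt{\beta}$ for $\bm Y = \A(\bS)$. Unwinding the definition of $R$, the event $\bS\in R(\bm Y)$ is precisely $\bm f(\bS) - \calP^n[\bm f]\geq\tau$ with $\bm f = \A(\bS)$, and $3\sqrt{\beta} = 3\exp(-\tau^2/(nc^2))$, which is the claimed bound.

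There is no deep obstacle here; the two points to be careful about are (i) that $R(y)$ must be defined for \emph{every} output $y$, which is why we need \emph{all} outputs of $\A$ to be $c$-sensitive rather than only the likely ones, and (ii) matching constants so that the privacy parameter $\tau/(cn)$ lines up exactly with the $\sqrt{\ln(1/\beta)/(2n)}$ threshold --- this is the reason the exponent in the conclusion is $\tau^2/(c^2 n)$ rather than $2\tau^2/(c^2 n)$.
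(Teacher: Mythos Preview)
Your proposal is correct and is exactly the approach the paper intends: the paper simply asserts that the corollary is ``an immediate corollary of Thm.~\ref{thm:pure-iid-bound} together with Lemma~\ref{lem:mcdiarmid},'' and what you have written is precisely the unpacking of that sentence, including the arithmetic that matches the $\eps=\tau/(cn)$ privacy parameter to the threshold $\sqrt{\ln(1/\beta)/(2n)}$ and the halving of the exponent under the square root.
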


By Theorem \ref{thm:easy-composition}, pure differential privacy composes adaptively. Therefore, if in a sequence of algorithms  $\A_1, \A_2, \ldots, \A_m$ algorithm $\A_i$ is $\eps_i$-differentially private for all $i\leq m-1$ then composition of the first $i-1$ algorithms is $\eps'_{i-1}$-differentially private for $\eps'_{i-1} = \left(\sum_{j=1}^{i-1} \eps_j\right)$.
Theorem \ref{thm:pure-iid-bound} can be applied to preserve the generalization guarantees of the last algorithm $\A_m$ (that does not need to be differentially private). For example, assume that for every fixed setting of $\bar{y}_{m-1}$, $\A_m$ has the property that it outputs a hypothesis function $h$ such that, $\pr[\cE_S[L(h)]- \calP[L(h)]  \geq \tau] \leq e^{- n \tau^2/d}$, for some notion of dimension $d$ and a real-valued loss function $L$. Generalization bounds of this type follow from uniform convergence arguments based on various notions of complexity of hypotheses classes such as VC dimension, covering numbers, fat-shattering dimension and Rademacher complexity (see \cite{Shalev-ShwartzBen-David:2014} for examples). Note that, for different settings of $\bar{y}_{m-1}$, different sets of hypotheses and generalization techniques might be used. We define $R(\bar{y}_{m-1})$ be all datasets $S$ for which $\A_m(S,\bar{y}_{m-1})$ outputs $h$ such that $\cE_S[L(h)]- \calP[L(h)] \geq \tau$.
Now if $\eps'_{m-1} \leq \sqrt{\tau^2/(2d)}$, then even for the hypothesis output in the adaptive execution of $\A_m$ on a random i.i.d.~dataset $\bS$ (denoted by $\bm{h}$) we have $\pr\left[\cE_{\bS}[L(\bm{h})] - \calP[L(\bm{h})] \geq \tau\right]\le 3e^{-\tau^2 n/(2d)}.$

For approximate $(\eps,\delta)$-differential privacy, strong preservation of generalization results are currently known only %\footnote{It is conjectured by the authors of \cite{BassilySSU15,NissimS15} that the statement of Theorem \ref{thm:epsdelta for counts} can be extended to low-sensitivity functions of the dataset but a formal statement is not currently available.}
for algorithms that output a function over $\X$ of bounded range (for simplicity we use range $[0,1]$) \cite{DworkFHPRR14:arxiv,NissimS15}.
The following result was proved by Nissim and Stemmer \cite{NissimS15} (a weaker statement is also given in \cite[Thm.~10]{DworkFHPRR14:arxiv}).
\begin{theorem}
\label{thm:epsdelta for counts}
Let $\A$ be an $(\eps,\delta)$-differentially private algorithm that outputs a
function from $\X$ to $[0,1]$. For a random variable $\bm S$ distributed according to  $\calP^n$
we let $\bphi = \A(\bS).$ Then for $n \ge 2 \ln(8/\delta)/\eps^2$,
$$\Pr\left[|\calP[\bphi] - \cE_{\bS}[\bphi]| \geq 13\eps\right]\le \frac{2\delta}{\eps}\ln\left(\frac{2}{\eps}\right).$$
\end{theorem}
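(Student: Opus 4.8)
The plan is to reduce the statement to a bound that holds \emph{in expectation} and then amplify it to the claimed high-probability guarantee, in the spirit of the transfer arguments of \cite{DworkFHPRR14:arxiv,NissimS15}. \textbf{Step 1: a bound in expectation.} I would first show $\bigl|\E_{\bS,\A}\bigl[\cE_\bS[\bphi]-\calP[\bphi]\bigr]\bigr|\le (e^\eps-1)+\delta$. Write $\bS=(x_1,\dots,x_n)$, and for each $i$ let $\bS^{(i)}$ be $\bS$ with $x_i$ replaced by a fresh independent draw $x_i'\sim\calP$, and $\bphi^{(i)}=\A(\bS^{(i)})$. Since $\bS$ and $\bS^{(i)}$ are adjacent and $\bphi$ is $[0,1]$-valued, $(\eps,\delta)$-privacy (integrated over the level sets of the bounded post-processing $\phi\mapsto\phi(x_i)$) gives, conditionally on $x_i$ and the remaining coordinates, $\E[\bphi(x_i)]\le e^\eps\,\E[\bphi^{(i)}(x_i)]+\delta$, and hence after taking full expectations. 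Because $x_i$ is independent of the input to $\bphi^{(i)}$, $\E[\bphi^{(i)}(x_i)]=\E\bigl[\calP[\bphi^{(i)}]\bigr]=\E\bigl[\calP[\bphi]\bigr]$ (using $\bphi^{(i)}\eqd\bphi$). Averaging over $i\in[n]$ yields $\E[\cE_\bS[\bphi]]\le e^\eps\,\E[\calP[\bphi]]+\delta$; the symmetric direction of differential privacy gives the reverse inequality, and since both quantities lie in $[0,1]$ the expectation bound follows.

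\textbf{Step 2: amplification to a tail bound.} To convert this into a high-probability statement I would use a ``monitor''-style amplification as in \cite{DworkFHPRR14:arxiv}. Fix a threshold $t$ and let $p_t=\Pr\bigl[\cE_\bS[\bphi]-\calP[\bphi]>t\bigr]$ (the lower tail is symmetric). Run $\A$ on $k\approx 1/p_t$ independent datasets $\bS^{(1)},\dots,\bS^{(k)}\sim\calP^n$, obtaining $\bphi_1,\dots,\bphi_k$, and consider the procedure that outputs the $\bphi_j$ maximizing $\cE_{\bS^{(j)}}[\bphi_j]-\calP[\bphi_j]$. Since each data element belongs to exactly one block, this selector is $(\eps,\delta)$-differentially private up to the $O(1/n)$ influence of a single element on the empirical mean of the chosen block, so Step~1 bounds its expected overfitting by $(e^\eps-1)+\delta+O(1/n)$. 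On the other hand, for $k\approx 1/p_t$ some block overfits by more than $t$ with constant probability, and since all these deviations lie in $[-1,1]$ the expected maximum is $\Omega(t)$. Comparing the two estimates forces $t=O(\eps)$ unless $p_t$ is already tiny; choosing $t=13\eps$ and bookkeeping the constants yields the bound. Replacing $\calP[\bphi]$ by an empirical average over a fresh sample of size $n$ introduces a Hoeffding error of order $e^{-2n\eps^2}$, which is negligible compared to $\delta$ precisely under the hypothesis $n\ge 2\ln(8/\delta)/\eps^2$.

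\textbf{Where the work is.} Step~1 carries essentially all of the conceptual weight and is short. The technical heart --- and what distinguishes the sharp bound $\tfrac{2\delta}{\eps}\ln(\tfrac2\eps)$ from a cruder $\poly(\delta)$ estimate --- lies in Step~2: one must run the amplification at a dyadic family of thresholds $t\in\{\eps,2\eps,4\eps,\dots,1\}$ (the $O(\ln(1/\eps))$ levels being the source of the logarithmic factor), sum the per-level failure contributions, each of order $\delta/\eps$, and track carefully how the additive $\delta$ term (as opposed to the multiplicative $e^\eps$ term) propagates through the selection step without blowing up. I would follow the Nissim--Stemmer bookkeeping to pin down the constant $13$ and the exact dependence on $\delta$ and $\eps$; this careful accounting of $\delta$ under amplification is the main obstacle.
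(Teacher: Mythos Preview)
The paper does not supply a proof of this theorem; it is quoted from Nissim and Stemmer \cite{NissimS15} (with a pointer to a weaker precursor in \cite[Thm.~10]{DworkFHPRR14:arxiv}) and used as a black box in Section~\ref{sec:dp} and in the analysis of \Tho. There is therefore nothing in the paper itself to compare your argument against.

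On the substance of your outline: it does follow the Nissim--Stemmer strategy (an expectation-stability bound, then a monitor that amplifies by running $k$ independent copies), and your Step~1 is correct as written. The soft spot is the sentence in Step~2 asserting that ``this selector is $(\eps,\delta)$-differentially private up to the $O(1/n)$ influence of a single element on the empirical mean of the chosen block.'' That claim is not right and is precisely where the real work hides: the $\argmax$ of a $1/n$-sensitive score is not itself stable --- perturbing one sample can flip $j^*$ and hence change $\bphi_{j^*}$ completely --- so you cannot feed the selector back into Step~1 as if it were an $(\eps,\delta)$-private mechanism. The Nissim--Stemmer proof gets around this not by arguing privacy of the monitor but by carrying the resampling/swap argument through the monitor directly and tracking how the additive $\delta$ accumulates across the $k$ blocks; that is exactly the ``careful accounting of $\delta$ under amplification'' you correctly identify as the main obstacle in your last paragraph. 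So your plan is on the right track, but the specific justification you offer for why the selector inherits the expectation bound needs to be replaced by that direct analysis rather than by an appeal to its (nonexistent) differential privacy.
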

Many learning algorithms output a hypothesis function that aims to minimize some bounded loss function $L$ as the final output. If algorithms used in all steps of the adaptive data analysis are differentially private and the last step (that is, $\A_m$) outputs a hypothesis $h$, then generalization bounds for the loss of $h$ are implied directly by Theorem \ref{thm:epsdelta for counts}. We remark that this application is different from the example for pure differential privacy above since there we showed preservation of generalization guarantees of arbitrarily complex learning algorithm $\A_m$ which need not be differentially private. In Section \ref{sec:holdout} we give an application of  Theorem \ref{thm:epsdelta for counts} to the reusable holdout problem.

\remove{
 More generally, a request for an estimate of the expectation of a bounded function on $\X$ is referred to as a statistical query in
the context of the well-studied statistical query model~\cite{Kearns93}, and it is known that using statistical queries in place of direct access to data it is possible to implement most standard analyses used on i.i.d.\,data (see \cite{Kearns93,ChuKLYBNO:06} for examples).
Thus, preservation of generalization of such functions allows correctly performing analyses that rely on statistical queries.
} 
\subsection{Generalization via Description Length}
\label{sec:descr}
Let $\A:\X^n \rightarrow \Y$  and $\B: \X^n \times \Y  \rightarrow \Y'$ be two algorithms. We now give a simple application of bounds on the size of $\Y$ (or, equivalently, the description length of $\A$'s output) to preserving generalization of $\B$. Here generalization can actually refer to any valid or desirable output of $\B$ for a given given dataset $S$ and input $y \in \Y$. Specifically we will use a set $R(y) \subseteq \X^n$ to denote all datasets for which the output of $\B$ on $y$ and $S$ is ``bad" (\eg~overfits). Using a simple union bound we show that the probability (over a random choice of a dataset) of such bad outcome can be bounded.

\begin{theorem}
\label{thm:descr-basic}
Let $\A:\X^n \rightarrow \Y$ be an algorithm and let $\bS$ be a random dataset over $\X^n$. Assume that $R:\Y \rightarrow 2^{\X^n}$ is such that for every $y \in \Y$, $\pr[\bS \in R(y)] \leq \beta$. Then $\pr[\bS \in R(\A(\bS))] \leq |\Y| \cdot \beta$.
\end{theorem}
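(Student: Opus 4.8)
The plan is to prove the statement by a straightforward union bound over the range $\Y$, conditioning on the value of $\A(\bS)$. Since $\Y$ is finite (we are working with finite domains per the preliminaries), write $\pr[\bS \in R(\A(\bS))] = \sum_{y \in \Y} \pr[\bS \in R(\A(\bS)) \text{ and } \A(\bS) = y]$. The key observation is that on the event $\{\A(\bS) = y\}$, the set $R(\A(\bS))$ coincides with the fixed set $R(y)$, so the $y$-th summand equals $\pr[\bS \in R(y) \text{ and } \A(\bS) = y] \le \pr[\bS \in R(y)] \le \beta$ by hypothesis. Summing over the $|\Y|$ values of $y$ gives the bound $|\Y| \cdot \beta$.

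One subtlety to handle carefully is that $\A$ may be randomized, so $\A(\bS)$ depends both on $\bS$ and on $\A$'s internal coins. This does not cause any real difficulty: I would either (i) condition on $\A$'s random string $r$, apply the deterministic argument above to each fixed $\A_r$ to get $\pr[\bS \in R(\A_r(\bS))] \le |\Y|\cdot\beta$, and then average over $r$ (noting the bound is uniform in $r$ and that $\bS$ is independent of $r$); or (ii) simply carry the randomness of $\A$ along in all the probabilities above, since the inequality $\pr[\bS \in R(y) \text{ and } \A(\bS) = y] \le \pr[\bS \in R(y)]$ holds regardless of what other randomness is present. Approach (i) is cleaner to write.

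I do not anticipate a genuine obstacle here — the argument is essentially the ``Occam's razor'' union bound, and the only thing to be careful about is making the conditioning precise so that $R(y)$ is treated as a fixed (deterministic) subset of $\X^n$ once $y$ is fixed, independent of the draw of $\bS$. I would state this explicitly as the crux: the bad-event set $R(y)$ is determined by $y$ alone, so its probability under $\bS$ is controlled for each $y$ separately, and adaptivity of downstream algorithms is irrelevant because we only need the marginal over $\bS$. The write-up should be three or four lines of display math plus a sentence of justification.
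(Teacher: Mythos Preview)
Your proposal is correct and matches the paper's approach exactly: the paper's proof is the one-line union bound $\pr[\bS \in R(\A(\bS))] \leq \sum_{y \in \Y} \pr[\bS \in R(y)] \leq |\Y|\cdot\beta$. Your additional care about randomization of $\A$ is fine but the paper does not even spell it out.
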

\begin{proof}
 $$\pr[\bS \in R(\A(\bS))] \leq \sum_{y \in \Y} \pr[\bS \in R(y)]  \leq |\Y| \cdot \beta.$$
\end{proof}

The case of two algorithms implies the general case since description length composes (adaptively).
Namely, let $\A_1, \A_2, \ldots, \A_m$ be a sequence of algorithms such that each algorithm $\A_i$ outputs a value in some range $\Y_i$ and takes as  an input dataset in $\X^n$ as well as a value in $\bar{\Y}_{i-1}$. Then for every $i$, we can view the execution of $\A_1$ through $\A_{i-1}$ as the first algorithm $\bar{\A}_{i-1}$ with an output in $\bar{\Y}_{i-1}$ and $\A_i$ as the second algorithm.  Theorem \ref{thm:descr-basic} implies that if for every setting of $\bar{y}_{i-1}=y_1,\ldots,y_{i-1} \in \bar{\Y}_{i-1}$,  $R(\bar{y}_{i-1}) \subseteq \X^n$ satisfies that $\pr[\bS \in R(\bar{y}_{i-1})] \leq \beta_i$ then $$\pr[\bS \in R(\bar{\A}_{i-1}(\bS))] \leq |\bar{\Y}_{i-1}| \cdot  \beta_i = \prod_{j=1}^{i-1}|\Y_j| \cdot  \beta_i .$$

In Section \ref{sec:max-info2dl} we describe a generalization of description length bounds to randomized algorithms and show that it possesses the same properties. 
\fi
\section{Max-Information}
\label{sec:maxinfo}
Consider two algorithms $\A:\X^n \rightarrow \Y$  and $\B: \X^n \times \Y  \rightarrow \Y'$ that are composed adaptively and assume that for every fixed input $y \in \Y$, $\B$ generalizes for all but fraction $\beta$ of datasets. Here we are speaking of generalization informally: our definitions will support any property of input $y \in \Y$ and dataset $S$.
Intuitively, to preserve generalization of $\B$ we want to make sure that the output of $\A$ does not reveal too much information about the dataset $S$. We demonstrate that this intuition can be captured via a notion of {\em max-information} and its relaxation {\em approximate max-information}.

%\subsection{Approximate Max-information}
For two random variables $\bX$ and $\bY$ we use $\bX \times \bY$ to denote the random variable obtained by drawing $\bX$ and $\bY$ independently from their probability distributions.
\begin{definition}
\label{def:maxinfo}
Let $\bX$ and $\bY$ be jointly distributed random variables. The
max-information between $\bX$ and $\bY$\iffull, denoted $I_{\infty}(\bX;\bY)$,
 is the minimal
value of $k$ such that for every $x$ in the support of $\bX$ and $y$ in the
support of $\bY$ we have
$\Pr[\bX=x\mid\bY=y] \le 2^k \Pr[\bX=x].$
Alternatively, \else is defined as \fi $I_{\infty}(\bX;\bY) = D_\infty( (\bX,\bY) \| \bX \times \bY )$.
The $\beta$-approximate max-information is defined as $I_{\infty}^\beta(\bX;\bY) = D_\infty^\beta( (\bX,\bY) \| \bX \times \bY )$.
\end{definition}
\iffull
It follows immediately from Bayes' rule that for all $\beta \geq 0$, $I_{\infty}^\beta(\bm{X};\bm Y) = I_{\infty}^\beta(\bm{Y};\bm X)$. Further, $I_{\infty}(\bX;\bY) \leq k$ if and only if for all $x$ in the support of $\bX$, $D_\infty( \bY \cond \bX = x\ \|\ \bY) \leq k$. Clearly, max-information upper bounds the classical notion of mutual information: $I_\infty(\bX;\bY) \geq I(\bX;\bY)$.

\fi
In our use $(\bm{X},\bm{Y})$ is going to be a joint distribution~$(\bS,\A(\bS))$, where $\bS$ is a random $n$-element dataset and $\A$ is a (possibly randomized) algorithm taking a dataset as an input. \iffull If the output of an algorithm on any distribution $\bS$ has  low approximate max-information then we say that the algorithm has low max-information. More formally:\fi
\begin{definition}
We say that an algorithm $\A$ has $\beta$-approximate max-information of $k$ if for every distribution $\cS$ over $n$-element datasets, $I_{\infty}^\beta(\bS;\A(\bS))\leq k$, where $\bS$ is a dataset chosen randomly according to $\cS$. We denote this by $I_{\infty}^\beta(\A,n)\leq k$.
\end{definition}
An alternative way to define the (pure) max-information of an algorithm is using the maximum of the infinity divergence between distributions on two different inputs.
\begin{lemma}
Let $\A$ be an algorithm with domain $\X^n$ and range $\Y$. Then $I_{\infty}(\A,n) = \max_{S,S' \in \X^n} D_\infty(\A(S)\|\A(S'))$.
\end{lemma}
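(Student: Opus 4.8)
The plan is to prove the two inequalities separately. Throughout, write $\bY=\A(\bS)$ for a random dataset $\bS$, and note that unwinding the definition of max-information gives the convenient explicit form
\[ I_{\infty}(\bS;\bY) \;=\; \log \max_{S,\,y} \frac{\pr[\A(S)=y]}{\pr[\bY=y]}, \]
where the maximum ranges over $S$ in the support of $\bS$ and $y$ in the support of $\A(S)$: indeed the joint probability factors as $\pr[(\bS,\bY)=(S,y)]=\pr[\bS=S]\,\pr[\A(S)=y]$, so the $\pr[\bS=S]$ factor cancels against the matching factor of $\bS\times\bY$. Set $k^* \defeq \max_{S,S'\in\X^n} D_\infty(\A(S)\|\A(S'))$; since we work with finite domains this maximum is attained.

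For the upper bound $I_{\infty}(\A,n)\le k^*$, the argument is to fix an arbitrary distribution $\cS$ over $\X^n$ with $\bS\sim\cS$ and show $I_{\infty}(\bS;\bY)\le k^*$. For every $S$ in the support of $\bS$ and every $y$, the marginal $\pr[\bY=y]=\sum_{S'}\pr[\bS=S']\,\pr[\A(S')=y]$ is a convex combination of the numbers $\pr[\A(S')=y]$, and by definition of $k^*$ each of these is at least $2^{-k^*}\pr[\A(S)=y]$; hence $\pr[\bY=y]\ge 2^{-k^*}\pr[\A(S)=y]$, so the ratio $\pr[\A(S)=y]/\pr[\bY=y]$ is at most $2^{k^*}$. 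Substituting into the displayed formula gives $I_{\infty}(\bS;\bY)\le k^*$, and since $\cS$ was arbitrary, $I_{\infty}(\A,n)\le k^*$.

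For the matching lower bound, the plan is to pick $S_0,S_1$ attaining $D_\infty(\A(S_0)\|\A(S_1))=k^*$ and, for a parameter $p\in(0,1)$, use the two-point distribution $\cS_p$ placing mass $p$ on $S_0$ and $1-p$ on $S_1$. If $k^*=0$ there is nothing to prove, since max-information is always nonnegative. If $k^*=\infty$, this is witnessed by some $y^*$ with $\pr[\A(S_1)=y^*]=0<\pr[\A(S_0)=y^*]$; then under $\cS_p$ one has $\pr[\bY=y^*]=p\,\pr[\A(S_0)=y^*]$, so the ratio at $(S_0,y^*)$ equals $1/p\to\infty$, forcing $I_{\infty}(\A,n)=\infty$. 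Otherwise $0<k^*<\infty$: choose $y^*$ with $\pr[\A(S_0)=y^*]=2^{k^*}\pr[\A(S_1)=y^*]$ (so $\pr[\A(S_1)=y^*]>0$). Then $\pr[\bY=y^*]=(p\cdot 2^{k^*}+1-p)\,\pr[\A(S_1)=y^*]$, hence the ratio $\pr[\A(S_0)=y^*]/\pr[\bY=y^*]=2^{k^*}/(p\cdot 2^{k^*}+1-p)$ lower-bounds $2^{I_{\infty}(\bS;\bY)}$ and tends to $2^{k^*}$ as $p\to 0^+$. Therefore $I_{\infty}(\A,n)=\sup_{\cS} I_{\infty}(\bS;\A(\bS))\ge k^*$, which together with the upper bound gives the claimed equality.

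The one step I expect to need care is this lower bound: no single distribution $\cS$ attains $k^*$ — placing all of $\bS$'s mass on $S_0$ makes $\bY=\A(S_0)$ deterministically and collapses the information to zero — so one must let the probability $p$ of the \emph{atypical} dataset $S_0$ tend to $0$ and exploit that $I_{\infty}(\A,n)$ is a supremum over $\cS$ to pick up the limiting value $k^*$. The remaining steps are routine manipulation of the definitions of $D_\infty$ and $I_{\infty}$.
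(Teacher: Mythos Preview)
Your proof is correct and follows essentially the same strategy as the paper's: the upper bound is obtained by writing $\pr[\bY=y]$ as a convex combination of the $\pr[\A(S')=y]$ and using that each term is within a $2^{k^*}$ factor of any other (the paper phrases this as ``there exists $S_y$ with $\pr[\bY=y\mid\bS=S_y]\le\pr[\bY=y]$'', which is the same observation), and the lower bound is obtained by placing a two-point distribution on a witnessing pair and letting one weight tend to zero. Your explicit case split on $k^*\in\{0,\infty\}$ is a nice touch; the paper instead fixes arbitrary $S,S',y$ and shows $\pr[\A(S)=y]\le 2^{I_\infty(\A,n)}\pr[\A(S')=y]$ directly, which sidesteps the need to select a maximizer and handle those cases separately.
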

\begin{proof}
\label{lem:max-info-pairwise}
For the first direction let $k=\max_{S,S' \in \X^n} D_\infty(\A(S)\|\A(S'))$. Let $\bS$ be any random variable over $n$-element input datasets for $\A$ and let $\bY$ be the corresponding output distribution
$\bY = \A(\bS)$. We will argue that $I_{\infty}(\bm Y;\bm S)\leq k$, that
$I_{\infty}(\bm S;\bm Y)\leq k$ follows immediately from the Bayes' rule.
For every $y \in \Y$, there must exist a dataset $S_y$ such that $\Pr[\bY=y\mid\bS=S_y]\leq \Pr[\bY=y]$. Now, by our assumption,
for every $S$,  $\Pr[\bY=y\mid\bS=S]\leq 2^k \cdot \Pr[\bY=y\mid\bS=S_y]$. We can conclude that for every $S$ and every $y$, it holds that $\Pr[\bY=y\mid\bS=S]\leq 2^k \Pr[\bY =y]$. This yields $I_{\infty}(\bY;\bS)\leq k$.

For the other direction let $k= I_{\infty}(\A,n)$, let $S,S' \in \X^n$ and $y \in \Y$. For $\alpha \in (0,1)$, let $\bS$ be the random variable equal to $S$ with probability $\alpha$ and to $S'$ with probability $1-\alpha$ and let $\bY = \A(\bS)$. By our assumption,  $I_{\infty}(\bY;\bS) = I_{\infty}(\bS;\bY) \leq k$. This gives
$$\Pr[\bY=y\mid\bS=S]\leq 2^k \Pr[\bY =y] \leq 2^k\left(\alpha \Pr[\bY=y\mid\bS=S] + (1-\alpha)\Pr[\bY=y\mid\bS=S'] \right)$$ and implies
$$\Pr[\bY=y\mid\bS=S]\leq \frac{2^k (1-\alpha)}{1-2^k\alpha} \cdot \Pr[\bY=y\mid\bS=S'].$$
This holds for every $\alpha > 0$ and therefore $$\Pr[\bY=y\mid\bS=S]\leq 2^k  \cdot \Pr[\bY=y\mid\bS=S'].$$
Using this inequality for every $y \in \Y$ we obtain $D_\infty(\A(S)\|\A(S')) \leq k$.
\end{proof}

\iffull \paragraph{Generalization via max-information:}\fi
An immediate corollary of our definition of approximate max-information is that it controls the probability of ``bad events" that can happen as a result of the dependence of $\A(S)$ on $S$.
\begin{theorem}
\label{thm:maxinfo}
Let $\bS$ be a random dataset in $\X^n$ and $\A$ be an algorithm with range $\Y$ such that for some $\beta \geq 0$,
$I_{\infty}^\beta(\bS;\A(\bS)) = k$. Then for any event $\cO \subseteq \X^n \times \Y$,
$$\Pr[(\bS,\A(\bS)) \in \cO] \leq 2^{k} \cdot \Pr[\bS \times \A(\bS) \in \cO] + \beta .$$
In particular,
$\Pr[(\bS,\A(\bS)) \in \cO] \leq 2^{k} \cdot \max_{y \in \Y}\Pr[ (\bS,y) \in \cO] + \beta .$
\end{theorem}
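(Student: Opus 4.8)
The plan is to unwind the definition of $\beta$-approximate max-divergence directly; there is essentially no hidden difficulty. Recall that $I_{\infty}^\beta(\bS;\A(\bS)) = D_\infty^\beta\big((\bS,\A(\bS)) \,\big\|\, \bS\times\A(\bS)\big)$, so the hypothesis $I_{\infty}^\beta(\bS;\A(\bS)) = k$ says precisely that for every event $\cO\subseteq\X^n\times\Y$ with $\Pr[(\bS,\A(\bS))\in\cO] > \beta$ we have
$$\frac{\Pr[(\bS,\A(\bS))\in\cO] - \beta}{\Pr[\bS\times\A(\bS)\in\cO]} \le 2^k.$$
Rearranging this inequality immediately gives $\Pr[(\bS,\A(\bS))\in\cO] \le 2^k\Pr[\bS\times\A(\bS)\in\cO] + \beta$. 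In the remaining case, $\Pr[(\bS,\A(\bS))\in\cO]\le\beta$, the claimed bound holds trivially because its right-hand side is at least $\beta$. Together these dispose of the first inequality.

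For the ``in particular'' statement I would pass to the $y$-slices of $\cO$: for each $y\in\Y$ set $\cO_y = \{S\in\X^n : (S,y)\in\cO\}$, so that $\Pr[\bS\in\cO_y] = \Pr[(\bS,y)\in\cO]$. Since in the product distribution $\bS\times\A(\bS)$ the two coordinates are independent,
\begin{align*}
\Pr[\bS\times\A(\bS)\in\cO] &= \sum_{y\in\Y}\Pr[\A(\bS)=y]\cdot\Pr[\bS\in\cO_y]\\
&\le \Big(\max_{y\in\Y}\Pr[\bS\in\cO_y]\Big)\sum_{y\in\Y}\Pr[\A(\bS)=y] = \max_{y\in\Y}\Pr[(\bS,y)\in\cO].
\end{align*}
Substituting this into the first inequality yields the second.

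The only point requiring a modicum of care is the degenerate regime $\Pr[(\bS,\A(\bS))\in\cO]\le\beta$, which the definition of $D_\infty^\delta$ deliberately excludes from its maximum and which therefore must be checked separately — but that check is immediate. Everything else is bookkeeping with the definition plus a one-line averaging argument, so I do not expect a genuine obstacle. (One could alternatively first record the symmetry $I_\infty^\beta(\bS;\A(\bS)) = I_\infty^\beta(\A(\bS);\bS)$ coming from Bayes' rule, but it is not needed here.)
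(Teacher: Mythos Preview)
Your proposal is correct and matches the paper's treatment: the paper states this theorem as ``an immediate corollary of our definition of approximate max-information'' and gives no further proof, and your argument is exactly the direct unwinding of Definition~\ref{def:maxinfo} (plus the trivial averaging over $y$-slices for the ``in particular'') that makes this immediacy explicit. There is nothing to add.
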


We remark that mutual information between $\bS$ and $\A(\bS)$ would not suffice for ensuring that bad events happen with tiny probability. For example mutual information of $k$ allows $\Pr[(\bS,\A(\bS)) \in \cO]$ to be as high as $k/(2\log(1/\delta))$, where $\delta = \Pr[\bS \times \A(\bS) \in \cO]$.

\iffull
\paragraph{Composition of max-information:}
\else
\fi
Approximate max-information satisfies the following adaptive composition property:
\begin{lemma}
Let $\cA:\cX^n\rightarrow \cY$ be an algorithm such that $I_{\infty}^{\beta_1}(\A,n)\leq k_1$, and let $\cB:\cX^n\times \cY \rightarrow \cZ$ be an algorithm such that for every $y \in \cY$, $\cB(\cdot, y)$ has $\beta_2$-approximate max-information $k_2$. Let $\cC:\cX^n\rightarrow \cZ$ be defined such that $\cC(S) = \cB(\bS, \cA(S))$. Then $I_{\infty}^{\beta_1+\beta_2}(\cC,n)\leq k_1 + k_2$.
\end{lemma}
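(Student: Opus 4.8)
The plan is to reduce the statement to a union-bound-style argument over the "bad" event that witnesses a large value of $D_\infty^{\beta_1+\beta_2}$, by carefully splitting the slack $\beta_1+\beta_2$ between the contribution of $\cA$'s max-information and the conditional contribution of $\cB$'s max-information. Fix an arbitrary distribution $\cS$ over $n$-element datasets, let $\bS\sim\cS$, let $\bY=\cA(\bS)$, and let $\bZ=\cB(\bS,\bY)=\cC(\bS)$. We must bound $D_\infty^{\beta_1+\beta_2}\bigl((\bS,\bZ)\,\|\,\bS\times\bZ\bigr)$ by $k_1+k_2$, i.e.\ show that for every event $\cO\subseteq\X^n\times\cZ$ with $\Pr[(\bS,\bZ)\in\cO]>\beta_1+\beta_2$ we have $\Pr[(\bS,\bZ)\in\cO]-(\beta_1+\beta_2)\le 2^{k_1+k_2}\Pr[\bS\times\bZ\in\cO]$.

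First I would handle the inner algorithm $\cB$. For each fixed $y\in\cY$, the hypothesis $I_\infty^{\beta_2}(\cB(\cdot,y),n)\le k_2$ means $D_\infty^{\beta_2}\bigl((\bS,\cB(\bS,y))\,\|\,\bS\times\cB(\bS,y)\bigr)\le k_2$ when $\bS\sim\cS$; note this applies with the \emph{same} source distribution $\cS$ since the definition quantifies over all dataset distributions. For a fixed $y$, consider the $y$-slice $\cO_y=\{(S,z):(S,z)\in\cO\}$ viewed as an event in $\X^n\times\cZ$. Applying the definition of $\beta_2$-approximate max-divergence to $\cO_y$ gives
$$\Pr[(\bS,\cB(\bS,y))\in\cO_y]\le 2^{k_2}\Pr[\bS\times\cB(\bS,y)\in\cO_y]+\beta_2.$$
The key step is then to integrate this over $\bY=y$ and relate the left-hand side to $\Pr[(\bS,\bZ)\in\cO]$ — but here one must be careful, because $\bZ=\cB(\bS,\bY)$ depends on $\bS$ through $\bY$ as well. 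This is exactly where the max-information bound on $\cA$ enters: I would view the map $S\mapsto(S,\cA(S))$ and note that $(\bS,\bZ)$ is a post-processing-like image of $(\bS,\bY)$, and that $(\bS,\bY)$ is $2^{k_1}$-close (up to slack $\beta_1$) to $\bS\times\bY$. Concretely, define the event $\widehat{\cO}\subseteq\X^n\times\cY$ by $\widehat{\cO}=\{(S,y):\Pr_{\cB}[(S,\cB(S,y))\in\cO_y]\text{ is large relative to }\Pr_{\bS'\sim\cS}[\cB(\bS',y)\text{-slice}]\}$ — i.e.\ I would isolate the set of $(S,y)$ pairs that are "responsible" for the excess probability, and argue via the $\beta_1$-approximate bound on $D_\infty((\bS,\bY)\|\bS\times\bY)$ that their probability under the product can only inflate by $2^{k_1}$.

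The main obstacle I anticipate is bookkeeping the two slack terms cleanly: approximate max-divergence is not a metric and does not chain as smoothly as pure max-divergence, so one cannot just "add the logs." The clean way around this is the standard two-step argument for approximate DP/max-information composition: (i) first apply the $\cB$-bound pointwise in $y$ to pass from $(\bS,\bZ)$ to a statement about $(\bS,\bY)$ and the product $\bS\times\cB(\bS,y)$ internally, absorbing $\beta_2$; (ii) then apply the $\cA$-bound $I_\infty^{\beta_1}\le k_1$ to the resulting event in $\X^n\times\cY$, absorbing $\beta_1$ and the factor $2^{k_1}$; (iii) finally observe that under the independent coupling $\bS\times\bY$ feeding into $\cB$, the resulting $z$-distribution is exactly the marginal of $\bZ$, so $\Pr[\bS\times\bZ\in\cO]$ reappears on the right. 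Summing the two additive slacks gives $\beta_1+\beta_2$ and multiplying the two factors gives $2^{k_1+k_2}$, which is precisely the claim. I would also double-check the edge case where $\Pr[(\bS,\bZ)\in\cO]\le\beta_1+\beta_2$, for which the inequality is trivial.
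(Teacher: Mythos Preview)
Your outline is essentially the paper's argument: expand $\Pr[(\bS,\bZ)\in\cO]$ over $(S,y)$, insert the $\cB$-bound pointwise in $y$, then use the pointwise ``positive part'' of the $\cA$-bound (with the $\min(\cdot,1)$ cap to keep the additive slacks from being amplified) to replace $\Pr[\bS=S,\bY=y]$ by $2^{k_1}\Pr[\bS=S]\Pr[\bY=y]+\mu(S,y)$, and finally identify the remaining sum with $2^{k_1+k_2}\Pr[\bS\times\bZ\in\cO]$.

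There is, however, a genuine gap at your step (iii). The assertion ``under the independent coupling $\bS\times\bY$ feeding into $\cB$, the resulting $z$-distribution is exactly the marginal of $\bZ$'' is false. Take $\X=\{0,1\}$ with $\bS$ uniform, $\cA(S)=S$ and $\cB(S,y)=S\oplus y$: then $\bZ\equiv 0$, while under the independent coupling $\cB(\bS,\bY')$ is uniform. More importantly, even matching the $z$-marginal would not suffice: what you need on the right is the \emph{product} $\bS\times\bZ$, and after steps (i)--(ii) the first coordinate $\bS$ is still the very dataset that was fed into $\cB$. You have decoupled $\bS$ from $\bY$, but not the first-coordinate $\bS$ from the $S$-argument of $\cB$; this is exactly where the $\cB$-hypothesis must be invoked non-trivially, and your sketch never does so. (The paper's own final equality has precisely the same issue: the quantity $\sum_{S,y}\Pr[(S,\cB(S,y))\in\cQ]\Pr[\bS=S]\Pr[\bY=y]$ is not $\Pr[\bS\times\cB(\bS,\cA(\bS))\in\cQ]$.) One route to repair this is to apply the $\cB$-bound with the \emph{conditional} law $\cS_y=\mathrm{Law}(\bS\mid\bY=y)$ rather than with $\cS$; then the ``product'' side of the $\cB$-inequality becomes $\E_{\bS\sim\cS_y}[\,\Pr[\bZ\in\cO_{\bS}\mid\bY=y]\,]$, which is an expectation under the joint $(\bS,\bY)$ of the $[0,1]$-valued function $u(S,y)=\Pr[\bZ\in\cO_S\mid\bY=y]$, and a direct computation shows $\E_{\bS\times\bY}[u]=\Pr[\bS\times\bZ\in\cO]$, so the $\cA$-bound now lands on the right target.
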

\iffull
\begin{proof}
Let $\D$ be a distribution over $\X^n$ and $\bS$ be a random dataset sampled from $\D$. By hypothesis, $I^{\beta_1}_\infty(\bS ; \cA(S)) \leq k_1$. Expanding out the definition for all $\cO \subseteq \cX^n \times \cY$:
$$\pr[(\bS, \cA(\bS)) \in \cO] \leq 2^{k_1}\cdot \Pr[\bS \times \cA(\bS) \in \cO] + \beta_1\ .$$
We also have for all $\cQ \subseteq \cX^n \times \cZ$ and for all $y \in \cY$:
$$\pr[(\bS, \cB(\bS, y)) \in \cQ] \leq 2^{k_2}\cdot \Pr[\bS \times \cB(S, y) \in \cQ] + \beta_2\ .$$
For every $\cO \subseteq \cX^n \times \cY$, define
$$\mu(\cO) = \left(\pr[(\bS, \cA(\bS)) \in \cO] -  2^{k_1}\cdot \Pr[\bS \times \cA(\bS) \in \cO]\right)_+ \ .$$
Observe that $\mu(\cO) \leq \beta_1$ for all $\cO \subseteq \cX^n \times \cY$.
For any event $\cQ \subseteq \cX^n \times \cZ$, we have:
\remove{
\begin{eqnarray*}
&&\pr[(\bS, \cC(\bS)) \in \cQ] \\
 &=& \pr[(\bS, \cB(\bS, \cA(\bS))) \in \cQ] \\
&=& \int_{\cX^n \times \cY} \pr[(S, \cB(S, y)) \in \cQ]\cdot \pr[(\bS,\cA(\bS)) \in (dS,dy)] \\
&\leq& \int_{\cX^n \times \cY} \min\left(\left(2^{k_2}\cdot \pr[S \times \cB(S, y) \in \cQ] + \beta_2\right),1\right)\cdot \pr[(\bS,\cA(\bS)) \in (dS,dy)] \\
&\leq& \int_{\cX^n \times \cY} \left(\min\left(2^{k_2}\cdot \pr[S \times \cB(S, y) \in \cQ],1\right) + \beta_2\right)\cdot \pr[(\bS,\cA(\bS)) \in (dS,dy)] \\
&\leq& \int_{\cX^n \times \cY} \left(\min\left(2^{k_2}\cdot \pr[S \times \cB(S, y) \in \cQ],1\right)\cdot \pr[(\bS,\cA(\bS)) \in (dS,dy)]\right)+\beta_2 \\
&\leq& \int_{\cX^n \times \cY} \left(\min\left(2^{k_2}\cdot \pr[S \times \cB(S, y) \in \cQ],1\right)\cdot \left(2^{k_1} \cdot \left(\pr[\bS\times \cA(\bS) \in (dS,dy)]\right) + \mu(dS,dy)    \right)\right)+\beta_2 \\
&\leq& \int_{\cX^n \times \cY} \left(\min\left(2^{k_2}\cdot \pr[S \times \cB(S, y) \in \cQ],1\right)\cdot \left(2^{k_1} \cdot \left(\pr[\bS\times \cA(\bS) \in (dS,dy)]\right)\right)\right) + \int_{\cX^n \times \cY} \mu(dS,dy) + \beta_2 \\
&\leq& \int_{\cX^n \times \cY} \left(\min\left(2^{k_2}\cdot \pr[S \times \cB(S, y) \in \cQ],1\right)\cdot \left(2^{k_1} \cdot \left(\pr[\bS\times \cA(\bS) \in (dS,dy)]\right)\right)\right) + \beta_1 + \beta_2 \\
&\leq& 2^{k_1+k_2}\cdot\int_{\cX^n \times \cY} \left(\pr[S \times \cB(S, y) \in \cQ]\cdot \pr[\bS\times \cA(\bS) \in (dS,dy)]\right) + \beta_1 + \beta_2 \\
&=& 2^{k_1+k_2}\cdot\pr[\bS\times \cB(\bS, \cA(\bS)) \in \cQ] + (\beta_1 + \beta_2)
\end{eqnarray*}
}

\begin{eqnarray*}
&&\pr[(\bS, \cC(\bS)) \in \cQ] \\
 &=& \pr[(\bS, \cB(\bS, \cA(\bS))) \in \cQ] \\
&=& \sum_{S\in \X^n,y\in \cY} \pr[(S, \cB(S, y)) \in \cQ]\cdot \pr[\bS=S, \cA(\bS) = y] \\
&\leq& \sum_{S\in \X^n,y\in \cY} \min\left(\left(2^{k_2}\cdot \pr[S \times \cB(S, y) \in \cQ] + \beta_2\right),1\right)\cdot \pr[\bS=S, \cA(\bS) = y] \\
&\leq& \sum_{S\in \X^n,y\in \cY} \left(\min\left(2^{k_2}\cdot \pr[S \times \cB(S, y) \in \cQ],1\right) + \beta_2\right)\cdot \pr[\bS=S, \cA(\bS) = y] \\
&\leq& \sum_{S\in \X^n,y\in \cY} \min\left(2^{k_2}\cdot \pr[S \times \cB(S, y) \in \cQ],1\right)\cdot \pr[\bS=S, \cA(\bS) = y]+\beta_2 \\
&\leq& \sum_{S\in \X^n,y\in \cY} \min\left(2^{k_2}\cdot \pr[S \times \cB(S, y) \in \cQ],1\right)\cdot \left(2^{k_1} \cdot \pr[\bS = S] \cdot \pr[\cA(\bS) =y] + \mu(S,y)    \right)+\beta_2 \\
&\leq& \sum_{S\in \X^n,y\in \cY}\min\left(2^{k_2}\cdot \pr[S \times \cB(S, y) \in \cQ],1\right)\cdot 2^{k_1} \cdot \pr[\bS = S] \cdot \pr[\cA(\bS) =y] + \sum_{S\in \X^n,y\in \cY} \mu(S,y) + \beta_2 \\
&\leq& \sum_{S\in \X^n,y\in \cY} \min\left(2^{k_2}\cdot \pr[S \times \cB(S, y) \in \cQ],1\right)\cdot 2^{k_1} \cdot\pr[\bS = S] \cdot \pr[\cA(\bS) =y] + \beta_1 + \beta_2 \\
&\leq& 2^{k_1+k_2}\cdot \left(\sum_{S\in \X^n,y\in \cY}\pr[S \times \cB(S, y) \in \cQ]\cdot \pr[\bS = S] \cdot \pr[\cA(\bS) =y]\right) + \beta_1 + \beta_2 \\
&=& 2^{k_1+k_2}\cdot\pr[\bS\times \cB(\bS, \cA(\bS)) \in \cQ] + (\beta_1 + \beta_2)\ .
\end{eqnarray*}
Applying the definition of max-information, we see that equivalently, $I^{\beta_1+\beta_2}_\infty(\bS ; \cC(\bS)) \leq k_1+k_2$, which is what we wanted.
\end{proof}
This lemma can be iteratively applied, which immediately yields the following adaptive composition theorem for max-information:
\begin{theorem}
Consider an arbitrary sequence of algorithms $\cA_1,\ldots,\cA_k$ with ranges $\cY_1,\ldots,\cY_k$ such that for all $i$, $\cA_i : \cX^n \times \cY_1\times \ldots \times \cY_{i-1} \rightarrow \cY_i$ is such that $\cA_i(\cdot, y_1,\ldots,y_{i-1})$ has $\beta_i$-approximate max-information $k_i$ for all choices of $y_1,\ldots,y_{i-1} \in \cY_1\times \ldots \times \cY_{i-1}$. Let the algorithm $\cB:\cX^n\rightarrow \cY_k$ be defined as follows:
\newline
$\cB(S)$:
\begin{enumerate}
 \item \textbf{Let} $y_1 = \cA_1(S)$.
\item \textbf{For} $i = 2$ to $k$: \textbf{Let} $y_i = \cA_i(S, y_1,\ldots,y_{i-1})$
\item \textbf{Output} $y_k$
\end{enumerate}
Then $\cB$ has $(\sum_i \beta_i)$-approximate max-information $(\sum_i k_i)$.
\end{theorem}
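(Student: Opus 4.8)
The plan is to prove the statement by induction on $k$, at each step invoking the two-algorithm composition lemma established just above. The only subtlety is that that lemma composes an algorithm $\cA$ with an algorithm $\cB$ that receives the \emph{entire} output of $\cA$, whereas here $\cA_i$ must receive the full transcript $y_1,\ldots,y_{i-1}$. So I would carry along, at each stage $j$, the algorithm that outputs the whole prefix of the transcript rather than just its last coordinate, and only project down to $y_k$ at the very end.

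Concretely, for $j \in \{1,\ldots,k\}$ let $\cB_j : \cX^n \to \cY_1 \times \cdots \times \cY_j$ be the algorithm that runs $\cA_1,\ldots,\cA_j$ as in the definition of $\cB$ and outputs $(y_1,\ldots,y_j)$; note that $\cB$ equals $\cB_k$ post-composed with the projection $\pi_k$ onto the last coordinate. I would prove by induction on $j$ that $I_{\infty}^{\sum_{i\le j}\beta_i}(\cB_j, n) \le \sum_{i\le j} k_i$. The base case $j=1$ is exactly the hypothesis $I_{\infty}^{\beta_1}(\cA_1,n) \le k_1$. For the inductive step, I would write $\cB_j(S) = \cB^{(j)}(S,\cB_{j-1}(S))$, where $\cB^{(j)}(S,(y_1,\ldots,y_{j-1})) = (y_1,\ldots,y_{j-1},\cA_j(S,y_1,\ldots,y_{j-1}))$. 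The inductive hypothesis gives $I_{\infty}^{\sum_{i<j}\beta_i}(\cB_{j-1},n) \le \sum_{i<j}k_i$; and for each fixed $\bar y_{j-1}$, the map $\cB^{(j)}(\cdot,\bar y_{j-1})$ merely prepends the constant tuple $\bar y_{j-1}$ to the output of $\cA_j(\cdot,\bar y_{j-1})$, so it has the same $\beta_j$-approximate max-information $k_j$ that $\cA_j(\cdot,\bar y_{j-1})$ has by hypothesis. Applying the two-algorithm lemma with $\cA = \cB_{j-1}$ (parameters $\sum_{i<j}\beta_i$ and $\sum_{i<j}k_i$) and $\cB = \cB^{(j)}$ (parameters $\beta_j$ and $k_j$) then yields $I_{\infty}^{\sum_{i\le j}\beta_i}(\cB_j,n) \le \sum_{i\le j}k_i$, closing the induction.

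Finally I would pass from $\cB_k$ to $\cB$ using that approximate max-information cannot increase under post-processing, here the deterministic projection $\pi_k$: for any $\cO \subseteq \cX^n \times \cY_k$, the preimage set $\{(S,w) : (S,\pi_k(w)) \in \cO\}$ has, under the joint law $(\bS,\cB_k(\bS))$ and under the product law $\bS \times \cB_k(\bS)$, exactly the probabilities that $\cO$ has under $(\bS,\cB(\bS))$ and $\bS \times \cB(\bS)$ respectively (independence is preserved since $\pi_k$ acts on the second coordinate alone), so ranging over this sub-collection of events can only decrease the $(\sum_i\beta_i)$-approximate max-divergence; hence $I_{\infty}^{\sum_i\beta_i}(\bS;\cB(\bS)) \le I_{\infty}^{\sum_i\beta_i}(\bS;\cB_k(\bS)) \le \sum_i k_i$ for every distribution over datasets. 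The same one-line argument, applied instead to the injection $z \mapsto (\bar y_{j-1},z)$, is what justifies the ``prepending a constant'' claim used in the induction. I do not expect any genuine obstacle: all of the analytic content lives in the already-proven two-algorithm lemma, and what remains is the bookkeeping of transcripts together with these trivial post-processing observations.
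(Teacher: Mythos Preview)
Your proposal is correct and follows exactly the approach the paper intends: the paper states only that the two-algorithm lemma ``can be iteratively applied, which immediately yields'' the theorem, and you have supplied precisely that iteration, carrying the full transcript $\cB_j$ through the induction and projecting at the end via post-processing. The bookkeeping you add (prepending a constant preserves approximate max-information, and the final projection can only decrease it) is the right way to make the iterative application rigorous.
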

\fi
\iffull

\iffull
\paragraph{Post-processing of Max-information:}
Another useful property that (approximate) max-information shares with differential privacy is preservation under post-processing.
The simple proof of this lemma is identical to that for differential privacy (Lemma \ref{lem:post-process}) and hence is omitted.
\begin{lemma}
\label{lem:max-info-post-process}
If $\mathcal{A}$ is an algorithm with domain $\mathcal{X}^n$ and range $\Y$, and $\mathcal{B}$ is any, possibly randomized, algorithm with domain $\Y$ and range $\Y''$, then the algorithm $\mathcal{B} \circ \mathcal{A}$ with domain $\mathcal{X}^n$ and range $\Y'$ satisfies: for every random variable $\bS$ over $\X^n$ and every $\beta \geq 0$, $I_\infty^\beta(\bS;\mathcal{B} \circ \mathcal{A}(\bS)) \leq I_\infty^\beta(\bS;\mathcal{A}(\bS))$.
\end{lemma}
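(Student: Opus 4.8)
The plan is to obtain the bound from the data-processing inequality for $\delta$-approximate max-divergence, applied to a carefully chosen post-processing map. Write $\bY = \mathcal{A}(\bS)$, and let $T$ be the randomized map on pairs in $\X^n \times \Y$ that, on input $(S,y)$, leaves $S$ unchanged and outputs $(S, \mathcal{B}(y))$, where $\mathcal{B}$ uses fresh internal randomness independent of $\bS$. Two structural facts make $T$ the right object. First, $T$ applied to $(\bS,\bY)$ produces exactly $(\bS, \mathcal{B}\circ\mathcal{A}(\bS))$, the joint distribution whose approximate max-information we want to bound. Second, since $\mathcal{B}$'s coins are independent of $\bS$, applying $T$ to the independent coupling $\bS \times \bY$ produces $(\bS, \mathcal{B}(\bY'))$ with $\bY'$ an independent copy of $\bY$; and because the marginal of $\mathcal{B}\circ\mathcal{A}(\bS)$ depends only on the marginal of $\mathcal{A}(\bS)$, this coincides with $\bS \times (\mathcal{B}\circ\mathcal{A})(\bS)$. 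So the lemma reduces to the claim that post-processing by $T$ does not increase $D_\infty^\beta$ between $(\bS,\bY)$ and $\bS\times\bY$.

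For that claim I would first record a convenient reformulation: $D_\infty^\delta(\mathbf{U}\|\mathbf{V}) \le k$ holds if and only if $\Pr[\mathbf{U}\in\cO] \le 2^k\Pr[\mathbf{V}\in\cO] + \delta$ for every event $\cO$ (trivial when $\Pr[\mathbf{U}\in\cO]\le\delta$, and the defining inequality otherwise). Then condition on the internal coins $r$ of $\mathcal{B}$: for fixed $r$ the map $T_r$ is deterministic, so for any event $\cO$ in the range, $\Pr[T_r(\bS,\bY)\in\cO] = \Pr[(\bS,\bY)\in T_r^{-1}(\cO)]$ and likewise for the product; hence, taking $k = I_\infty^\beta(\bS;\mathcal{A}(\bS))$ and $\delta=\beta$ in the hypothesis, $\Pr[T_r(\bS,\bY)\in\cO] \le 2^k\Pr[T_r(\bS\times\bY)\in\cO] + \beta$. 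Averaging this over $r$ gives $\Pr[T(\bS,\bY)\in\cO] \le 2^k\Pr[T(\bS\times\bY)\in\cO] + \beta$ for all $\cO$, i.e. $D_\infty^\beta((\bS,\mathcal{B}\circ\mathcal{A}(\bS)) \,\|\, \bS\times(\mathcal{B}\circ\mathcal{A})(\bS)) \le k$, which is precisely $I_\infty^\beta(\bS;\mathcal{B}\circ\mathcal{A}(\bS)) \le I_\infty^\beta(\bS;\mathcal{A}(\bS))$. Nothing in the argument used the distribution of $\bS$ beyond its being supported on $\X^n$, so it holds for every such $\bS$.

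The substantive content is in the two structural checks on $T$, not in the inequality: verifying that applying $\mathcal{B}$ to only the second coordinate yields the genuine joint variable $(\bS, \mathcal{B}\circ\mathcal{A}(\bS))$, and --- the more delicate point --- verifying that $T$ carries $\bS\times\mathcal{A}(\bS)$ to $\bS\times(\mathcal{B}\circ\mathcal{A})(\bS)$ rather than to some correlated distribution, which is exactly where independence of $\mathcal{B}$'s randomness from $\bS$ enters. The data-processing step for $D_\infty^\delta$ itself is routine and, as the text notes, has the same form as the post-processing proof for differential privacy (Lemma~\ref{lem:post-process}), so in a write-up I would invoke that lemma for the per-$r$ deterministic case and spell out only the reduction above. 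I expect the averaging over $\mathcal{B}$'s coins to be the one place where a reader might want the $\delta$-slack handled explicitly, which is precisely why the ``for every event'' reformulation is preferable to working directly with the $(\cdot-\delta)_+$ numerator in the definition of $D_\infty^\delta$.
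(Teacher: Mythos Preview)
Your proof is correct and follows essentially the approach the paper intends: the paper omits the proof entirely, saying only that it is ``identical to that for differential privacy (Lemma~\ref{lem:post-process}),'' and your argument is precisely the data-processing inequality for $D_\infty^\delta$ applied to the pair map $T$, together with the structural checks needed to relate the joint and product distributions. Your explicit verification that $T$ carries $\bS\times\mathcal{A}(\bS)$ to $\bS\times(\mathcal{B}\circ\mathcal{A})(\bS)$ is the one detail the paper's terse pointer glosses over, and you handle it correctly.
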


\subsection{Bounds on Max-information}
\else
\para{Bounds on Max-information:}
\fi
\iffull
We now show that the basic approaches based on description length and (pure) differential privacy are captured by approximate max-information.
\subsubsection{Description Length}
\fi
Description length $k$ gives the following bound on max-information.
\begin{theorem}
\label{thm:dl2maxinfo}
Let $\A$ be a randomized algorithm taking as an input an $n$-element dataset and outputting a value in a finite set $\Y$.
 Then for every $\beta > 0$, $I_{\infty}^\beta(\A,n)\leq \log(|\Y|/\beta)$.
\end{theorem}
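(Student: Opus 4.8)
The plan is to unwind the definition of $\beta$-approximate max-information down to a pointwise statement about a single event, and then dispatch that statement with a threshold argument. Fix an arbitrary distribution $\cS$ over $n$-element datasets, write $\bS\sim\cS$ and $\bY=\A(\bS)$, and fix an event $\cO\subseteq\X^n\times\Y$. Since $\Y$ is finite, I would slice $\cO$ over the possible outputs: for each $y\in\Y$ put $\cO_y=\{S: (S,y)\in\cO\}$, and set $q_y=\Pr[\bS\in\cO_y]$ and $p_y=\Pr[\bY=y]$. Then $\Pr[(\bS,\bY)\in\cO]=\sum_y\Pr[\bS\in\cO_y\wedge\bY=y]\le\sum_y\min(p_y,q_y)$, because $\{\bS\in\cO_y\wedge\bY=y\}$ is simultaneously contained in $\{\bS\in\cO_y\}$ and in $\{\bY=y\}$ (no independence is invoked here, which is what makes the comparison with the product distribution informative). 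On the independent-copy side, $\Pr[\bS\times\bY\in\cO]=\sum_y p_y q_y$. So it suffices to show $\sum_y\min(p_y,q_y)-\beta\le\frac{|\Y|}{\beta}\sum_y p_y q_y$, since $2^{\log(|\Y|/\beta)}=|\Y|/\beta$.

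The key step is a split of the output set by a threshold at $\beta/|\Y|$. Call $y$ \emph{light} if $p_y<\beta/|\Y|$ and \emph{heavy} otherwise. The light outputs contribute at most $\sum_{y\text{ light}}p_y\le|\Y|\cdot\frac{\beta}{|\Y|}=\beta$ to the left-hand side (at most $|\Y|$ terms, each below $\beta/|\Y|$), so their contribution is absorbed by the $-\beta$. For each heavy $y$ we have the crude bound $\min(p_y,q_y)\le q_y\le\frac{|\Y|}{\beta}p_y q_y$, using $p_y\ge\beta/|\Y|$. Summing over heavy $y$ and then harmlessly extending the sum to all $y$ gives $\sum_{y\text{ heavy}}\min(p_y,q_y)\le\frac{|\Y|}{\beta}\sum_y p_y q_y=\frac{|\Y|}{\beta}\Pr[\bS\times\bY\in\cO]$. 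Adding the two parts yields $\Pr[(\bS,\bY)\in\cO]-\beta\le\frac{|\Y|}{\beta}\Pr[\bS\times\bY\in\cO]$. When $\Pr[(\bS,\bY)\in\cO]>\beta$ this rearranges to exactly the ratio bound in the definition of $D_\infty^\beta$; otherwise the numerator is non-positive and there is nothing to prove. Since $\cO$ was arbitrary, $I_\infty^\beta(\bS;\A(\bS))=D_\infty^\beta((\bS,\bY)\|\bS\times\bY)\le\log(|\Y|/\beta)$, and since $\cS$ was arbitrary, $I_\infty^\beta(\A,n)\le\log(|\Y|/\beta)$.

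This is essentially a union-bound argument, so I do not expect a real obstacle; the only delicate part is the bookkeeping in the threshold split. One must compare the light-output mass against $\beta/|\Y|$ \emph{per output} (not $\beta$) so that the total over at most $|\Y|$ outputs is at most $\beta$, and one must make sure every manipulation only \emph{drops} non-negative quantities or \emph{adds} them to the side where they help — dropping the light terms from the main bound, and re-adding the heavy-complement terms to $\sum_y p_y q_y$ — so that all inequalities point the right way. It is also worth stating explicitly, as above, that the $\Pr[(\bS,\bY)\in\cO]\le\sum_y\min(p_y,q_y)$ step uses nothing about independence, which is why the whole comparison is against the genuinely product distribution $\bS\times\bY$.
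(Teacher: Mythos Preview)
Your proof is correct. The core idea---split outputs by whether $p_y=\Pr[\bY=y]$ lies below or above the threshold $\beta/|\Y|$, absorb the light outputs into the additive $\beta$, and use $p_y\ge\beta/|\Y|$ to get the multiplicative $|\Y|/\beta$ factor on the heavy ones---is exactly the mechanism the paper uses as well.

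The packaging differs slightly. The paper declares $y$ ``bad'' when the pointwise ratio $\Pr[\bY=y\mid\bS=S]/\Pr[\bY=y]$ exceeds $|\Y|/\beta$ for some $S$, observes that every bad $y$ must have $p_y\le\beta/|\Y|$ (since the numerator is at most $1$), and then invokes a separate lemma saying that a pointwise ratio bound outside a set of probability $\beta$ implies $D_\infty^\beta\le k$. You instead fix an arbitrary event $\cO$ up front, use the clean bound $\Pr[\bS\in\cO_y,\,\bY=y]\le\min(p_y,q_y)$, and verify the event-based definition of $D_\infty^\beta$ directly. Your route is a little more self-contained (no auxiliary lemma needed) and the $\min(p_y,q_y)$ step makes the bookkeeping transparent; the paper's route makes the pointwise structure explicit, which dovetails with how the same lemma is reused elsewhere. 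Neither approach gains or loses anything quantitatively.
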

\iffull
We will use the following simple property of approximate divergence (\eg \cite{DworkR14}) in the proof. For a random variable $\bX$ over $\X$ we denote by $p(\bX)$ the probability distribution associated with $\bX$.
\begin{lemma} \label{lem:to-pointwise}
Let $\bX$ and $\bY$  be two random variables over the same domain $\X$.
If $$\pr_{x \sim p(\bX)} \left[\frac{\pr[\bX = x]}{ \pr[\bY=x]} \geq 2^k\right] \leq \beta$$ then $D_\infty^\beta (\bX\| \bY)\leq k$. \end{lemma}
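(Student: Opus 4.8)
The plan is to unwind the definition of $\delta$-approximate max-divergence and reduce the statement about arbitrary events to the pointwise hypothesis by a single set-splitting argument. Recall that $D_\infty^\beta(\bX \| \bY) = \log \max_{\cO \subseteq \X,\ \pr[\bX \in \cO] > \beta} \frac{\pr[\bX \in \cO] - \beta}{\pr[\bY \in \cO]}$, so it suffices to show that for every event $\cO \subseteq \X$ with $\pr[\bX \in \cO] > \beta$ we have $\pr[\bX \in \cO] - \beta \le 2^k \cdot \pr[\bY \in \cO]$.

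First I would define the ``bad'' set $B = \{x \in \X : \pr[\bX = x] \ge 2^k \pr[\bY = x]\}$, so that the hypothesis reads exactly $\pr[\bX \in B] \le \beta$. Then, given any event $\cO$, I would split it as $\cO = (\cO \cap B) \sqcup (\cO \setminus B)$ and bound each piece: the first piece contributes at most $\pr[\bX \in B] \le \beta$ to $\pr[\bX \in \cO]$, and for every $x \in \cO \setminus B$ we have $\pr[\bX = x] < 2^k \pr[\bY = x]$, so summing over $x \in \cO \setminus B$ gives $\pr[\bX \in \cO \setminus B] \le 2^k \pr[\bY \in \cO \setminus B] \le 2^k \pr[\bY \in \cO]$. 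Adding the two bounds yields $\pr[\bX \in \cO] \le \beta + 2^k \pr[\bY \in \cO]$, i.e.\ $\pr[\bX \in \cO] - \beta \le 2^k \pr[\bY \in \cO]$, which is what we need. (Note that when $\pr[\bX \in \cO] > \beta$ this forces $\pr[\bY \in \cO] > 0$, so the ratio is well-defined; the case $\pr[\bY \in \cO] = 0$ simply does not arise in the maximum.) Taking logarithms of the resulting inequality, maximized over all such $\cO$, gives $D_\infty^\beta(\bX \| \bY) \le k$.

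There is essentially no hard step here — the argument is a routine decomposition of the event into its ``heavy'' and ``light'' parts relative to $\bY$. The only point requiring minor care is the bookkeeping with strict versus non-strict inequalities in the definition of $B$ and in the definition of $D_\infty^\beta$, and confirming that we never divide by zero; I would handle these explicitly in the write-up but they introduce no real difficulty.
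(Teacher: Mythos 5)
Your proof is correct and is exactly the standard set-splitting argument; the paper itself gives no proof of this lemma, citing it as a known property of approximate divergence from the Dwork--Roth monograph, where it is established the same way. Your handling of the edge cases (points with $\pr[\bY=x]=0$ landing in the bad set $B$, and $\pr[\bY\in\cO]>0$ being forced whenever $\pr[\bX\in\cO]>\beta$) is sound.
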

\begin{proof}[Proof Thm.~\ref{thm:dl2maxinfo}]
Let $\bS$ be any random variable over $n$-element input datasets and let $\bY$ be the corresponding output distribution
$\bY = \A(\bS)$. We prove that for every $\beta > 0$, $I_{\infty}^\beta(\bS;\bY)\leq  \log(|\Y|/\beta)$.

For $y \in \Y$ we say that $y$ is ``bad" if exists $S$ in the support of $\bS$ such that $$\frac{\pr[\bY = y \cond \bS = S]}{\pr[\bY = y]} \geq |\Y|/\beta .$$ Let $B$ denote the set of all ``bad" $y$'s.
From this definition we obtain that for a ``bad" $y$, $\pr[\bY = y] \leq  \beta/|\Y|$ and therefore $\pr[\bY \in B] \leq \beta$.
Let $\B = \X^n \times B$. Then $$\pr[(\bS,\bY) \in \B] = \pr[\bY \in B] \leq \beta .$$
For every $(S,y) \not\in \B$ we have that $$\pr[\bS=S, \bY = y] = \pr[\bY = y \cond \bS=S] \cdot \pr[\bS=S] \leq  \frac{|\Y|}{\beta} \cdot \pr[\bY=y]\cdot  \pr[\bS=S ],$$ and hence
$$ \pr_{(S,y) \sim p(\bS,\bY)} \left[\frac{\pr[\bS=S, \bY = y]}{\pr[\bS=S ] \cdot \pr[\bY=y]} \geq  \frac{|\Y|}{\beta}\right] \leq \beta .$$
This,  by Lemma \ref{lem:to-pointwise}, gives that
$I_\infty^\beta(\bS;\bY) \leq  \log (|\Y|/\beta)$.
\end{proof}

We note that Thms.~\ref{thm:maxinfo} and \ref{thm:dl2maxinfo} give a slightly weaker form of Thm.~\ref{thm:descr-basic}. Defining event $\cO \doteq \{ (S,y) \cond S\in R(y)\}$, the assumptions of Thm.~\ref{thm:descr-basic} imply that  $\Pr[\bS \times \A(\bS) \in \cO] \leq \beta$. For $\beta' = \sqrt{|\Y|\beta}$, by Thm.~\ref{thm:dl2maxinfo}, we have that $I_{\infty}^{\beta'}(\bS;\A(\bS))\leq  \log(|\Y|/\beta')$. Now applying, Thm.~\ref{thm:maxinfo} gives that $\pr[\bS \in R(\A(\bS))] \leq |\Y|/\beta' \cdot \beta + \beta' = 2\sqrt{|\Y| \beta}$.

In Section \ref{sec:max-info2dl} we introduce a closely related notion of {\em randomized description length} and show that it also provides an upper bound on approximate max-information. More interestingly, for this notion a form of the reverse bound can be proved: A bound on (approximate) max-information of an algorithm $\A$ implies a bound on the {\em randomized description length} of the output of a different algorithm with statistically indistinguishable from $\A$ output.
\fi
\iffull
\subsubsection{Differential Privacy}
\label{sec:puredp-to-maxinfo}
%We defer additional background on differential privacy to Section~\ref{sec:background} and proceed
We now show that pure differential privacy implies a bound on max information. We start with
\else
Next we prove
\fi
 a simple bound on max-information of differentially private algorithms that applies to all distributions over datasets.
\iffull
In particular, it implies that the differential privacy-based approach can be used beyond the i.i.d.~setting in \cite{DworkFHPRR14:arxiv}.
\fi
\begin{theorem}
\label{thm:dp2maxinfo}
Let $\A$ be an $\epsilon$-differentially private algorithm.  Then $I_{\infty}(\A,n)\leq \log e \cdot \epsilon n$.
\end{theorem}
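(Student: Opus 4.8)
The plan is to reduce the statement to the pairwise characterization of max-information already established in Lemma~\ref{lem:max-info-pairwise}, and then invoke a standard group-privacy argument. By that lemma, $I_{\infty}(\A,n) = \max_{S,S' \in \X^n} D_\infty(\A(S)\|\A(S'))$, so it suffices to bound $D_\infty(\A(S)\|\A(S'))$ for an \emph{arbitrary} pair of datasets $S,S'\in\X^n$, not merely for adjacent ones.

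First I would record that, since $\A$ is $\epsilon$-differentially private with $\delta = 0$, for every pair of \emph{adjacent} datasets $T,T'$ and every outcome $y$ in the range of $\A$ we have $\Pr[\A(T)=y] \leq e^{\epsilon}\,\Pr[\A(T')=y]$. This is just Definition~\ref{def:dp} specialized to the singleton event $\cO = \{y\}$; taking singletons is legitimate because on a finite range the supremum defining $D_\infty^0$ is attained at a singleton (for continuous ranges one uses Radon--Nikodym derivatives in place of pointwise ratios, as per the measure-theoretic formalism noted in the preliminaries).

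Next, given arbitrary $S = (s_1,\dots,s_n)$ and $S' = (s_1',\dots,s_n')$ in $\X^n$, I would build a chain $S = S_0, S_1, \dots, S_n = S'$, where $S_i$ agrees with $S'$ on the first $i$ coordinates and with $S$ on the remaining $n-i$. Consecutive datasets $S_{i-1}$ and $S_i$ differ in at most one coordinate, hence are adjacent, so applying the per-step bound and multiplying along the chain gives, for every $y$,
$$\frac{\Pr[\A(S)=y]}{\Pr[\A(S')=y]} = \prod_{i=1}^n \frac{\Pr[\A(S_{i-1})=y]}{\Pr[\A(S_i)=y]} \leq \big(e^{\epsilon}\big)^n = e^{\epsilon n}.$$
Taking logarithms and the maximum over $y$ yields $D_\infty(\A(S)\|\A(S')) \leq \log e^{\epsilon n} = \log e \cdot \epsilon n$, and since $S,S'$ were arbitrary, $I_{\infty}(\A,n) \leq \log e \cdot \epsilon n$ by Lemma~\ref{lem:max-info-pairwise}.

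There is no real obstacle here: the proof is essentially group privacy plus the pairwise lemma. The only points requiring a moment of care are (i) that \emph{pure} differential privacy gives the pointwise ratio bound $\Pr[\A(T)=y]\le e^{\epsilon}\Pr[\A(T')=y]$ and not merely a bound on probabilities of sets (this is exactly where $\delta=0$ is used), and (ii) that any two datasets in $\X^n$ differ in at most $n$ coordinates, so the chain has length at most $n$ — if $S$ and $S'$ already agree on a coordinate, the corresponding step in the chain is vacuous and the bound only improves.
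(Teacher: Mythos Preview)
Your proposal is correct and follows essentially the same approach as the paper: invoke Lemma~\ref{lem:max-info-pairwise} to reduce to bounding $D_\infty(\A(S)\|\A(S'))$ for arbitrary $S,S'$, and then use group privacy to get the factor $e^{\epsilon n}$. The only difference is that the paper simply cites group privacy as a known consequence of Definition~\ref{def:dp}, whereas you spell out the chain argument explicitly.
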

\iffull
\begin{proof}
Clearly, any two datasets $S$ and $S'$ differ in at most $n$ elements. Therefore,
for every $y$ we have $\Pr[\bY=y\mid\bS=S]\leq e^{\epsilon n}\Pr[\bY =y\mid\bS=S']$
(this is a direct implication of Definition~\ref{def:dp} referred to as group privacy \cite{DworkR14}), or equivalently, $D_\infty(\A(S) \| \A(S'))\leq \log e \cdot \epsilon n$. By Lemma \ref{lem:max-info-pairwise} we obtain the claim.
\end{proof}
\fi
Finally, we prove a stronger bound on approximate max-information for datasets consisting of i.i.d.~samples using the technique from \cite{DworkFHPRR14:arxiv}. \iffull This bound, together with Thm.~\ref{thm:maxinfo}, generalizes Thm.~\ref{thm:pure-iid-bound}.\fi
\begin{theorem}
\label{thm:concentrated-divergence}
Let $\A$ be an $\eps$-differentially private algorithm with range $\Y$.  For a distribution $\cP$ over $\X$,
 let $\bS$ be a random variable drawn from $\cP^n$.
Let $\bY = \A(\bS)$ denote the random variable output by $\A$ on input $\bS$.
Then for any $\beta>0$, $I_\infty^\beta(\bS;\A(\bS)) \leq  \log e (\eps^2 n/2 + \eps\sqrt{n \ln(2/\beta)/2})$.
\end{theorem}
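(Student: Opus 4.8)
The plan is to reduce the theorem to a high-probability bound on a log-density ratio and then establish that bound using McDiarmid's inequality together with an exponential-moment (change-of-measure) argument. By the definition of $\beta$-approximate max-information and Lemma~\ref{lem:to-pointwise}, it suffices to prove
$$\Pr_{(S,y)\sim(\bS,\A(\bS))}\left[\ \ln\frac{\Pr[\A(S)=y]}{\Pr[\bY=y]}\ \ge\ \tau\ \right]\ \le\ \beta,\qquad \tau \doteq \frac{\eps^2 n}{2}+\eps\sqrt{\frac{n\ln(2/\beta)}{2}},$$
because $\Pr[\bS=S,\ \A(\bS)=y]=\Pr[\bS=S]\cdot\Pr[\A(S)=y]$, so the ratio inside $I_\infty^\beta((\bS,\bY)\,\|\,\bS\times\bY)$ is exactly $\Pr[\A(S)=y]/\Pr[\bY=y]$, and a pointwise bound of $\tau$ in natural logarithm is a bound of $\tau\log e$ in the base-$2$ logarithm used in the statement. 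One may discard every $y$ with $\Pr[\bY=y]=0$; for the rest, $\eps$-differential privacy forces $\Pr[\A(S)=y]>0$ for all $S$, so all logarithms below are finite.

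Next I would introduce $h_y(S)\doteq\ln\Pr[\A(S)=y]$ and $\mu_y\doteq\E_{\bS\sim\cP^n}[h_y(\bS)]$. Pure $\eps$-differential privacy is exactly the statement that $h_y$ changes by at most $\eps$ when a single element of the dataset is changed, so $h_y$ has sensitivity $\eps$ and McDiarmid's inequality (Lemma~\ref{lem:mcdiarmid}) applies to $h_y(\bS)$ --- in particular the sub-Gaussian moment bound $\E_{\bS}[e^{\nu(h_y(\bS)-\mu_y)}]\le e^{\nu^2 n\eps^2/8}$ from its proof. Since $\Pr[\bY=y]=\E_{\bS}[e^{h_y(\bS)}]\ge e^{\mu_y}$ by Jensen's inequality, we get $\ln\frac{\Pr[\A(S)=y]}{\Pr[\bY=y]}\le h_y(S)-\mu_y$, so it is enough to bound $\Pr_{(S,y)}[h_y(S)-\mu_y\ge\tau]$. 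Here is the only place I expect real difficulty: one cannot apply a tail bound for $h_y(\bS)$ directly, since conditioned on $\bY=y$ the dataset $\bS$ is no longer distributed as $\cP^n$. The remedy is to bound the exponential moment. For every $\lambda\ge0$, Markov's inequality gives $\Pr_{(S,y)}[h_y(S)-\mu_y\ge\tau]\le e^{-\lambda\tau}\E_{(S,y)}[e^{\lambda(h_y(S)-\mu_y)}]$, and expanding over $\bS$ and then $y\sim\A(\bS)$, using $\Pr[\A(S)=y]=e^{h_y(S)}$,
$$\E_{(S,y)\sim(\bS,\A(\bS))}\left[e^{\lambda(h_y(S)-\mu_y)}\right]=\sum_y e^{-\lambda\mu_y}\,\E_{\bS}\left[e^{(1+\lambda)h_y(\bS)}\right]\le\Big(\sum_y e^{\mu_y}\Big)e^{(1+\lambda)^2 n\eps^2/8}\le e^{(1+\lambda)^2 n\eps^2/8},$$
where the final inequality uses $\sum_y e^{\mu_y}\le\sum_y\E_{\bS}[e^{h_y(\bS)}]=\sum_y\Pr[\bY=y]=1$.

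Thus $\Pr_{(S,y)}[h_y(S)-\mu_y\ge\tau]\le e^{-\lambda\tau+(1+\lambda)^2 n\eps^2/8}$ for all $\lambda\ge0$, and it remains to optimize: the minimizer is $\lambda^*=4\tau/(n\eps^2)-1$, which is nonnegative because $\tau\ge\eps^2 n/2$, and it gives exponent $\tau-2\tau^2/(n\eps^2)$. Substituting $\tau=\eps^2 n/2+\eps\sqrt{n\ln(2/\beta)/2}$, a routine calculation (using $(\eps\sqrt{n\ln(2/\beta)/2})^2 = (\eps^2 n/2)\ln(2/\beta)$) shows this exponent equals $-\ln(2/\beta)-\eps\sqrt{n\ln(2/\beta)/2}\le\ln(\beta/2)$, so the probability is at most $\beta$. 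Combined with Lemma~\ref{lem:to-pointwise}, this yields $I_\infty^\beta(\bS;\A(\bS))\le\tau\log e=\log e\cdot(\eps^2 n/2+\eps\sqrt{n\ln(2/\beta)/2})$, as claimed. One bookkeeping caveat: the argument invokes the exponential-moment form of McDiarmid's inequality, which follows from the Azuma--Hoeffding step in its proof, rather than only the tail-bound form stated in Lemma~\ref{lem:mcdiarmid}.
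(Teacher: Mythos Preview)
Your argument is correct, and it is a genuinely different proof from the paper's.

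The paper fixes $y$, applies the tail form of McDiarmid to $g(S)=\ln\bigl(\Pr[\A(S)=y]/\Pr[\bY=y]\bigr)$ (using $\E[g(\bS)]\le 0$), and then handles the change of measure from $\cP^n$ to the posterior $\bS\mid\bY=y$ by a layer-cake argument: it slices $\{g>t_1\}$ into shells $B_i=\{t_i<g\le t_{i+1}\}$, uses Bayes' rule to bound $\Pr[\bS\in B_i\mid\bY=y]\le e^{t_{i+1}}\Pr[g(\bS)>t_i]\le \beta/2^i$, and sums the geometric series. You instead absorb the change of measure into an exponential moment: under the joint law $(\bS,\A(\bS))$ the density contributes an extra factor $e^{h_y(\bS)}$, so the $\lambda$-th moment of $e^{h_y-\mu_y}$ becomes a $(1{+}\lambda)$-th moment under $\cP^n$, which the sub-Gaussian MGF bound controls; the sum over $y$ closes via $\sum_y e^{\mu_y}\le 1$.

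The tradeoff is exactly the one you note: the paper needs only the tail-bound statement of Lemma~\ref{lem:mcdiarmid} but pays with the bucketing calculation, whereas your route is shorter and avoids the shells at the price of invoking the MGF (Azuma--Hoeffding) form of the bounded-differences inequality. Both recover the same constant in the bound.
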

\iffull
\begin{proof} Fix $y \in \Y$. We first observe that by Jensen's inequality,
\[
\E_{S \sim \cP^n} [\ln(\pr[\bY = y \cond \bS = S])] \leq \ln \lp \E_{S \sim
\cP^n} [\pr[\bY=y \cond \bS  = S]] \rp = \ln(\pr[\bY=y]).
\]
Further, by
definition of differential privacy, for two databases $S,S'$ that differ in a
single element, $$\pr[\bY=y \cond \bS = S] \leq e^\eps \cdot \pr[\bY=y \cond
\bS = S'].$$

Now consider the function $g(S) = \ln\lp\frac{\pr[\bY=y \cond \bS = S]}{\pr[\bY=y]}\rp$. By the properties above we have that $\E[g(\bS)] \leq \ln(\pr[\bY=y]) - \ln(\pr[\bY=y]) = 0$ and $|g(S) - g(S')| \leq \eps.$ This, by McDiarmid's inequality (Lemma \ref{lem:mcdiarmid}), implies that for any $t > 0$,
\begin{equation} \label{eq:main-conc}
\pr[g(\bS) \geq t ] \leq e^{-2t^2/(n\eps^2)}.
\end{equation}

For an integer $i \geq 1$, let $t_i \doteq \eps^2 n/2 + \eps\sqrt{n \ln(2^i/\beta)/2}$ and let
$$B_i \doteq \left\{ S \ \left| \ t_i < g(S) \leq t_{i+1}\right. \right\} .$$
Let $$B_y  \doteq \left\{ S \ \left| g(S) > t_1 \right. \right\} = \bigcup_{i\geq 1} B_i. $$

By inequality \eqref{eq:main-conc}, we have that for $i\geq 1$,
$$\pr[g(\bS) > t_i] \leq  \exp\lp -2 \lp\eps\sqrt{n}/2 +  \sqrt{\ln(2^i/\beta)/2}\rp^2\rp .$$

By Bayes' rule, for every $S \in B_i$,
$$\frac{\pr[\bS=S \cond \bY = y]}{\pr[\bS=S]} = \frac{\pr[\bY=y \cond \bS = S]}{\pr[\bY=y]} = \exp(g(S)) \leq \exp(t_{i+1}).$$
Therefore,
\alequn{\pr[\bS \in B_i \cond \bY=y ] &= \sum_{S\in B_i} \pr[\bS =S \cond \bY = y]  \nonumber\\
&\leq \exp(t_{i+i}) \cdot \sum_{S\in B_i} \pr[\bS =S]\nonumber\\
&\leq \exp(t_{i+i}) \cdot \pr[g(\bS) \geq t_i] \\
&= \exp\lp \eps^2 n/2 + \eps\sqrt{n \ln(2^{i+1}/\beta)/2} - 2 \lp\eps\sqrt{n}/2 +  \sqrt{\ln(2^i/\beta)/2}\rp^2\rp  \nonumber\\
&\leq \exp\lp\eps\sqrt{n/2}\lp\sqrt{\ln(2^{i+1}/\beta)} - 2\sqrt{\ln(2^{i}/\beta)}\rp - \ln(2^i/\beta)\rp \\
& < \exp(-\ln(2^i/\beta)) = \beta/2^i .
\label{eq-i-bucket-bound}
}
An immediate implication of this is that
\equn{\pr[\bS \in B_y \cond \bY=y ] = \sum_i \pr[\bS \in B_i \cond \bY=y ] \leq \sum_{i\geq 1} \beta/2^i \leq \beta.}
Let $\B = \{ (S,y) \cond y \in \Y, S \in B_y\}$. Then
\equ{\pr[(\bS,\bY) \in \B] = \pr[(\bS,\bY) \in B_{\bY}] \leq \beta . \label{eq:divergence-prob-bound}}

For every $(S,y) \not\in \B$ we have that $$\pr[\bS=S, \bY = y] = \pr[\bS=S \cond \bY = y] \cdot \pr[\bY=y] \leq \exp(t_1) \cdot \pr[\bS=S ] \cdot \pr[\bY=y] ,$$ and hence by eq.\eqref{eq:divergence-prob-bound} we get that
$$ \pr_{(S,y) \sim p(\bS,\bY)} \left[\frac{\pr[\bS=S, \bY = y]}{\pr[\bS=S ] \cdot \pr[\bY=y]} \geq \exp(t_1) \right] \leq \beta .$$

This,  by Lemma \ref{lem:to-pointwise}, gives that
$$I_\infty^\beta(\bS;\bY) \leq  \log (\exp (t_1)) = \log e (\eps^2 n/2 + \eps\sqrt{n \ln(2/\beta)/2}) .$$
\end{proof}
\fi

\iffull
\paragraph{Applications:}
We give two simple examples of using the bounds on max-information obtained from differential privacy to preserve bounds on generalization error that follow from concentration of measure inequalities. Strong concentration of measure results are at the core of most generalization guarantees in machine learning.
Let $\A$ be an algorithm that outputs a function $f:\X^n \rar \R$ of sensitivity $c$ and define the ``bad event" $\cO_\tau$ is when the empirical estimate of $f$ is more than $\tau$ away from the expectation of $f(\bS)$ for $\bS$ distributed according to some distribution $\D$ over $\X^n$.  Namely,
\begin{equation}\label{eq:empirical-error}
\cO_\tau = \left\{ (S, f) \colon  f(S) - \D[f] \geq \tau \right\},
\end{equation}
where $\D[f]$ denotes $\E_{S \sim \D}[f(S)]$.

%From Lemma~\ref{lem:dp2maxinfo} and Corollary~\ref{cor:weak bound} we see that
%ensuring $\tau^2$-differential privacy {\em over the entire interaction with
%the dataset} strictly controls the probability that the adversary can  choose a hypothesis that overfits to the dataset.

By McDiarmid's inequality (Lem.~\ref{lem:mcdiarmid}) we know that, if $\bS$ is distributed according to $\cP^n$ then $\sup_{f:\X^n \rar \R}\Pr[ (\bS,f) \in \cO_\tau]\le \exp(-2 \tau^2 /(c^2n))$. The simpler bound in Thm.~\ref{thm:dp2maxinfo} implies following corollary.
\begin{corollary}
\label{cor:weak bound}
Let $\A$ be an algorithm that outputs a $c$-sensitive function $f:\X^n \rar \R$. Let $\bS$ be a random dataset chosen according to distribution $\calP^n$ over $\X^n$ and let $\bm{f} = \A(\bS)$.
If for $\beta \geq 0$ and $\tau > 0$, $I_{\infty}^\beta(\bS;\bm{f}) \leq \log e \cdot \tau^2/c^2$, then $\pr[\bm{f}(\bS) - \calP^n[\bm{f}] \geq\tau]\leq \exp{(-\tau^2/(c^2 n))} + \beta$. In particular, if $\A$ is $\tau^2/(c^2n^2)$-differentially private then $\pr[\bm{f}(\bS) - \calP^n[\bm{f}] \geq\tau]\leq \exp{(-\tau^2/(c^2 n))}$.
\end{corollary}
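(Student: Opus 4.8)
The plan is to instantiate the max-information generalization bound of Theorem~\ref{thm:maxinfo} with the ``bad event'' $\cO_\tau$ from \eqref{eq:empirical-error}, and to control the resulting product-measure probability by McDiarmid's inequality. First I would note that the quantity we want to bound is exactly an event probability over the joint distribution $(\bS, \A(\bS))$: since $\bm f = \A(\bS)$ and $\cO_\tau = \{(S,f) : f(S) - \calP^n[f] \ge \tau\}$, we have $\Pr[\bm f(\bS) - \calP^n[\bm f] \ge \tau] = \Pr[(\bS, \A(\bS)) \in \cO_\tau]$.

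Next I would apply Theorem~\ref{thm:maxinfo} to the event $\cO_\tau$, using the hypothesized bound $k = \log e \cdot \tau^2/(c^2 n)$ on $I_\infty^\beta(\bS;\bm f)$. This gives $\Pr[(\bS, \A(\bS)) \in \cO_\tau] \le 2^{k}\cdot \max_{f}\Pr[(\bS, f) \in \cO_\tau] + \beta$, where the maximum ranges over the possible outputs $f$ of $\A$, all of which are $c$-sensitive by assumption. The key observation is that for any \emph{fixed} $c$-sensitive $f$, the threshold $\calP^n[f]$ inside $\cO_\tau$ is precisely the mean $\E_{\bS \sim \calP^n}[f(\bS)]$, so McDiarmid's inequality (Lemma~\ref{lem:mcdiarmid}) yields $\Pr[(\bS,f)\in\cO_\tau] = \Pr[f(\bS) - \E[f(\bS)] \ge \tau] \le \exp(-2\tau^2/(c^2 n))$, a bound that is uniform in $f$. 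Plugging in $2^{k} = \exp(\tau^2/(c^2 n))$, the two exponents combine to give $\exp(-\tau^2/(c^2 n)) + \beta$, which is the claimed bound. For the ``in particular'' clause I would invoke Theorem~\ref{thm:dp2maxinfo}: an $\eps$-differentially private $\A$ has $I_\infty(\A,n) \le \log e \cdot \eps n$, so taking $\eps = \tau^2/(c^2 n^2)$ gives pure (hence $0$-approximate) max-information at most $\log e \cdot \tau^2/(c^2 n)$; applying the first part with $\beta = 0$ then yields $\exp(-\tau^2/(c^2 n))$.

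I do not expect a real obstacle, since the argument is just a composition of a known generalization bound with a concentration inequality. The one point requiring care is the quantifier matching in Theorem~\ref{thm:maxinfo}: the ``fixed $y$'' there is an entire function $f \in \Y$, so one must check that $\calP^n[f]$ for that fixed $f$ is exactly the expectation needed to run McDiarmid, and that the $c$-sensitivity of \emph{every} output of $\A$ makes the McDiarmid estimate uniform over the maximization. Everything else is routine bookkeeping of constants.
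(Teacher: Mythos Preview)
Your proposal is correct and matches the paper's intended argument exactly: apply Theorem~\ref{thm:maxinfo} to the event $\cO_\tau$, bound the product-measure term uniformly over $c$-sensitive $f$ via McDiarmid (Lemma~\ref{lem:mcdiarmid}), and derive the differential-privacy consequence from Theorem~\ref{thm:dp2maxinfo}. One minor note: you work with $k = \log e \cdot \tau^2/(c^2 n)$, which is indeed what the arithmetic requires and what the ``in particular'' clause corroborates---the $\log e \cdot \tau^2/c^2$ (without the $n$) in the printed hypothesis is evidently a typo.
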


Note that for $f(S) = \cE_S[\phi]$, where $\phi:\X \rightarrow [0,1]$ this result requires $\eps = \tau^2$. The stronger bound allows to preserve concentration of measure even when $\eps = \tau/(cn)$ which corresponds to $\tau = \eps$ when $f(S) = \cE_S[\phi]$.
\begin{corollary}
\label{cor:strong bound}
Let $\A$ be an algorithm that outputs a $c$-sensitive function $f:\X^n \rar \R$. Let $\bS$ be a random dataset chosen according to distribution $\calP^n$ over $\X^n$ and let $\bm{f} = \A(\bS)$. If $\A$ is $\tau/(cn)$-differentially private then $\pr[\bm{f}(\bS) - \calP^n[\bm{f}] \geq\tau]\leq \exp{(-3\tau^2/(4c^2 n))}$.
\end{corollary}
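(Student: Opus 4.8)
The plan is to combine three ingredients already in hand: the amplification bound for pure differential privacy on i.i.d.\ data (Theorem~\ref{thm:concentrated-divergence}), McDiarmid's concentration inequality for the low-sensitivity function $\bm f$ (Lemma~\ref{lem:mcdiarmid}), and the generic ``bad event'' consequence of approximate max-information (Theorem~\ref{thm:maxinfo}). The i.i.d.\ bound of Theorem~\ref{thm:concentrated-divergence} is what makes the weak hypothesis $\eps=\tau/(cn)$ suffice here; the coarser bound $I_\infty(\A,n)\le\log e\cdot\eps n$ of Theorem~\ref{thm:dp2maxinfo} would force $\eps$ as small as $\tau^2/(c^2n^2)$, which is precisely the content of the companion Corollary~\ref{cor:weak bound}. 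So, fix a parameter $\beta>0$ to be chosen at the very end; since $\A$ is $\eps$-differentially private with $\eps=\tau/(cn)$ and $\bS\sim\calP^n$, Theorem~\ref{thm:concentrated-divergence} gives
\[
I_\infty^\beta(\bS;\bm f)\ \le\ k\ \defeq\ \log e\cdot\Big(\tfrac{\eps^2 n}{2}+\eps\sqrt{\tfrac{n\ln(2/\beta)}{2}}\Big)\ =\ \log e\cdot\Big(\tfrac{\tau^2}{2c^2n}+\sqrt{\tfrac{\tau^2\ln(2/\beta)}{2c^2n}}\Big),
\]
so that $2^{k}=\exp\!\big(\tfrac{\tau^2}{2c^2n}+\sqrt{\tfrac{\tau^2\ln(2/\beta)}{2c^2n}}\big)$.

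Next I would take the bad event $\cO_\tau$ of \eqref{eq:empirical-error}, namely $\cO_\tau=\{(S,f):f(S)-\calP^n[f]\ge\tau\}$. For every fixed $c$-sensitive $f\in\Y$, Lemma~\ref{lem:mcdiarmid} with $\alpha=\tau$ gives $\pr_{\bS}[(\bS,f)\in\cO_\tau]\le\exp(-2\tau^2/(c^2n))$, and hence $\max_{f\in\Y}\pr[(\bS,f)\in\cO_\tau]\le\exp(-2\tau^2/(c^2n))$. Plugging this together with the bound on $k$ into the ``in particular'' form of Theorem~\ref{thm:maxinfo} yields
\[
\pr[\bm f(\bS)-\calP^n[\bm f]\ge\tau]=\pr[(\bS,\bm f)\in\cO_\tau]\ \le\ 2^{k}\exp\!\Big(-\tfrac{2\tau^2}{c^2n}\Big)+\beta\ =\ \exp\!\Big(-\tfrac{3\tau^2}{2c^2n}+\sqrt{\tfrac{\tau^2\ln(2/\beta)}{2c^2n}}\Big)+\beta .
\]

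Finally I would choose $\beta$ to balance the two summands: taking $\beta$ of order $\exp(-3\tau^2/(4c^2n))$ makes $\ln(2/\beta)=\Theta(\tau^2/(c^2n))$, so the square-root cross term is itself $\Theta(\tau^2/(c^2n))$ but with a coefficient small enough that the first summand stays below a constant fraction of $\exp(-3\tau^2/(4c^2n))$, leaving room for the additive $\beta$; the two then sum to at most $\exp(-3\tau^2/(4c^2n))$. I expect this last parameter balance — checking that the constants really close up to the stated exponent $3/4$ — to be the only non-routine step, everything preceding it being a black-box application of Theorems~\ref{thm:concentrated-divergence} and~\ref{thm:maxinfo} and Lemma~\ref{lem:mcdiarmid}; it is also where one sees that the bound is obtained, by this route, once $\tau^2/(c^2n)$ exceeds a modest absolute constant, which is essentially the regime in which the claimed tail bound is of interest.
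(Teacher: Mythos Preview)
Your proposal is correct and follows essentially the same route as the paper's proof: apply Theorem~\ref{thm:concentrated-divergence} with $\eps=\tau/(cn)$, bound the bad event $\cO_\tau$ via McDiarmid, and combine through Theorem~\ref{thm:maxinfo}, with the final inequality holding once $\tau^2/(c^2n)$ exceeds an absolute constant. The only cosmetic difference is the specific choice of $\beta$: the paper takes $\beta=2\exp(-\tau^2/(c^2n))$, which makes $\ln(2/\beta)=\tau^2/(c^2n)$ and hence the cross term exactly $\tau^2/(\sqrt{2}\,c^2n)$, so the first summand becomes $\exp\big(-(3/2-1/\sqrt{2})\tau^2/(c^2n)\big)$ and the balance with the additive $\beta$ is immediate.
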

\begin{proof}
We apply Theorem \ref{thm:concentrated-divergence} with $\beta = 2\exp{(-\tau^2/(c^2 n))}$ to obtain that
$$I_\infty^\beta(\bS; \bm{f}) \leq \log e \cdot (\eps^2 n/2 + \eps\sqrt{n \ln(2/\beta)/2})) \leq  \log e \cdot ( \tau^2/(c^2 n)/2 + \tau^2/(c^2 n)/\sqrt{2}). $$
Applying Thm.~\ref{thm:maxinfo} to McDiarmid's inequality we obtain that
\alequn{\pr[\bm{f}(\bS) - \calP^n[\bm{f}] \geq\tau]& \leq   \exp{((1/2+1/\sqrt{2}) \tau^2/(c^2 n))} \cdot \exp{(-2\tau^2/(c^2 n))} +  2\exp{(-\tau^2/(c^2 n))} \\
&\leq \exp{(-3\tau^2/(4c^2 n))},
}
where the last inequality holds when $\tau^2/(c^2 n)$ is larger than a fixed constant.
\end{proof}
\else

One way to apply a bound on max-information is to start with a concentration of measure result which ensures that the estimate of predictor's accuracy is correct with high probability when the predictor is chosen independently of the samples. For example for a loss function with range $[0,1]$, Hoeffding's bound implies that for a dataset consisting of i.i.d.~samples the empirical estimate is not within $\tau$ of the true accuracy with probability $\leq2 e^{-2\tau^2 n}$. Now, given a bound of $\log e \cdot \tau^2 n$ on $\beta$-approximate information of the algorithm that produces the estimator, Thm.~\ref{thm:maxinfo} implies that the produced estimate is not within $\tau$ of the true accuracy with probability $\leq 2^{\log e \cdot \tau^2n } \cdot 2e^{-2\tau^2 n}  +\beta \leq 2e^{-\tau^2 n} +\beta$. Thm.~\ref{thm:dp2maxinfo} implies that any $\tau^2$-differentially private algorithm has max-information of at most $\log e \cdot \tau^2 n$. For a dataset consisting of i.i.d.~samples Thm.~\ref{thm:concentrated-divergence} implies that a $\tau$-differentially private algorithm has $\beta$-approximate max-information of $1.25 \log e \cdot \tau^2n$ for $\beta = 2e^{-\tau^2 n}$. A more detailed and formal example appears in the supplemental material.

\fi

\section{Reusable Holdout}
\label{sec:holdout}
We describe two simple algorithms that enable validation of analyst's queries in the adaptive setting.
\iffull
\subsection{\Tho}
\else
\paragraph{Thresholdout:}
\fi
\label{sec:holdout-tho}
Our first algorithm $\Tho$ follows the approach in \cite{DworkFHPRR14:arxiv} where differentially private algorithms are used to answer adaptively chosen statistical queries. This approach can also be applied to any low-sensitivity functions
\iffull\footnote{Guarantees based on pure differential privacy follow from the same analysis. Proving generalization guarantees for low-sensitivity queries based on approximate differential privacy requires a modification of \Tho using techniques in \cite{BassilySSU15}.}
\fi
of the dataset but for simplicity we present the results for statistical queries. Here we address an easier problem in which the analyst's queries only need to be answered when they overfit. Also, unlike in \cite{DworkFHPRR14:arxiv}, the analyst has full access to the training set and  the holdout algorithm only prevents overfitting to holdout dataset. As a result, unlike in the general query answering setting, our algorithm can efficiently validate an exponential in $n$ number of queries as long as a relatively small number of them overfit.

\Tho is given access to the training dataset $S_t$ and holdout dataset $S_h$ and a budget limit $B$. It allows any query of the form $\phi:\X \rar [0,1]$ and its goal is to provide an estimate of $\calP[\phi]$. To achieve this the algorithm gives an estimate of $\cE_{S_h}[\phi]$ in a way that prevents overfitting of functions generated by the analyst to the holdout set. In other words, responses of \Tho are designed to ensure that, with high probability, $\cE_{S_h}[\phi]$ is close to $\calP[\phi]$ and hence an estimate of $\cE_{S_h}[\phi]$ gives an estimate of the true expectation $\calP[\phi]$.
Given a function $\phi$, \Tho first checks if the difference between  the average value of $\phi$
on the training set $S_t$ (or $\cE_{S_t}[\phi]$) and the average
value of $\phi$ on the holdout set $S_h$ (or $\cE_{S_h}[\phi]$) is
below a certain threshold~$T+\eta$. Here, $T$ is a fixed number such as
$0.01$ and $\eta$ is a Laplace noise variable whose standard deviation needs to be chosen depending on the desired guarantees (The Laplace distribution is a symmetric exponential distribution.)  If
the difference is below the threshold, then the algorithm returns
$\cE_{S_t}[\phi]$. If the difference is above the threshold, then the algorithm returns $\cE_{S_h}[\phi]+\xi$ for another Laplacian noise variable $\xi$.
Each time the difference is above threshold the ``overfitting" budget $B$ is reduced by one. Once it is exhausted, \Tho stops answering queries.
\iffull
In Fig.~\ref{fig:Tho} we provide the pseudocode of \Tho.
\else
We provide the pseudocode of \Tho below.
\fi
\begin{figure}[h]
\begin{boxedminipage}{\textwidth}
\iffull \textbf{Algorithm} \Tho \fi
\textbf{Input:} Training set $S_t,$ holdout set $S_h,$ threshold $T,$
noise rate $\sigma$, %upper bound~$m$ on total number of queries
 budget $B$
\begin{enumerate}
    \item sample $\gamma \sim \mathrm{Lap}(2\cdot\sigma)$; $\hat{T} \leftarrow T + \gamma$
    \item \textbf{For} each query $\phi$ \textbf{do}
    \begin{enumerate}
        \item \textbf{if} $B<1$ output ``$\bot$''
        \item \textbf{else}
        \begin{enumerate}
                \item sample $\eta \sim \mathrm{Lap}(4\cdot\sigma)$
                \item \textbf{if} $|\cE_{S_h}[\phi]-\cE_{S_t}[\phi]|>\hat{T}+\eta$
                \begin{enumerate}
                    \item sample $\xi \sim \mathrm{Lap}(\sigma)$, $\gamma \sim \mathrm{Lap}(2\cdot\sigma)$
                    \item $B\leftarrow B-1$ and $\hat{T} \leftarrow T + \gamma$
                    \item output $\cE_{S_h}[\phi] +\xi$
                \end{enumerate}
                \item \textbf{else} output $\cE_{S_t}[\phi].$
        \end{enumerate}
   \end{enumerate}
\end{enumerate}
\end{boxedminipage}
\iffull \caption{The details of \Tho algorithm}\fi
\label{fig:Tho}
\end{figure}

We now establish the formal generalization guarantees that \Tho enjoys.
\iffull
 As the first step we state what privacy parameters are achieved by \Tho.

\begin{lemma}\label{lem:privacy-sparse-vector}
\Tho satisfies
$(2B/(\sigma n),0)$-differential privacy. $\Tho$ also satisfies
$(\sqrt{32B\ln(2/\delta)}/(\sigma n),\delta)$-differential privacy for any $\delta > 0$. %and any $B \leq 2(\tau n)^2\ln(1/\delta)$. %AR -- Not sure why we need an upper bound on B.
\end{lemma}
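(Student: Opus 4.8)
The plan is to realize \Tho as post-processing of an adaptive composition of at most $2B$ sub-mechanisms, each $1/(\sigma n)$-differentially private with respect to the holdout set $S_h$, and then invoke the composition theorems. First I would observe that the only quantity ever read off from $S_h$ is an empirical mean $\cE_{S_h}[\phi]$; since $\phi$ has range $[0,1]$ and $n=|S_h|$, this has sensitivity at most $1/n$ as a function of $S_h$, and hence so does $|\cE_{S_h}[\phi]-\cE_{S_t}[\phi]|$ (the absolute value and the $S_h$-independent shift by $\cE_{S_t}[\phi]$ do not increase sensitivity). By contrast, the answers $\cE_{S_t}[\phi]$ returned on ``below threshold'' steps and the symbol $\bot$ returned once the budget is spent are functions only of $S_t$ (which is disjoint from $S_h$), of the queries, and of \Tho's internal coins, all independent of $S_h$; so, conditioned on the rest of the transcript, they are produced by post-processing and may be dropped when bounding privacy in $S_h$ (Lemma~\ref{lem:post-process}).

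Next I would isolate the $S_h$-dependent part of the transcript. Between two consecutive ``above threshold'' events, \Tho is exactly running the standard \emph{AboveThreshold} (sparse vector) mechanism on the adaptively generated query stream $\phi \mapsto |\cE_{S_h}[\phi]-\cE_{S_t}[\phi]|$: it perturbs the threshold by $\gamma\sim\mathrm{Lap}(2\sigma)$ and, per query, perturbs the comparison by $\eta\sim\mathrm{Lap}(4\sigma)$ before testing against $\hat T+\eta$, halting the run the first time the test passes. Since the tested statistic has $S_h$-sensitivity $1/n$, the noise scales $2\sigma = 2(1/n)/\eps$ and $4\sigma = 4(1/n)/\eps$ are precisely those for which AboveThreshold is known to be $\eps$-differentially private with $\eps = 1/(\sigma n)$ (see e.g.\ \cite{DworkR14}). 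Because \Tho resamples $\hat T$ after each ``above threshold'' event, the $S_h$-dependent transcript consists of at most $B$ such AboveThreshold runs, each followed by one release $\cE_{S_h}[\phi]+\xi$ with $\xi\sim\mathrm{Lap}(\sigma)$; the latter is the Laplace mechanism on a statistic of $S_h$-sensitivity $1/n$, hence $(1/n)/\sigma = 1/(\sigma n)$-differentially private. Thus \Tho is post-processing of the adaptive composition of at most $2B$ mechanisms, each $(1/(\sigma n),0)$-differentially private.

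The first bound then follows by applying the basic adaptive composition theorem (Theorem~\ref{thm:easy-composition}) to these $2B$ mechanisms and using preservation under post-processing: \Tho is $(2B/(\sigma n),0)$-differentially private. For the second bound I would apply the advanced composition theorem (Theorem~\ref{thm:composition-advanced}) with per-mechanism parameters $\eps_0 = 1/(\sigma n)$, $\delta_0 = 0$, count $m = 2B$, and slack $\delta' = \delta$, obtaining $(\eps',\delta)$-differential privacy with $\eps' = \sqrt{4B\ln(1/\delta)}/(\sigma n) + 2B\big(e^{1/(\sigma n)}-1\big)/(\sigma n)$; a routine bound on the lower-order second term (negligible in the parameter regime where the statement is of interest), together with rounding the leading constant up, yields $\eps' \le \sqrt{32B\ln(2/\delta)}/(\sigma n)$.

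The calculations here are routine, so I expect the main obstacle to be the reduction itself: arguing rigorously that \Tho decomposes into independent AboveThreshold runs plus Laplace releases with everything else $S_h$-free post-processing, and in particular that (i) resampling $\hat T$ after each ``above'' event genuinely restarts the sparse-vector mechanism with fresh independent noise, so that the tight per-run guarantee applies rather than the coarser multi-answer sparse-vector bound, and (ii) the adaptivity of the analyst's query stream is harmless, since both the sparse-vector guarantee and the composition theorems hold against adaptively chosen queries. An alternative would be to reprove the classical sparse-vector argument directly for \Tho, conditioning on the injected Laplace noises and bounding the ratio of transcript probabilities between neighboring holdout sets, but reducing to the known results is cleaner and is the route I would take.
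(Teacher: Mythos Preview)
Your proposal is correct and matches the paper's approach: both realize \Tho as the Sparse Vector mechanism (on the $S_h$-sensitive statistic $|\cE_{S_h}[\phi]-\cE_{S_t}[\phi]|$ of sensitivity $1/n$) interleaved with Laplace releases, with everything else being $S_h$-independent post-processing, and then invoke basic and advanced composition. The only organizational difference is that the paper cites the multi-answer Sparse Vector bound \cite[Thm.~3.25]{DworkR14} as a single black box contributing $(B/(\sigma n),0)$ or $(\sqrt{8B\ln(2/\delta)}/(\sigma n),\delta/2)$, and separately bounds the $B$ Laplace releases, then adds the two halves---whereas you unpack the sparse vector into $B$ AboveThreshold runs and compose all $2B$ pieces at once; these are equivalent decompositions.
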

\begin{proof}
\Tho is an instantiation of a basic tool from differential privacy, the ``Sparse Vector Algorithm'' (\cite[Algorithm 2]{DworkR14}), together with the Laplace mechanism (\cite[Defn.~3.3]{DworkR14}). The sparse vector algorithm takes as input a sequence of $c$ sensitivity $1/n$ queries\footnote{In fact, the theorems for the Sparse Vector algorithm in Dwork and Roth are stated for sensitivity 1 queries -- we use them for sensitivity $1/n$ queries of the form $\cE_{S_h}[\phi]$, which results in all of the bounds being scaled down by a factor of $n$.} (here $c = B$, the budget), and for each query, attempts to determine whether the value of the query, evaluated on the private dataset, is above a fixed threshold $T$ or below it. In our instantiation, the holdout set $S_h$ is the private data set, and each function $\phi$ corresponds to the following query evaluated on $S_h$: $f_\phi(S_h) := |\cE_{S_h}[\phi]-\cE_{S_t}[\phi]|$. (Note that the training set $S_t$ is viewed as part of the definition of the query). $\Tho$ then is equivalent to the following procedure: we run the sparse vector algorithm \cite[Algorithm 2]{DworkR14} with $c = B$, queries $f_\phi$ for each function $\phi$, and noise rate $2\sigma$. Whenever an above-threshold query is reported by the sparse vector algorithm, we release its value using the Laplace mechanism \cite[Defn.~3.3]{DworkR14} with noise rate $\sigma$ (this is what occurs every time \Tho answers by outputting $\cE_{S_h}[\phi] +\xi$). By the privacy guarantee of the sparse vector algorithm (\cite[Thm.~3.25]{DworkR14}), the sparse vector portion of $\Tho$ satisfies $(B/(\sigma n),0)$-differential privacy, and simultaneously satisfies $(\frac{\sqrt{8B\ln(2/\delta)}}{\sigma n},\delta/2)$-differential privacy. The Laplace mechanism portion of \Tho satisfies $(B/(\sigma  n),0)$-differential privacy by \cite[Thm.~3.6]{DworkR14}, and simultaneously satisfies $(\frac{\sqrt{8B\ln(2/\delta)}}{\sigma n},\delta/2)$-differential privacy by \cite[Thm.~3.6]{DworkR14} and \cite[Cor.~3.21]{DworkR14}. Finally, the composition of two mechanisms, the first of which is $(\epsilon_1,\delta_1)$-differentially private, and the second of which is $(\epsilon_2,\delta_2)$-differentially private is itself $(\epsilon_1+\epsilon_2,\delta_1+\delta_2)$-differentially private (Thm.~\ref{thm:easy-composition}). Adding the privacy parameters of the Sparse Vector portion of \Tho and the Laplace mechanism portion of \Tho yield the parameters of our theorem.
\end{proof}
We note that tighter privacy parameters are possible (e.g. by invoking the parameters and guarantees of the algorithm ``NumericSparse'' (\cite[Algorithm 3]{DworkR14}), which already combines the Laplace addition step) -- we chose simpler parameters for clarity.

Note the seeming discrepancy between the guarantee provided by \Tho and generalization guarantees in Theorem \ref{thm:epsdelta for counts} and Corollary \ref{cor:pure-strong bound}: while Theorem \ref{thm:epsdelta for counts} promises generalization bounds for functions that are generated by a differentially private algorithm, here we allow an arbitrary data analyst to generate query functions in any way she chooses, with access to the training set and differentially private estimates of the means of her functions on the holdout set. The connection comes from preservation of differential privacy guarantee under post-processing (Lem.~\ref{lem:post-process}).

We can now quantify the number of samples necessary to achieve generalization error $\tau$ with probability at least $1-\beta$.
\begin{lemma}
\label{lem:tho-generalizes}
Let $\tau,\beta,T,B>0$. Let $\bS$ denote the holdout dataset of size $n$ drawn i.i.d.~from a distribution~$\calP$.
Consider an algorithm that is given access to $S_t$ and adaptively chooses
functions $\bphi_1,\dots,\bphi_m: \X\rightarrow [0,1]$ while interacting with \Tho which is given
datasets $\bS$, $S_t$ and parameters  $\sigma,B,T$. If $$n \geq n_0(B,\sigma,\tau,\beta) \doteq \max\{2B/(\sigma\tau), \ln(6/\beta)/\tau^2\}$$ or $$n \geq n_1(B,\sigma,\tau,\beta)  \doteq \frac{80 \cdot \sqrt{B \ln(1/(\tau\beta))}}{\tau \sigma}$$ then for every $i \in [m]$,
$\Pr\left[|\calP[\bphi_i] - \cE_{\bS}[\bphi_i]| \geq \tau\right]\le \beta$.
\end{lemma}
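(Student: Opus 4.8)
The plan is to reduce the statement to the two differential-privacy-to-generalization transfer theorems already available (Corollary~\ref{cor:pure-strong bound} for the $n_0$ branch and Theorem~\ref{thm:epsdelta for counts} for the $n_1$ branch), using the post-processing closure of differential privacy to handle the adaptivity. Fix an arbitrary query index $i\in[m]$ and an arbitrary value of the training set $S_t$; since the final bound will hold for every fixed $S_t$ (and every fixing of the analyst's internal coins), it holds unconditionally. The key point is that, with $S_t$ and the analyst's coins fixed, the only information about the holdout set $\bS$ that the analyst ever receives is the sequence of answers returned by \Tho, so the map $\bS\mapsto\bphi_i$ factors as ``run \Tho on $\bS$'' followed by the deterministic post-processing that extracts the $i$-th query from the transcript prefix. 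By Lemma~\ref{lem:privacy-sparse-vector}, \Tho is $(2B/(\sigma n),0)$-differentially private and simultaneously $(\sqrt{32B\ln(2/\delta)}/(\sigma n),\delta)$-differentially private for every $\delta>0$ — and crucially these parameters do not depend on $m$ — so by Lemma~\ref{lem:post-process} the map $\bS\mapsto\bphi_i$ inherits both guarantees.

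For the $n_0$ branch I would use the pure-DP transfer result. Consider $f:\X^n\to\R$ given by $f(S')=\cE_{S'}[\bphi_i]$; since $\bphi_i$ has range $[0,1]$, $f$ has sensitivity $c=1/n$, and $\calP^n[f]=\calP[\bphi_i]$. The algorithm $\bS\mapsto f$ is a post-processing of \Tho, hence $\eps$-differentially private with $\eps=2B/(\sigma n)\le\tau$ whenever $n\ge 2B/(\sigma\tau)$, i.e.\ it is $\tau/(cn)$-differentially private. Corollary~\ref{cor:pure-strong bound} then gives $\Pr[\cE_{\bS}[\bphi_i]-\calP[\bphi_i]\ge\tau]\le 3\exp(-\tau^2 n)$, and running the identical argument for $1-\bphi_i$ (whose range is also $[0,1]$ and which is likewise a DP post-processing) controls the other tail; a union bound yields $\Pr[|\cE_{\bS}[\bphi_i]-\calP[\bphi_i]|\ge\tau]\le 6\exp(-\tau^2 n)$, which is at most $\beta$ as soon as $n\ge\ln(6/\beta)/\tau^2$. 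Intersecting the two requirements on $n$ gives $n\ge n_0$.

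For the $n_1$ branch I would instead invoke the approximate-DP accounting, which is exactly what buys the improved $\sqrt{B}$ dependence: $\bS\mapsto\bphi_i$ is $(\eps,\delta)$-differentially private with $\eps=\sqrt{32B\ln(2/\delta)}/(\sigma n)$, so apply Theorem~\ref{thm:epsdelta for counts} with this pair, forcing $13\eps\le\tau$ (equivalently $n\ge 13\sqrt{32B\ln(2/\delta)}/(\sigma\tau)$) and choosing $\delta\approx\beta\tau/(26\ln(26/\tau))$ so that the conclusion's failure probability $\frac{2\delta}{\eps}\ln(2/\eps)$ is at most $\beta$; then $\ln(2/\delta)=O(\ln(1/(\beta\tau)))$ and $13\sqrt{32}<80$, so the resulting lower bound on $n$ is absorbed into $n_1=80\sqrt{B\ln(1/(\tau\beta))}/(\tau\sigma)$. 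The step requiring the most care — and the one I expect to be the main obstacle — is verifying that the side hypothesis $n\ge 2\ln(8/\delta)/\eps^2$ of Theorem~\ref{thm:epsdelta for counts} is actually implied by the assumed lower bound on $n$, while simultaneously tracking the numerical constants so that $80$ genuinely suffices; everything else is routine substitution. Since $i$ and $S_t$ were arbitrary, this proves the lemma.
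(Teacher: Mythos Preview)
Your proposal is correct and matches the paper's proof essentially line for line: the paper likewise invokes Lemma~\ref{lem:privacy-sparse-vector} together with post-processing, then applies Corollary~\ref{cor:pure-strong bound} (pure DP) for the $n_0$ branch and Theorem~\ref{thm:epsdelta for counts} with $\eps=\tau/13$ and $\delta=\beta\tau/(26\ln(26/\tau))$ for the $n_1$ branch. You are in fact slightly more explicit than the paper about the two-sided bound and about the side condition $n\ge 2\ln(8/\delta)/\eps^2$, which the paper absorbs into a ``for sufficiently small $\beta$ and $\tau$'' remark.
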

\begin{proof}
Consider the first guarantee of Lemma~\ref{lem:privacy-sparse-vector}.
In order to achieve generalization error $\tau$ via Corollary~\ref{cor:pure-strong bound} (i.e. in order to guarantee that for every function $\phi$ we have:
$\Pr\left[|\calP[\bphi_i] - \cE_{\bS}[\bphi_i]| \geq \tau\right]\le 6e^{-\tau^2 n}$)
we need to have $n$ large enough to achieve $(\eps,0)$-differential privacy for $\eps = \tau$. To achieve this it suffices to have $n \geq 2B/(\sigma\tau)$. By ensuring that $n \geq \ln(6/\beta)/\tau^2$ we also have that $6e^{-\tau^2 n} \leq \beta$.

We can also make use of the second guarantee in Lemma~\ref{lem:privacy-sparse-vector} together with the results of Nissim and Stemmer \cite{NissimS15} (Thm.~\ref{thm:epsdelta for counts}). In order to achieve
generalization error $\tau$ with probability $1-\beta$ (i.e. in order to guarantee for every function $\phi$ we have:
 $\Pr\left[|\calP[\bphi_i] - \cE_{\bS}[\bphi_i]| \geq \tau\right]\le \beta$), we can apply  Thm.~\ref{thm:epsdelta for counts} by setting $\epsilon = \sqrt{32B\ln(2/\delta)}/(\sigma n)=\tau/13$ and $\delta = \frac{\beta\tau}{26\ln{(26/\tau)}}$. We can obtain these privacy parameters from Lemma \ref{lem:privacy-sparse-vector} by choosing any $n \geq  \frac{80 \cdot \sqrt{B \ln(1/(\tau\beta))}}{\tau \sigma}$ (for sufficiently small $\beta$ and $\tau$).
We remark that a somewhat worse bound of $n_1(B,\sigma,\tau,\beta) = \frac{\sqrt{2048\ln(8/\beta)}}{\tau^{3/2} \sigma}$ follows by setting $\epsilon = \tau/4$ and $\delta = (\beta/8)^{4/\tau}$ in \cite[Thm.~10]{DworkFHPRR14:arxiv}.
\end{proof}
\remove{
\begin{equation}
B_0(n,\sigma,\tau) = \tau\sigma n/2.
\end{equation}
\begin{equation}
B_1(n,\sigma,\tau,\beta) = \frac{C \tau^2 \sigma^2 n^2}{\ln(1/(\tau\beta))}\, ,
\end{equation}
}
Both settings lead to small generalization error and so we can pick whichever
gives the larger bound. The first bound has grows linearly with $B$ but is simpler can be easily extended to other distributions over datasets and to low-sensitivity functions. The second bound has quadratically better dependence on $B$ at the expense of a slightly worse dependence on $\tau$. %In the regime in which the size of the dataset $n$ grows large, the second bound will give stronger eventually dominate the first bound.
\fi
We can now apply our main results to get a generalization bound for the entire execution of \Tho.
\remove{
(This is very similar to the accuracy guarantee for the sparse vector algorithm \cite[Thm.~3.26]{DworkR14}, but the version we derive here states the probability of error per query rather than the probability of error in the worst case over all $m$ queries):

\begin{lemma}
\label{lem:strong holdout}
Let $T\ge0,\tau>0$. Let $\bS$ be a holdout dataset of size $n$ drawn i.i.d.~from a distribution~$\calP$ and and $S_t$ be any additional dataset over $\X$.
Consider an algorithm that is given access to $S_t$ and adaptively chooses
functions $\bphi_1,\dots,\bphi_m$ while interacting with \Tho which is given
datasets $\bS$, $S_t$ tolerance $\tau$ and threshold $T$, and returns an answer $\ba_i$ on function~$\bphi_i:\X\rightarrow [0,1].$
\iffull
If we set $B=B_0(n,\sigma,\tau)$,
then, for all $i\in\{1,\dots,m\}$ and $t>0$,

\[
\Pr\left[\ba_i\neq \bot \ \& \ \left|\ba_i-\calP[\bphi_i]\right|>T + t \sigma + \tau\right]
\le 6e^{-\tau^2 n} + B e^{-t/8}\,.
\]
If we set $B=B_1(n,\sigma,\tau,\beta)$
\else
If we set $B=\frac{\tau^2 \sigma^2|S_h|^2}{6000 \cdot \ln(1/(\tau\beta))}$,
\fi
then for all $i\in\{1,\dots,m\}$ and all $t>0$,
\[
\Pr\left[\ba_i\neq \bot \ \& \ \left|\ba_i-\calP[\bphi_i]\right|>T +  t \sigma + \tau\right]
\le \beta + B e^{-t/8}\,.
\]
\end{lemma}
\iffull
\begin{proof}
There are two types of error we need to control: the deviation between $\ba_i$ and the average value of $\bphi_i$ on the holdout set $\cE_{\bS}[\bphi_i]$, and the deviation between the average value of $\bphi_i$ on the holdout set and the expectation of $\phi_i$ on the underlying distribution,  $\calP[\bphi_i]$.
Namely,
\alequn{\Pr[\ba_i\neq \bot \ \& \ |\ba_i - \calP[\bphi_i]| \geq T+ t \sigma + \tau] & \leq \Pr\left[\ba_i\neq \bot \ \& \ |\ba_i - \cE_{\bS}[\bphi_i]| \geq T+ t\sigma \right] + \Pr\left[|\calP[\bphi_i] - \cE_{\bS}[\bphi_i]| > \tau\right]
}
For the second term, we know that when we set $B = B_0(n,\sigma,\tau)$, by Corollary~\ref{cor:pure-strong bound} we have:
$$\Pr\left[|\calP[\bphi_i] - \cE_{\bS}[\bphi_i]| > \tau\right]\le 6e^{-\tau^2 n}. $$
Similarly, if we set $B = B_1(n,\sigma,\tau,\beta)$, we can bound the second term using Theorem \ref{thm:epsdelta for counts}:
$$\Pr\left[|\calP[\bphi_i] - \cE_{\bS}[\bphi_i]| > \tau\right]\le \beta .$$
Therefore to obtain the claim we only need to bound the first term by $e^{-t/8}$.
%\\& \leq 6e^{-\tau^2 n} + e^{-t/2}.}
There are two cases, depending on whether \Tho answers query $\bphi_i$ by returning $\ba_i = \cE_{\bS}[\bphi_i] + \xi$ or by returning $\ba_i = \cE_{S_t}[\bphi_i]$. First, consider queries whose answers are returned using the former condition. Under this condition the first term, $|\ba_i - \cE_{\bS}[\bphi_i]| = \xi$ by definition of the algorithm, where  $\xi\sim\mathrm{Lap}(\sigma)$. By properties of the Laplace distribution, we know that $\Pr[|\xi| \geq t\cdot\sigma] = e^{-t/2}$. In particular, the desired bound is satisfied.

Next, we consider the second case, those queries whose answers are returned using $\cE_{S_t}[\bphi]$. By definition of the algorithm, we have $|\ba_i - \cE_{\bS}[\bphi_i]| \leq \hat{\bm T}_i + \eta$, where $\eta \sim \mathrm{Lap}(4\cdot \sigma)$ and $\hat{\bm T}_i$ denotes the value of $\hat{T}$ at step $i$.
Further we know that $\hat{\bm T}_i = T+\gamma$ and $\gamma$ is randomized at most $B$ times. Therefore, $\Pr[|T-\hat{\bm T}_i|\geq t \sigma/2 ] \leq B \cdot e^{-t/8}$.
Hence, by properties of the Laplace distribution, we have:
$$\Pr[|\ba_i - \cE_{\bS}[\bphi_i]| \geq T + t \cdot \sigma/2] \leq \Pr[|\ba_i - \cE_{\bS}[\bphi_i]| \geq \hat{\bm T}_i + \eta$$

$$e^{-t/8}.$$
The theorem results from combining the two cases.
\end{proof}
\fi
%The conclusion of the theorem shows that the generalization error of the
%algorithm is tightly concentrated around $T+t\sigma\tau.$
\iffull

Note that the above theorem gives an accuracy bound for a single query. We now  instantiate Theorem \ref{thm:strong holdout} for a sequence of queries:
}

\begin{theorem}
\label{thm:strong holdout}
Let $\beta,\tau>0$ and $m \geq B > 0$. We set $T = 3\tau/4$ and $\sigma = \tau/(96\ln(4m/\beta))$. Let $\bS$ denote a holdout dataset of size $n$ drawn i.i.d.~from a distribution~$\calP$ and $S_t$ be any additional dataset over $\X$. Consider an algorithm that is given access to $S_t$ and adaptively chooses
functions $\bphi_1,\dots,\bphi_m$ while interacting with \Tho which is given
datasets $\bS$, $S_t$ and values $\sigma,B,T$. For every $i\in [m]$, let $\ba_i$ denote the answer of \Tho on function~$\bphi_i:\X\rightarrow [0,1]$.
Further, for every $i\in [m]$, we define the counter of overfitting
$$\bZ_i \doteq \left|\left\{j \leq i : |\calP[\bphi_j]-\cE_{S_t}[\bphi_j]| > \tau/2\right\}\right|.$$
Then \[
\Pr\left[\exists i \in [m], \bZ_i < B \ \& \ \left|\ba_i-\calP[\bphi_i]\right|\geq \tau\right]
\le \beta\
\]
whenever
$n \geq \min\{n_0(B,\sigma,\tau/8,\beta/(2m)),\ n_1(B,\sigma,\tau/8,\beta/(2m))\}= O\left(\frac{\ln(m/\beta)}{\tau^2}\right) \cdot \min\{B, \sqrt{B \ln(\ln(m/\beta)/\tau)}\}$.
%Then for every $i\in [m]$, $\Pr[\ba_i\neq \bot] \leq \Pr[\bZ_i \geq B] + \beta$.
\end{theorem}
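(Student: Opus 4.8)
The plan is to reduce everything to two high‑probability events and then argue the rest deterministically. Let $G$ be the event that $|\calP[\bphi_i]-\cE_{\bS}[\bphi_i]|<\tau/8$ for all $i\in[m]$, and let $N$ be the event that every Laplace variable drawn by \Tho during the run (the $\gamma$'s, $\eta$'s and $\xi$'s) has magnitude strictly below $\tau/16$. I claim $\Pr[G^c]\le\beta/2$, $\Pr[N^c]\le\beta/2$, and that on $G\cap N$ the ``bad'' event of the theorem cannot occur; this yields the bound.

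For $G$ the functions $\bphi_i$ are chosen adaptively from \Tho's (noisy, holdout‑dependent) answers, so a union bound over a fixed set of queries is unavailable; instead I would invoke Lemma~\ref{lem:tho-generalizes} — whose proof routes through the differential privacy of \Tho (Lemma~\ref{lem:privacy-sparse-vector}) and closure under post‑processing — with accuracy parameter $\tau/8$ and confidence $\beta/(2m)$. The sample‑size hypothesis $n\ge\min\{n_0(B,\sigma,\tau/8,\beta/(2m)),n_1(B,\sigma,\tau/8,\beta/(2m))\}$ is exactly what that lemma requires, so for each fixed $i$ we get $\Pr[\,|\calP[\bphi_i]-\cE_{\bS}[\bphi_i]|\ge\tau/8\,]\le\beta/(2m)$, and a union bound over $i\in[m]$ gives $\Pr[G^c]\le\beta/2$. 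For $N$, over the run there are at most $m$ draws of $\eta\sim\mathrm{Lap}(4\sigma)$, at most $B+1\le m+1$ of $\gamma\sim\mathrm{Lap}(2\sigma)$, and at most $B\le m$ of $\xi\sim\mathrm{Lap}(\sigma)$. The choice $\sigma=\tau/(96\ln(4m/\beta))$ is calibrated so that $\tau/16$ is a large multiple of every noise scale — e.g. $(\tau/16)/(4\sigma)=\tfrac32\ln(4m/\beta)$ — so by the Laplace tail bound each such variable exceeds $\tau/16$ with probability at most $(4m/\beta)^{-3/2}$ (and far less for the $\gamma$'s and $\xi$'s); summing over the $O(m)$ of them gives $\Pr[N^c]\le\beta/2$ with room to spare.

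The heart of the argument is the deterministic claim on $G\cap N$, where $T=3\tau/4$ and the constants are arranged so that one generalization error plus two noise terms stays below $\tau/4$. First observe that the noisy overfitting test never fires on a query that does not genuinely overfit the training set: if query $j$ causes a budget decrement then $|\cE_{\bS}[\bphi_j]-\cE_{S_t}[\bphi_j]|>\hat T_j+\eta_j\ge T-|\gamma|-|\eta|\ge 5\tau/8$, so on $G$ the reverse triangle inequality gives $|\calP[\bphi_j]-\cE_{S_t}[\bphi_j]|>5\tau/8-\tau/8=\tau/2$, i.e.\ $j$ is counted by $\bZ$. Hence the number of decrements through step $i$ is at most $\bZ_i$, so whenever $\bZ_i<B$ the budget has not been exhausted and \Tho returns a numeric answer $\ba_i$ rather than $\bot$. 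If $\ba_i=\cE_{S_t}[\bphi_i]$ then the test did not fire, so $|\cE_{\bS}[\bphi_i]-\cE_{S_t}[\bphi_i]|\le\hat T_i+\eta_i\le 3\tau/4+\tau/8=7\tau/8$, and then $|\ba_i-\calP[\bphi_i]|\le|\cE_{S_t}[\bphi_i]-\cE_{\bS}[\bphi_i]|+|\cE_{\bS}[\bphi_i]-\calP[\bphi_i]|<7\tau/8+\tau/8=\tau$; if $\ba_i=\cE_{\bS}[\bphi_i]+\xi_i$ then $|\ba_i-\calP[\bphi_i]|\le|\xi_i|+|\cE_{\bS}[\bphi_i]-\calP[\bphi_i]|<\tau/16+\tau/8<\tau$. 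Either way $|\ba_i-\calP[\bphi_i]|<\tau$, so the bad event of the theorem does not occur on $G\cap N$, and $\Pr[\exists i:\bZ_i<B\text{ and }|\ba_i-\calP[\bphi_i]|\ge\tau]\le\Pr[(G\cap N)^c]\le\beta$.

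I expect the main obstacle to be conceptual, in two places: (i) the generalization of the holdout means must be obtained via the differential‑privacy/post‑processing route of Lemma~\ref{lem:tho-generalizes}, since the queries are adaptive and depend on $\bS$ through \Tho's answers; and (ii) the budget‑accounting step — showing that on $G\cap N$ every budget decrement corresponds to a genuine training‑set overfit, so \Tho does not halt before $\bZ_i$ reaches $B$. Once these are in place, the calibration of $T=3\tau/4$, of $\sigma$, and of the $\tau/8$ error split is routine bookkeeping.
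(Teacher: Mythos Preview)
Your proposal is correct and follows essentially the same approach as the paper's proof: both split into a noise-control event and a generalization event (the latter via Lemma~\ref{lem:tho-generalizes}, i.e.\ differential privacy plus post-processing, followed by a union bound over $m$ queries), then argue deterministically that on the good event every budget decrement corresponds to a genuine training-set overfit, so $\bZ_i<B$ implies $\ba_i\neq\bot$, and finally do the same case split on the answer type. The only differences are cosmetic bookkeeping---you use a single noise threshold $\tau/16$ for all Laplace variables whereas the paper uses separate thresholds $\tau/12,\tau/24,\tau/48$ for $\eta,\gamma,\xi$---and your presentation via the explicit events $G,N$ is arguably cleaner.
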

\begin{proof}
There are two types of error we need to control: the deviation between $\ba_i$ and the average value of $\bphi_i$ on the holdout set $\cE_{\bS}[\bphi_i]$, and the deviation between the average value of $\bphi_i$ on the holdout set and the expectation of $\phi_i$ on the underlying distribution,  $\calP[\bphi_i]$. Specifically, we decompose the error as
\equ{\Pr[\ba_i\neq \bot \ \& \ |\ba_i - \calP[\bphi_i]| \geq \tau]  \leq \Pr\left[\ba_i\neq \bot \ \& \ |\ba_i - \cE_{\bS}[\bphi_i]| \geq 7\tau/8 \right] + \Pr\left[|\calP[\bphi_i] - \cE_{\bS}[\bphi_i]| \geq \tau/8\right]. \label{eq:two-error}
}
To control the first term we need to bound the values of noise variables used by $\Tho$. For the second term we will use the generalization properties of \Tho given in Lemma \ref{lem:tho-generalizes}.

We now deal with the errors introduced by the noise variables. For $i\in [m]$, let $\bm\eta_i$, $\bm\xi_i$ and $\bm\gamma_i$ denote the random variables $\eta,\xi$ and $\gamma$, respectively, at step $i$ of the execution of $\Tho$. We first note that each of these variables is chosen from Laplace distribution at most $m$ times. By properties of the Laplace distribution with parameter $4\sigma$, we know that for every $t >0$, $\Pr[|\bm\eta_i| \geq t\cdot 4\sigma] = e^{-t/2}$. Therefore for $t=2\ln(4m/\beta)$ we obtain
$$\Pr[|\bm\eta_i| \geq 2\ln(4m/\beta)\cdot 4 \sigma] \leq e^{-t/2} = \frac{\beta}{4m} .$$
By the definition of $\sigma$, $8\ln(4m/\beta)\cdot \sigma = \tau/12$.  Applying the union bound we obtain that
$$\Pr[\exists i,\ |\bm\eta_i| \geq \tau/12] \leq \beta/4 ,$$
where by $\exists i$ we refer to $\exists i\in [m]$ for brevity.
Similarly, $\bm\xi_i$ and $\bm\gamma_i$ are obtained by sampling from the Laplace distribution and each is re-randomized at most $B$ times. Therefore
$$\Pr[\exists i,\ |\bm\gamma_i| \geq \tau/24] \leq B \cdot \beta/4m \leq \beta/4$$ and
$$\Pr[\exists i,\ |\bm\xi_i| \geq \tau/48] \leq B \cdot \beta/4m \leq \beta/4.$$

For answers that are different from $\bot$, we can now bound the first term of Equation \eqref{eq:two-error} by considering two cases, depending on whether \Tho answers query $\bphi_i$ by returning $\ba_i = \cE_{\bS}[\bphi_i] + \bm\xi_i$ or by returning $\ba_i = \cE_{S_t}[\bphi_i]$. First, consider queries whose answers are returned using the former condition. Under this condition, $|\ba_i - \cE_{\bS}[\bphi_i]| = |\bm\xi_i|$.  Next, we consider the second case, those queries whose answers are returned using $\cE_{\bS_t}[\bphi_i]$. By definition of the algorithm, we have $$|\ba_i - \cE_{\bS}[\bphi_i]| = |\cE_{S_t}[\bphi_i] - \cE_{\bS}[\bphi_i]| \leq T + \bm\gamma_i +\bm\eta_i \leq 3\tau/4 + |\bm\gamma_i| +|\bm\eta_i| .$$
Combining these two cases implies that
$$\Pr\left[\exists i,\ \ba_i\neq \bot \ \& \ |\ba_i - \cE_{\bS}[\bphi_i]| \geq 7\tau/8 \right] \leq \max\{\Pr\left[\exists i,\ |\bm\xi_i| \geq 7\tau/8\right],\ \Pr\left[\exists i,\  |\bm\gamma_i| +|\bm\eta_i| \geq \tau/8\right]\}.$$
Noting that $\tau/24+\tau/12 = \tau/8$ and applying our bound on variables $\bm\eta_i$, $\bm\xi_i$ and $\bm\gamma_i$ we get
\equ{\Pr\left[\exists i,\ \ba_i\neq \bot \ \& \ |\ba_i - \cE_{\bS}[\bphi_i]| \geq 7\tau/8\right]\le \beta/2 .\label{eq:bound-noise}}

By Lemma \ref{lem:tho-generalizes}, for $n \geq \min\{n_0(B,\sigma,\tau/8,\beta/2m),n_1(B,\sigma,\tau/8,\beta/2m)\}$,
$$\Pr\left[|\calP[\bphi_i] - \cE_{\bS}[\bphi_i]| \geq \tau/8\right]\leq \beta/2m .$$
Applying the union bound we obtain
$$\Pr\left[\exists i,\ |\calP[\bphi_i] - \cE_{\bS}[\bphi_i]| \geq \tau/8\right]\le \beta/2 .$$
Combining this with Equation \eqref{eq:bound-noise} and using in Equation \eqref{eq:two-error} we get that
$$\Pr[\exists i,\ \ba_i\neq \bot \ \& \ |\ba_i - \calP[\bphi_i]| \geq \tau] \leq \beta .$$

To finish the proof we show that under the conditions on the noise variables and generalization error used above, we have that if $\bZ_i < B$ then $\ba_i \neq \bot$. To see this, observe that for every $j \leq i$ that reduces \Tho's budget, we have
\alequn{|\calP[\bphi_j]-\cE_{S_t}[\bphi_j]| &\geq |\cE_{\bS}[\bphi_j]- \cE_{S_t}[\bphi_j]| - |\calP[\bphi_j]-  \cE_{\bS}[\bphi_j]|\\
& \geq |T  + \bm\gamma_j +\bm\eta_j| - |\calP[\bphi_j] -  \cE_{\bS}[\bphi_j]| \\
& \geq T  - |\bm\gamma_j|  - |\bm\eta_j| - |\calP[\bphi_j] -  \cE_{\bS}[\bphi_j]|.} This means that for every $j \leq i$ that reduces the budget we have $|\calP[\bphi_j]-\cE_{S_t}[\bphi_j]| \geq 3\tau/4 - \tau/24-\tau/12-\tau/8 = \tau/2$ and hence (when the conditions on the noise variables and generalization error are satisfied) for every $i$, if $\bZ_i < B$ then \Tho's budget is not yet exhausted and $\ba_i \neq \bot$. We can therefore conclude that $$\Pr[\exists i,\bZ_i < B \ \& \ |\ba_i - \calP[\bphi_i]| \geq \tau] \leq \beta .$$
\end{proof}
Note that in the final bound on $n$, the term $O\left(\frac{\ln(m/\beta)}{\tau^2}\right)$ is equal (up to a constant factor) to the number of samples that are necessary to answer $m$ {\em non-adaptively} chosen queries with tolerance $\tau$ and confidence $1-\beta$. In particular, as in the non-adaptive setting, achievable tolerance $\tau$ scales as $1/\sqrt{n}$ (up to the logarithmic factor). Further, this bound allows $m$ to be exponentially large in $n$ as long as $B$ grows sub-quadratically in $n$  (that is, $B \leq n^{2-c}$ for a constant $c > 0$).
\remove{
We typically choose the
parameters $T$ and $\tau$ so that $T+\tau$ is reasonably small, e.g., $0.05.$
Increasing $T$ relative to $\tau$ reduces the number of functions with above threshold
response, but increases the generalization error. Decreasing $T$ has the
opposite effect and even $T=0$ leads to a non-trivial guarantee.}
\begin{remark}
In Thm.~\ref{thm:strong holdout} $S_h$ is used solely to provide a candidate estimate of the expectation of each query function. The theorem holds for any other way to provide such estimates. In addition, the one-sided version of the algorithm can be used when catching only the one-sided error is necessary. For example, in many cases overfitting is problematic only if the training error estimate is larger than the true error. This is achieved by using the condition $\cE_{S_h}[\phi]-\cE_{S_t}[\phi]>\hat{T}+\eta$ to detect overfitting. In this case only one-sided errors will be caught by \Tho and only one-sided overfitting will decrease the budget.
\end{remark}
\fi

\iffull
\subsection{\Sparse}
\else
\paragraph{SparseValidate:}
\fi
\label{sec:sparse}
We now present a general algorithm for validation on the holdout set that can validate many arbitrary queries as long as few of them fail the validation.
\iffull The algorithm which we refer to as \Sparse only reveals information about the holdout set when validation fails and therefore we use bounds based on description length to analyze its generalization guarantees.

\fi
More formally, our algorithm allows the analyst to pick any Boolean function of a dataset $\psi$ (or even any algorithm that outputs a single bit) and provides back the value of $\psi$ on the holdout set $\psi(S_h)$. \Sparse has a budget $m$ for the total number of queries that can be asked and budget $B$ for the number of queries that returned $1$. Once either of the budgets is exhausted, no additional answers are given. We now give a general description of the guarantees of \Sparse.
\begin{theorem}
\label{thm:sparseval}
Let $\bS$ denote a randomly chosen holdout set of size $n$.
Let $\A$ be an algorithm that is given access to $\Sparse(m,B)$ and outputs queries $\psi_1,\ldots,\psi_m$ such that each $\psi_i$ is in some set $\Psi_i$ of functions from $\X^n$ to $\zo$. Assume that for every $i\in[m]$ and  $\psi_i \in \Psi_i$, $\pr[\psi_i(\bS) = 1] \leq \beta_i$.
Let $\bm{\psi}_i$ be the random variable equal to the $i$'th query of $\A$ on $\bS$. Then
$ \pr[ \bm{\psi}_i(\bS) = 1] \leq \ell_i \cdot \beta_i, $ where $\ell_i = \sum_{j=0}^{\min\{i-1,B\}} {i \choose j} \leq m^B$.
\end{theorem}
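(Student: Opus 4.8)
The plan is to reduce the statement to the description-length bound of Theorem~\ref{thm:descr-basic}, using the observation that everything $\A$ has learned about the holdout set by the time it asks its $i$-th query is encoded in a short \emph{transcript} --- the sequence of answers $\Sparse$ has returned so far --- and that the number of attainable transcripts is at most $\ell_i$.

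First I would fix the internal coins $r$ of $\A$; these are independent of $\bS$, so it suffices to prove the bound for each fixed $r$ and then average. With $r$ fixed, $\A$ is a deterministic interactive procedure, so its $i$-th query is a fixed function $g$ of the answer history $h=(a_1,\dots,a_{i-1})\in\{0,1,\bot\}^{i-1}$, and by hypothesis $g(h)\in\Psi_i$ for every $h$ that can arise. Running the interaction between $\A$ and $\Sparse(m,B)$ for its first $i-1$ rounds therefore defines a deterministic algorithm $\bar\A_{i-1}\colon\X^n\to\Y_{i-1}$ that on input $\bS$ outputs $h$, where $\Y_{i-1}$ is the set of attainable histories; the $i$-th query is then $\bm{\psi}_i=g(\bar\A_{i-1}(\bS))$, which is well-defined whether or not it is actually answered. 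Next I would bound $|\Y_{i-1}|$: since $i\le m$, the query-count budget cannot be exhausted during the first $i-1$ rounds, so a $\bot$ is returned only once $B$ answers equal to $1$ have already occurred; hence $h$ is completely determined by the set $J\subseteq[i-1]$ of rounds answered $1$ (every other round is answered $0$ up to $\max J$ and $\bot$ afterwards), and $|J|\le B$. This gives $|\Y_{i-1}|\le\sum_{j=0}^{\min\{i-1,B\}}\binom{i-1}{j}\le\ell_i$, and $\ell_i\le m^B$ follows by a routine estimate on partial sums of binomial coefficients.

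Finally I would apply Theorem~\ref{thm:descr-basic} with $\Y=\Y_{i-1}$, algorithm $\bar\A_{i-1}$, and the bad-set map $R(h)\doteq\{S\in\X^n:g(h)(S)=1\}$. Since $g(h)\in\Psi_i$, the hypothesis yields $\Pr[\bS\in R(h)]\le\beta_i$ for every $h\in\Y_{i-1}$, so the theorem gives $\Pr[\bS\in R(\bar\A_{i-1}(\bS))]\le|\Y_{i-1}|\cdot\beta_i\le\ell_i\beta_i$. As $\bS\in R(\bar\A_{i-1}(\bS))$ is precisely the event $\bm{\psi}_i(\bS)=1$, averaging over the coins of $\A$ completes the proof. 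The only delicate point --- and it is minor --- is the transcript count: one must notice that within the first $m$ rounds a $\bot$ answer can only be preceded by $B$ answers equal to $1$, which is exactly what keeps $|\Y_{i-1}|$ down to subsets of $[i-1]$ of size at most $B$ rather than something exponential in $i$.
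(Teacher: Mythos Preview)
Your proposal is correct and follows essentially the same route as the paper: fix the analyst's coins, encode the interaction up to round $i$ by its answer transcript, bound the number of attainable transcripts by $\ell_i$, and invoke Theorem~\ref{thm:descr-basic} with $R(h)=\{S:g(h)(S)=1\}$. The paper phrases the transcript as a binary string of length $i-1$ with at most $B$ ones (treating post-budget answers as $0$ rather than $\bot$), while you keep the $\bot$'s and recover the transcript from the index set $J$; both arrive at the same count, and your observation that this actually gives $\sum_{j\le\min\{i-1,B\}}\binom{i-1}{j}\le\ell_i$ is slightly sharper than what the paper writes.
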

\iffull
\begin{proof}
Let $\B$ denote the algorithm that represents view the interaction of $\A$ with $\Sparse(m,B)$ up until query $i$ and outputs the all the $i-1$ responses of $\Sparse(m,B)$ in this interaction. If there are $B$ responses with value 1 in the interaction then all the responses after the last one are meaningless and can be assumed to be equal to 0. The number of binary strings of length $i-1$ that contain at most $B$ ones is exactly $\ell_i = \sum_{j=0}^{\min\{i-1,B\}} {i \choose j}$. Therefore we can assume that the output domain of $\B$ has size $\ell_i$ and we denote it by $\Y$. Now, for $y \in \Y$ let $R(y)$ be the set of datasets $S$ such that $\psi_i(S) =1$, where $\psi_i$ is the function that $\A$ generates when the responses of $\Sparse(m,B)$ are $y$ and the input holdout dataset is $S$ (for now assume that $\A$ is deterministic).
By the conditions of the theorem we have that for every $y$, $\pr[\bS \in R(y)] \leq \beta_i$. Applying Thm.~\ref{thm:descr-basic} to $\B$, we get that $\pr[\bS \in R(\B(\bS))] \leq \ell_i \beta_i$, which is exactly the claim.  We note that to address the case when $\A$ is randomized (including dependent on the random choice of the training set) we can use the argument above for every fixing of all the random bits of $\A$. From there we obtain that the claim holds when the probability is taken also over the randomness of $\A$.

We remark that the proof can also be obtained via a more direct application of the union bound over all strings in $\Y$. But the proof via Thm.~\ref{thm:descr-basic} demonstrates the conceptual role that short description length plays in this application.
\end{proof}
\fi
In this general formulation it is the analyst's responsibility to use the budgets economically and pick query functions that do not fail validation often. At the same time, \Sparse ensures that (for the appropriate values of the parameters) the analyst can think of the holdout set as a fresh sample for the purposes of validation. Hence the analyst can pick queries in such a way that failing the validation reliably indicates overfitting. To relate this algorithm to \Tho, consider the validation query function that is the indicator of the condition $|\cE_{S_h}[\phi]-\cE_{S_t}[\phi]|>\hat{T}+\eta$ (note that this condition can be evaluated using an algorithm with access to $S_h$). This is precisely the condition that consumes the overfitting budget of $\Tho$. Now, as in \Tho, for every fixed $\phi$, $\pr[|\cE_{S_h}[\phi] - \calP[\phi]|\geq\tau] \leq 2e^{-2\tau^2n}$. If $B \leq \tau^2 n/\ln m$, then we obtain that for every query $\phi$ generated by the analyst, we still have strong concentration of the mean on the holdout set around the expectation: $\pr[|\cE_{S_h}[\phi] - \calP[\phi]|\geq\tau] \leq 2e^{-\tau^2n}$. This implies that if the condition $|\cE_{S_h}[\phi]-\cE_{S_t}[\phi]|>\hat{T}+\eta$ holds, then with high probability also the condition $|\calP[\phi]-\cE_{S_t}[\phi]|>\hat{T}+\eta-\tau$ holds, indicating overfitting.
 \iffull
One notable distinction of $\Tho$ from $\Sparse$ is that $\Sparse$ does not provide corrections in the case of overfitting. One way to remedy that is simply to use a version of \Sparse that allows functions with values in $\{0,1,\ldots,L\}$. It is easy to see that for such functions we would obtain the bound of the form $\ell_i \cdot L^{\min\{B,i-1\}}
\cdot \beta_i$. To output a value in $[0,1]$ with precision $\tau$, $L = \lfloor 1/\tau \rfloor$ would suffice. However, in many cases a more economical solution would be to have a separate dataset which is used just for obtaining the correct estimates.

 \fi
 An example of the application of \Sparse for answering statistical and low-sensitivity queries that is based on our analysis can be found in \cite{BassilySSU15}. The analysis of generalization on the holdout set in \cite{BlumH15} and the analysis of the Median Mechanism we give in Section \ref{sec:median} also rely on this sparsity-based technique.
\iffull

An alternative view of this algorithm is as a general template for designing algorithms for answering some specific type of adaptively chosen queries. Generalization guarantees specific to the type of query can then be obtained from our general analysis. For example, an algorithm that fits a mixture of Gaussians model to the data could define the validation query to be an algorithm that fits the mixture model to the holdout and obtains a vector of parameters $\Theta_h$. The validation query then compares it with the vector of parameters $\Theta_t$ obtained on the training set and outputs 1 if the parameter vectors are ``not close" (indicating overfitting). Given guarantees of statistical validity of the parameter estimation method in the static setting one could then derive guarantees for adaptive validation via Thm.~\ref{thm:sparseval}.
\fi
\iffull
\else
\paragraph{Experiments:}
\label{sec:experiments}
In our experiment the analyst is given a $d$-dimensional labeled data set $S$ of size $2n$ and splits it
randomly into a training set $S_t$ and a holdout set $S_h$ of equal size. We
denote an element of $S$ by a tuple $(x,y)$ where $x$ is a $d$-dimensional
vector and $y\in\{-1,1\}$ is the corresponding class label.
The analyst wishes to select variables to be included in her classifier. For various values of the number of variables to select $k$,
she picks $k$ variables with the largest absolute correlations with the label. However, she verifies the correlations (with the label) on the holdout set and uses only those variables whose correlation agrees in sign with the correlation on the training set and both correlations are larger than some threshold in absolute value. She then creates a simple linear threshold classifier on the selected variables using only the signs of the correlations of the selected variables (see supplemental material for a formal description). A final test evaluates the classification accuracy of the classifier on both the training set and the holdout set.

In our first experiment, each attribute of $x$ is drawn independently from the normal distribution $N(0,1)$ and we choose the class label $y \in \{-1,1\}$ uniformly at random so that there is no correlation between the data
point and its label. We chose $n=10,000$, $d=10,000$ and varied the number of selected variables $k$.
In this scenario no classifier can achieve
true accuracy better than 50\%. Nevertheless, reusing a standard holdout results
in reported accuracy of over 63\% for $k=500$ on both the training set and the
holdout set (the standard deviation of the error is less than 0.5\%). The average and standard deviation of results obtained from $100$ independent executions of the experiment are plotted below. For comparison, the plot also includes the accuracy of the classifier on another fresh data set of size $n$ drawn from the same distribution. We then executed the same algorithm with our reusable holdout. \Tho was invoked with $T=0.04$ and $\tau = 0.01$ explaining why the accuracy of the classifier reported by \Tho is off by up to $0.04$ whenever the accuracy on the holdout set is within $0.04$ of the accuracy on the training set.  We also used Gaussian noise instead of Laplacian noise as it has stronger
concentration properties. \Tho prevents the algorithm from overfitting to the holdout set and gives a valid estimate of classifier accuracy.
Additional experiments and discussion are presented in the supplemental material.

\begin{figure}[h]
\begin{center}
\includegraphics[width=\textwidth]{../figures/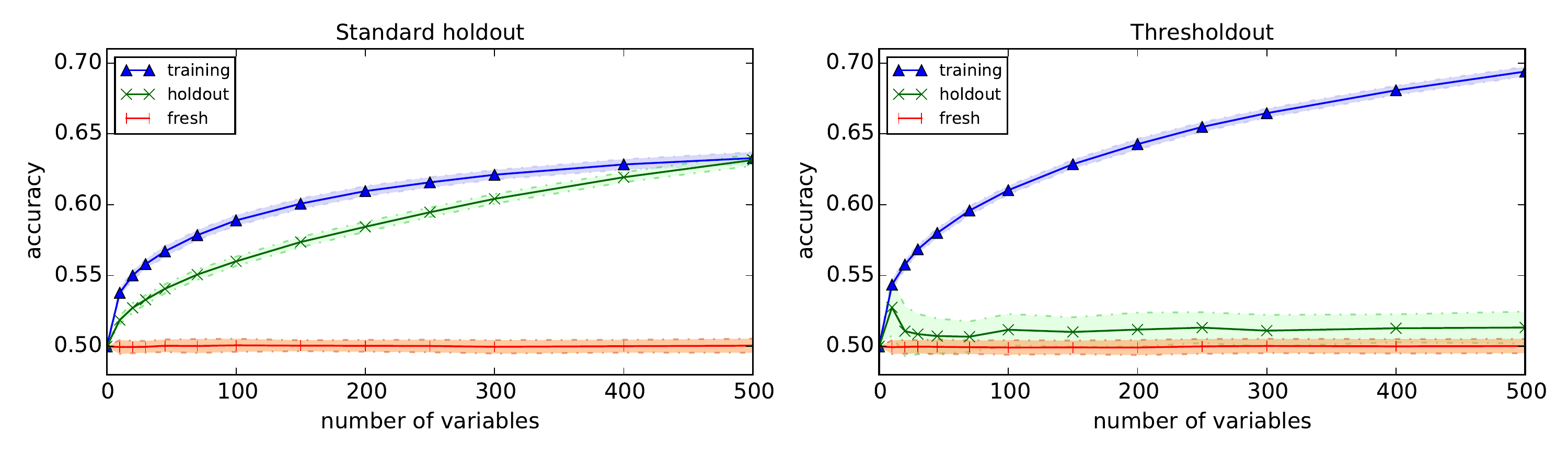}
\end{center}
\remove{
\caption{No correlation between class labels and data points.
The plot shows
the classification accuracy of the classifier on training,
holdout and fresh sets. Margins indicate the standard deviation.}}
\label{fig:exp1} %cd used to be fig:exp2
\end{figure}
\fi

\iffull
\section{Experiments}
\label{sec:experiments}  We describe a simple experiment on synthetic data that illustrates
 the danger of reusing a standard holdout set and how this issue can be
resolved by our reusable holdout.
In our experiment the analyst wants to build a classifier via the following common strategy. First the analyst finds a set of single attributes that are correlated with the class label. Then the
analyst aggregates the correlated variables into a single model of higher
accuracy (for example using boosting or bagging methods).
More formally, the analyst is given a $d$-dimensional labeled data set $S$ of size $2n$ and splits it
randomly into a training set $S_t$ and a holdout set $S_h$ of equal size. We
denote an element of $S$ by a tuple $(x,y)$ where $x$ is a $d$-dimensional
vector and $y\in\{-1,1\}$ is the corresponding class label.
The analyst wishes to select variables to be included in her classifier. For various values of the number of variables to select $k$,
she picks $k$ variables with the largest absolute correlations with the label. However, she verifies the correlations (with the label) on the holdout set and uses only those variables whose correlation agrees in sign with the correlation on the training set and both correlations are larger than some threshold in absolute value. She then creates a simple linear threshold classifier on the selected variables using only the signs of the correlations of the selected variables. A final test evaluates the classification accuracy of the classifier on both the training set and the holdout set.

Formally, the algorithm is used to build a linear threshold classifier:
\begin{enumerate}
\item For each attribute $i\in[d]$ compute the correlation with the label on the training and holdout sets: $w_i^t =
\sum_{(x,y)\in S_t}x_i y$ and $w_i^h = \sum_{(x,y)\in S_h}x_i y$.
Let \[W= \left\{i\ |\  w_i^t \cdot w_i^h > 0;\ |w_i^t|\geq 1/\sqrt{n}; \ |w_i^h|\geq 1/\sqrt{n}| \right\}\] that is the set of variables for which $w_i^t$ and $w_i^h$ have the same sign and both are at least $1/\sqrt{n}$ in absolute value (this is the standard deviation of the correlation in our setting). Let $V_k$ be the subset of variables in $W$ with $k$ largest values of $|w_i^t|$.
\item Construct the classifier
$f(x) = \sgn\left( \sum_{i \in V_k} \sgn(w_i^t) \cdot x_i \right)$.
\end{enumerate}

In the experiments we used an implementation of \Tho that differs somewhat from the algorithm
we analyzed theoretically (given in Figure \ref{fig:Tho}). Specifically, we set the parameters to be
 $T=0.04$ and $\tau = 0.01$. This is lower than the values necessary for the proof (and which are not intended for direct application) but suffices to prevent overfitting in our experiment. Second, we use Gaussian noise instead of Laplacian noise as it has stronger
concentration properties (in many differential privacy applications similar theoretical guarantees hold mechanisms based on Gaussian noise).
%A reference implementation of \Tho is provided in Appendix~\ref{sec:implementation}.

\noindent{\bf No correlation between labels and data:}
In our first experiment, each attribute is drawn independently from the normal distribution $N(0,1)$ and we choose the class label $y \in \{-1,1\}$ uniformly at random so that there is no correlation between the data
point and its label. We chose $n=10,000$, $d=10,000$ and varied the number of selected variables $k$.
In this scenario no classifier can achieve
true accuracy better than 50\%. Nevertheless, reusing a standard holdout results
in reported accuracy of over 63\% for $k=500$ on both the training set and the
holdout set (the standard deviation of the error is less than 0.5\%). The average and standard deviation of results obtained from $100$ independent executions of the experiment are plotted in Figure \ref{fig:exp1} which also includes the accuracy of the classifier on another fresh data set of size $n$ drawn from the same distribution. We then executed the same algorithm with our reusable holdout. The algorithm \Tho was invoked with $T=0.04$ and $\tau = 0.01$ explaining why the accuracy of the classifier reported by \Tho is off by up to $0.04$ whenever the accuracy on the holdout set is within $0.04$ of the accuracy on the training set. \Tho prevents the algorithm from overfitting to the holdout set and gives a valid estimate of classifier accuracy.

\begin{figure}[h]
\begin{center}
\includegraphics[width=\textwidth]{plot-642145-10000-10000-100-nosignal.pdf}
\end{center}
\caption{No correlation between class labels and data points. The plot shows
the classification accuracy of the classifier on training,
holdout and fresh sets. Margins indicate the standard deviation.}
\label{fig:exp1} %cd used to be fig:exp2
\end{figure}
\noindent{\bf High correlation between labels and some of the variables:}
In our second experiment, the class labels
are correlated with some of the variables. As before the label is randomly chosen from $\{-1,1\}$ and each of the attributes is drawn from $N(0,1)$ aside from $20$ attributes which are drawn from $N(y \cdot 0.06,1)$ where $y$ is the class label. We execute the same algorithm on this data with both the standard holdout and \Tho and plot the results in Figure \ref{fig:exp2}.
Our experiment shows that when using the
reusable holdout, the algorithm still finds a good classifier while
preventing overfitting. This illustrates that the reusable holdout
simultaneously prevents overfitting and allows for the discovery of true
statistical patterns.

\begin{figure}[h]
\begin{center}
\includegraphics[width=\textwidth]{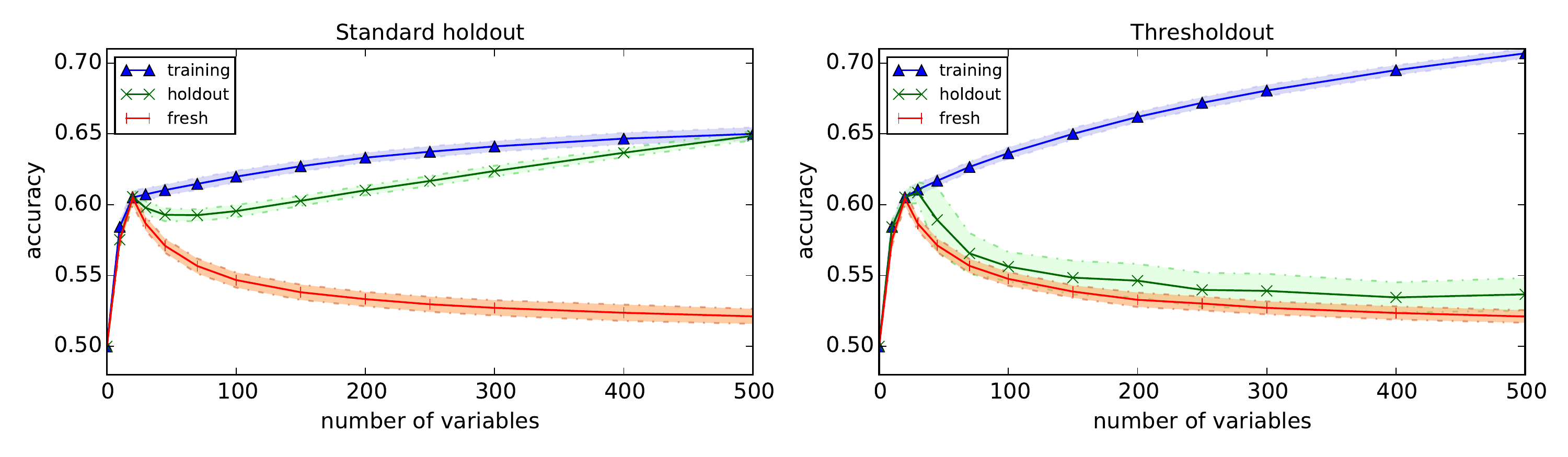}
\end{center}
\caption{Some variables are correlated with the label.}
\label{fig:exp2} %cd used to be fig:exp1
\end{figure}

In Figures \ref{fig:exp1} and \ref{fig:exp2}, simulations that used \Tho for selecting the variables also show the accuracy on the holdout set as reported by \Tho. For comparison purposes, in
Figure \ref{fig:exp3} we plot the actual accuracy of the generated classifier on the holdout set (the parameters of the simulation are identical to those used in Figures \ref{fig:exp1} and \ref{fig:exp2}). It demonstrates that there is essentially no overfitting to the holdout set. Note that the advantage of the accuracy reported by \Tho is that it can be used to make further data dependent decisions while mitigating the risk of overfitting.

\begin{figure}[h]
\begin{center}
\includegraphics[width=\textwidth]{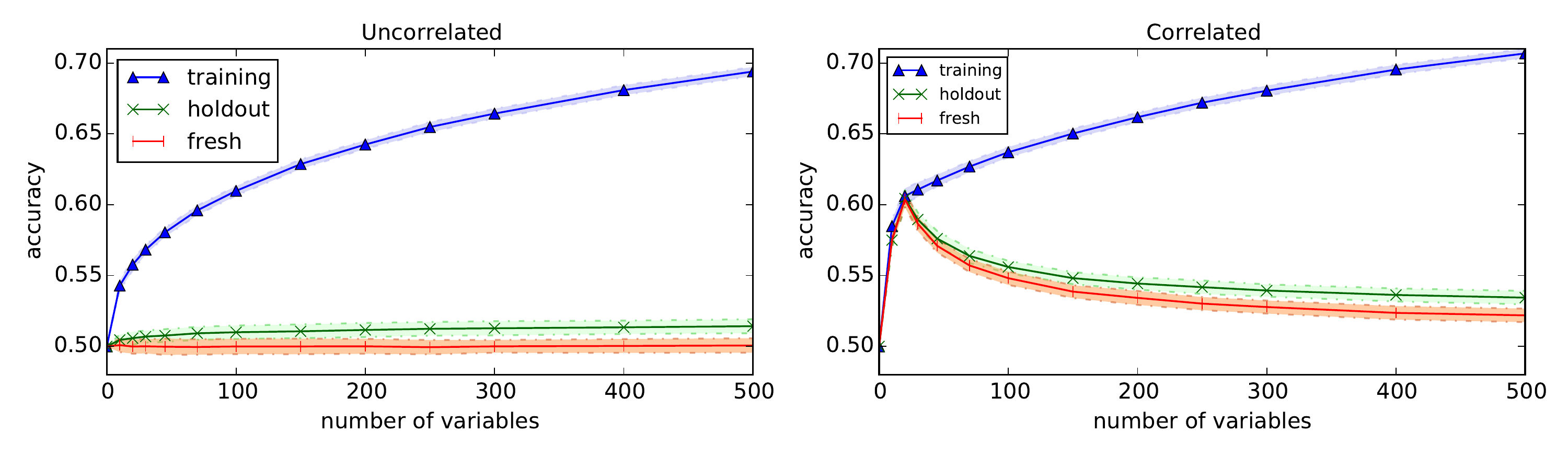}
\end{center}
\caption{Accuracy of the classifier produced with \Tho on the holdout set.}
\label{fig:exp3} %cd used to be fig:exp1
\end{figure}

\noindent{\bf Discussion of the results:} Overfitting to the standard holdout set
arises in our experiment because the analyst reuses the holdout after using
it to measure the correlation of single attributes. We first note that
neither cross-validation nor bootstrap resolve this issue. If we used either
of these methods to validate the correlations, overfitting would still arise
due to using the same data for training and validation (of the final
classifier). It is tempting to recommend other solutions to the specific
problem we used in our experiment. Indeed, a significant number of methods in
the statistics and machine learning literature deal with inference for fixed two-step procedures
where the first step is variable selection (see \cite{hastie2009elements} for
examples). Our experiment demonstrates that even in such simple and standard
settings our method avoids overfitting without the need to use a
specialized procedure -- and, of course, extends more broadly. More
importantly, the reusable holdout gives the analyst a general and principled
method to perform multiple validation steps where previously the only known
safe approach was to collect a fresh holdout set each time a function
depends on the outcomes of previous validations.

\fi
%\paragraph*{Acknowledgements}

\iffull
\section{Conclusions}
In this work, we give a unifying view of two techniques (differential privacy and description length bounds) which preserve the generalization guarantees of subsequent algorithms in adaptively chosen sequences of data analyses. Although these two techniques both imply low max-information -- and hence can be composed together while preserving their guarantees -- the kinds of guarantees that can be achieved by either alone are incomparable. This suggests that the problem of generalization guarantees under adaptivity is ripe for future study on two fronts. First, the existing theory is likely already strong enough to develop practical algorithms with rigorous generalization guarantees, of which \Tho is an example. However additional empirical work is needed to better understand when and how the theory should be applied in specific application scenarios. At the same time, new theory is also needed. As an example of a basic question we still do not know the answer to: even in the simple setting of adaptively reusing a holdout set for computing the expectations of boolean-valued predicates, is it possible to obtain stronger generalization guarantees (via any means) than those that are known to be achievable via differential privacy?
\fi
\iffull
\bibliographystyle{alpha}
\else
\bibliographystyle{plain}
\fi
\bibliography{../refs}
\appendix
\iffull
\section{From Max-information to Randomized Description Length}
\label{sec:max-info2dl}
In this section we demonstrate additional connections between max-information, differential privacy and description length. These connections are based on a generalization of description length to randomized algorithms that we refer to as {\em randomized description length.}
\begin{definition}
For a universe $\Y$ let $\A$ be a randomized algorithm with input in $\X$ and output in $\Y$.
We say that the output of $\A$ has randomized description length $k$ if for every fixed setting of random coin flips of $\A$ the set of possible outputs of $\A$ has size at most $2^k$.
\end{definition}
We first note that just as the (deterministic) description length, randomized description length implies generalization and gives a bound on max-information.
\begin{theorem}
\label{thm:descr-basic-random}
Let $\A:\X^n \rightarrow \Y$ be an algorithm with randomized description length $k$ and let $\bS$ be a random dataset over $\X^n$. Assume that $R:\Y \rightarrow 2^{\X^n}$ is such that for every $y \in \Y$, $\pr[\bS \in R(y)] \leq \beta$. Then $\pr[\bS \in R(\A(\bS))] \leq 2^k \cdot \beta$.
\end{theorem}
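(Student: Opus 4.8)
The strategy is to reduce to the deterministic case (Theorem \ref{thm:descr-basic}) by conditioning on the internal randomness of $\A$. Write $\A$ as a deterministic map $\A(\cdot\,;r)$ of its input together with a random string $r$ drawn from some distribution independent of $\bS$. First I would fix an arbitrary value $r$ of the coins. By the definition of randomized description length, the set $\Y_r \doteq \{\A(S;r) : S \in \X^n\}$ of possible outputs of $\A(\cdot\,;r)$ has size at most $2^k$. Since $\A(\cdot\,;r)$ is now a deterministic algorithm with range $\Y_r$, Theorem \ref{thm:descr-basic} applies directly: because $\pr[\bS \in R(y)] \le \beta$ for every $y$ (in particular for every $y \in \Y_r$), we get
$$\pr[\bS \in R(\A(\bS;r))] \le |\Y_r| \cdot \beta \le 2^k \cdot \beta.$$

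The second step is to remove the conditioning. Since the coins $r$ are independent of $\bS$, the law of total probability gives
$$\pr[\bS \in R(\A(\bS))] = \E_{r}\big[\pr[\bS \in R(\A(\bS;r))]\big] \le \E_r[2^k \beta] = 2^k \beta,$$
which is exactly the claim. The only point requiring a little care is the interchange of the probability over $\bS$ and the expectation over $r$; this is valid precisely because $r$ is drawn independently of $\bS$, so conditioning on $r$ does not change the distribution of $\bS$, and the bound $\pr[\bS \in R(y)] \le \beta$ continues to hold for the (fixed, hence $\bS$-independent) elements $y \in \Y_r$.

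I do not expect any genuine obstacle here: the content is entirely in Theorem \ref{thm:descr-basic} plus the observation that description-length bounds for randomized algorithms hold coinwise by definition. The one thing to be explicit about, for readers who worry about measurability in continuous settings, is the finiteness assumption already in force in the paper, so the union bound inside Theorem \ref{thm:descr-basic} and the averaging over $r$ are both unproblematic. If desired, one can also phrase the argument without reference to Theorem \ref{thm:descr-basic} by directly taking a union bound over $\Y_r$ for each fixed $r$ and then averaging, but invoking the deterministic statement keeps the proof shortest and makes the conceptual parallel with Theorem \ref{thm:descr-basic} transparent.
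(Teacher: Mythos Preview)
Your proposal is correct and is exactly the argument the paper has in mind: the paper states Theorem~\ref{thm:descr-basic-random} without an explicit proof, treating it as an immediate extension of Theorem~\ref{thm:descr-basic}, and in the very next result (Theorem~\ref{thm:randdl2maxinfo}) it uses precisely your ``fix the coins $r$, apply the deterministic bound to $\A_r$, then average over $r$'' template. Nothing to add.
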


\begin{theorem}
\label{thm:randdl2maxinfo}
Let $\A$ be an algorithm with randomized description length $k$ taking as an input an $n$-element dataset and outputting a value in $\Y$.  Then for every $\beta > 0$, $I_{\infty}^\beta(\A,n)\leq \log(|\Y|/\beta)$.
\end{theorem}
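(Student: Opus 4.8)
The plan is to note that the claimed bound $\log(|\Y|/\beta)$ depends only on the cardinality of the output range $\Y$, so it is really an instance of the bound already proved for arbitrary randomized algorithms with finite range. Concretely, $\A$ is in particular a randomized algorithm taking an $n$-element dataset and outputting a value in the finite set $\Y$, so Theorem~\ref{thm:dl2maxinfo} applies directly and gives $I_{\infty}^\beta(\A,n)\le\log(|\Y|/\beta)$ for every $\beta>0$. Thus the first (and essentially only) step is to invoke Theorem~\ref{thm:dl2maxinfo}. The randomized-description-length hypothesis is stated here only because it is what the companion generalization statement, Theorem~\ref{thm:descr-basic-random}, needs; it is not used in this particular divergence bound, and one could equally phrase this theorem as ``any randomized algorithm with range $\Y$ has $\beta$-approximate max-information at most $\log(|\Y|/\beta)$.''

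For a self-contained argument I would repeat, nearly verbatim, the proof of Theorem~\ref{thm:dl2maxinfo}. Fix an arbitrary distribution $\cS$ over $n$-element datasets, let $\bS\sim\cS$ and $\bY=\A(\bS)$; it suffices to bound $I_{\infty}^\beta(\bS;\bY)$. Call $y\in\Y$ \emph{bad} if there is some $S$ in the support of $\bS$ with $\Pr[\bY=y\mid\bS=S]\ge(|\Y|/\beta)\Pr[\bY=y]$, and let $B\subseteq\Y$ be the set of bad outputs. Since $\Pr[\bY=y\mid\bS=S]\le 1$, each bad $y$ satisfies $\Pr[\bY=y]\le\beta/|\Y|$, hence $\Pr[\bY\in B]\le|B|\cdot\beta/|\Y|\le\beta$. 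For every $(S,y)$ with $y\notin B$ we have $\Pr[\bY=y\mid\bS=S]<(|\Y|/\beta)\Pr[\bY=y]$, and using $\Pr[\bS=S,\bY=y]=\Pr[\bY=y\mid\bS=S]\cdot\Pr[\bS=S]$ this rearranges to
\[
\frac{\Pr[\bS=S,\bY=y]}{\Pr[\bS=S]\cdot\Pr[\bY=y]}<\frac{|\Y|}{\beta}.
\]
So the set of pairs on which the likelihood ratio of $(\bS,\bY)$ against $\bS\times\bY$ reaches $|\Y|/\beta$ is contained in $\X^n\times B$, an event of probability at most $\beta$ under $(\bS,\bY)$. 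Lemma~\ref{lem:to-pointwise}, applied with $\bX\mapsto(\bS,\bY)$, $\bY\mapsto\bS\times\bY$ and $2^k\mapsto|\Y|/\beta$, then yields $D_\infty^\beta((\bS,\bY)\,\|\,\bS\times\bY)\le\log(|\Y|/\beta)$, i.e.\ $I_{\infty}^\beta(\bS;\bY)\le\log(|\Y|/\beta)$; since $\cS$ was arbitrary this is $I_{\infty}^\beta(\A,n)\le\log(|\Y|/\beta)$.

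There is no real obstacle here; the only point to be careful about is that the bad set $B$ is defined purely in terms of the output value and the output marginal $\bY$, so all internal randomness of $\A$ is already absorbed into the law of $\bY$ and requires no separate union bound over coin settings — in particular the per-coin range bound $2^k$ is never needed for this statement. The remaining routine care is the standard passage from a pointwise likelihood-ratio tail bound to a $\beta$-approximate max-divergence bound, which is exactly what Lemma~\ref{lem:to-pointwise} packages.
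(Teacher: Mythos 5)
Your argument is correct for the statement exactly as printed, and you correctly observed that it never uses the randomized-description-length hypothesis. But that observation is the tell that the printed bound is a typo rather than the intended content. The paper's own proof opens by saying it suffices to show $I_{\infty}^\beta(\bS;\bY)\leq k+\log(1/\beta)$, i.e.\ the bound $\log(2^k/\beta)$ with the \emph{per-coin} range size $2^k$ in place of $|\Y|$, and that is what it establishes. This is a genuinely stronger claim: the union over all coin settings of the possible outputs can make $|\Y|$ exponentially larger than $2^k$ (or even infinite, in which case your bound is vacuous), and the whole point of introducing randomized description length --- see the companion results, which all carry $2^k$ rather than $|\Y|$ --- is to get a bound depending only on $k$. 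So while you have proved the literal statement, you have not proved the theorem the paper needs, and a direct appeal to Theorem~\ref{thm:dl2maxinfo} cannot do so.

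The paper's route is to condition on the internal randomness. Fix the coins $r$ of $\A$, let $\A_r$ be the resulting deterministic algorithm and $\bY_r=\A_r(\bS)$; by definition of randomized description length the range of $\A_r$ has size at most $2^k$, so Theorem~\ref{thm:dl2maxinfo} gives, for every event $\cO\subseteq\X^n\times\Y$,
\[
\pr[(\bS,\bY_r)\in\cO]\;\leq\;\frac{2^k}{\beta}\cdot\pr[\bS\times\bY_r\in\cO]+\beta .
\]
Averaging this inequality over a uniformly random $r$ and using that both $\pr[(\bS,\bY_r)\in\cO]$ and $\pr[\bS\times\bY_r\in\cO]$ are linear in the law of $\bY_r$ (so their expectations over $r$ are the corresponding probabilities for $\bY$) yields $\pr[(\bS,\bY)\in\cO]\leq (2^k/\beta)\cdot\pr[\bS\times\bY\in\cO]+\beta$, i.e.\ $I_\infty^\beta(\bS;\bY)\leq\log(2^k/\beta)$. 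If you want to salvage your ``bad output'' argument for the stronger bound, you would have to define badness per coin setting (a pair $(y,r)$ is bad if some $S$ inflates $\pr[\bY_r=y]$ by a factor $2^k/\beta$), which is essentially the same conditioning step in disguise; defining badness only in terms of the marginal of $\bY$, as you do, forces the $|\Y|$ in the bound because that is the only cardinality available at that level.
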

\begin{proof}
Let $\bS$ be any random variable over $n$-element input datasets and let $\bY$ be the corresponding output distribution
$\bY = \A(\bS)$. It suffices to prove that for every $\beta > 0$, $I_{\infty}^\beta(\bS;\bY)\leq k+\log(1/\beta)$.

Let $R$ be the set of all possible values of the random bits of $\A$ and let $\cR$ denote the uniform distribution over a choice of $r \in R$. For $r\in R$, let $\A_r$ denote $\A$ with the random bits set to $r$ and let $\bY_r = \A_r(\bS)$.
Observe that by the definition of randomized description length, the range of $\A_r$ has size at most $2^k$. Therefore, by Theorem \ref{thm:dl2maxinfo}, we obtain that $I_\infty^\beta(\bS;\bY_r) \leq  \log (2^k/\beta)$.

For any event $\cO \subseteq \X^n \times \Y$ we have that
\alequn{\pr[(\bS,\bY) \in \cO] &=\E_{r\sim \cR}[\pr[(\bS,\bY_r) \in \cO]] \\
& \leq \E_{r\sim \cR}\left[\frac{2^k}{\beta} \cdot \pr[\bS \times \bY_r \in \cO] + \beta\right] \\
& = \frac{2^k}{\beta} \cdot \E_{r\sim \cR}\left[ \pr[\bS \times \bY_r \in \cO] \right] + \beta\\
& =  \frac{2^k}{\beta} \cdot \pr[\bS\times \bY \in \cO] + \beta .
}
By the definition of $\beta$-approximate max-information, we obtain that
$I_\infty^\beta(\bS;\bY) \leq  \log (2^k/\beta)$.
\end{proof}

We next show that if the output of an algorithm $\A$ has low approximate max-information about its input then there exists a (different) algorithm whose output is statistically close to that of $\A$ while having short randomized description. We remark that this reduction requires the knowledge of the marginal distribution $\A(\bS)$.

\begin{lemma}
\label{lem:subsample}
Let $\A$ be a randomized algorithm taking as an input a dataset of $n$ points from $\X$ and outputting a value in $\Y$.
Let $\bZ$ be a random variable over $\Y$. For $k>0$ and a dataset $S$ let $\beta_S = \min\{\beta \cond D_\infty^\beta(\A(S)\|\bZ) \leq k\}$. There exists an algorithm $\A'$ that given $S \in \X^n$, $\beta,k$ and any $\beta'>0$,
\begin{enumerate}
\item the output of $\A'$ has randomized description length $k+\log\ln(1/\beta')$.
\item for every $S$, $\Delta(\A'(S), \A(S)) \leq \beta_S + \beta'$.
\end{enumerate}
\end{lemma}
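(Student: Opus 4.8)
The plan is to build $\A'$ as a rejection sampler that uses roughly $2^k\ln(1/\beta')$ i.i.d.\ draws from $\bZ$ to approximately reproduce the output distribution of $\A(S)$.

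\emph{Step 1 (a nearby distribution with a pointwise density bound).} Fix $S$ and write $p$ for the probability mass function of $\A(S)$ and $q$ for that of $\bZ$. I would first extract from the hypothesis $D_\infty^{\beta_S}(\A(S)\|\bZ)\le k$ a distribution $\bX'_S$, with mass function $p'$, such that $p'(y)\le 2^k q(y)$ for every $y\in\Y$ and $\Delta(\A(S),\bX'_S)\le\beta_S$. Let $B=\{y: p(y)>2^k q(y)\}$ and $m=\sum_{y\in B}(p(y)-2^k q(y))$; applying the definition of $D_\infty^{\beta_S}$ to the event $\cO=B$ (treating separately the trivial case $\pr[\A(S)\in B]\le\beta_S$) gives $m\le\beta_S$. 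Since $\sum_y 2^k q(y)=2^k\ge 1=\sum_y p(y)$, the total slack $\sum_{y\notin B}(2^k q(y)-p(y))$ is at least $m$, so one may set $p'$ equal to $2^k q(y)$ on $B$ and redistribute the removed mass $m$ proportionally to the slack on $\Y\setminus B$; this keeps $p'\le 2^k q$ everywhere and makes $\Delta(\A(S),\bX'_S)=m\le\beta_S$. Note that $p'$ depends only on the distributions of $\A(S)$ and $\bZ$ and on $k$, which $\A'$ is given; this is the sense in which the reduction needs knowledge of the marginal (the parameter $\beta$ is not otherwise needed). One could alternatively just cite the standard equivalence between $\delta$-approximate max-divergence and exact max-divergence up to statistical distance $\delta$.

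\emph{Step 2 (the algorithm and its description length).} Set $N=\lceil 2^k\ln(1/\beta')\rceil$. Algorithm $\A'$ draws $z_1,\dots,z_N$ i.i.d.\ from $\bZ$ and $u_1,\dots,u_N$ i.i.d.\ uniform on $[0,1]$, computes $p'$ as in Step 1, lets $j^\star$ be the least index $j$ with $u_j\le p'(z_j)/(2^k q(z_j))$ (a number in $[0,1]$ since $p'\le 2^k q$) and $j^\star=N$ if no such index exists, and outputs $z_{j^\star}$. For any fixed setting of the coins $(z_j,u_j)_{j\le N}$, the output lies in $\{z_1,\dots,z_N\}$ for every input $S$, so the number of possible outputs is at most $N$ and the randomized description length is at most $\log N\le k+\log\ln(1/\beta')$ (up to the harmless ceiling), which is property 1.

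\emph{Step 3 (statistical distance).} A single round rejects with probability $1-\E_{z\sim\bZ}[p'(z)/(2^k q(z))]=1-2^{-k}$, so the event $E$ that some round accepts satisfies $\pr[\neg E]=(1-2^{-k})^N\le e^{-2^{-k}N}\le\beta'$. Standard rejection-sampling bookkeeping shows that, conditioned on $E$, the output $z_{j^\star}$ is distributed exactly as $\bX'_S$; writing $\pr[\A'(S)=y]=p'(y)\pr[E]+\pr[\A'(S)=y\wedge\neg E]$ and comparing with $\pr[\bX'_S=y]=p'(y)$ then gives $\Delta(\A'(S),\bX'_S)\le\pr[\neg E]\le\beta'$. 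Combining with Step 1 and the triangle inequality for statistical distance yields $\Delta(\A'(S),\A(S))\le\beta_S+\beta'$, which is property 2.

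The only step needing genuine care is Step 1 — passing from the event-averaged hypothesis $D_\infty^{\beta_S}(\A(S)\|\bZ)\le k$ to a nearby distribution obeying a pointwise density bound; Steps 2 and 3 are then routine consequences of the mechanics of rejection sampling and the definition of randomized description length.
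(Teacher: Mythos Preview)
Your proof is correct and follows essentially the same approach as the paper: both pass to a nearby distribution with a pointwise $2^k$ density bound against $\bZ$, then perform rejection sampling using $2^k\ln(1/\beta')$ draws from $\bZ$. The only cosmetic difference is that the paper simply cites the standard equivalence between $D_\infty^\beta$ and exact $D_\infty$ up to statistical distance~$\beta$ (which you mention as an alternative at the end of Step~1), whereas you carry out the construction of $p'$ explicitly; the remaining steps are identical.
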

\begin{proof}
Let $S$ denote the input dataset. By definition of $\beta_S$, $D_\infty^{\beta_S}(\A(S)\|\bZ) \leq k$.
By the properties of approximate divergence (\eg \cite{DworkR14}), $D_\infty^{\beta_S}(\A(S)\|\bZ) \leq k$ implies that there exists a
random variable $\bY$ such that $\Delta(\A(S),\bY) \leq \beta_S$ and $D_\infty(\bY\|\bZ) \leq k$.

For $t= 2^k \ln(1/\beta')$ the algorithm $\A'$ randomly and independently chooses $t$ samples from $\bZ$.  Denote them by $y_1,y_2,\ldots,y_t$. For $i=1,2,\ldots,t$, $\A'$ outputs $y_i$ with probability $p_i = \frac{\pr[\bY=y_i]}{2^k \cdot \pr[\bZ=y_i]}$ and goes to the next sample otherwise. Note that $p_i \in [0,1]$ and therefore this is a legal choice of probability. When all samples are exhausted the algorithm outputs $y_1$.

We first note that by the definition of this algorithm its output has randomized description length $\log t = k+\log\ln(1/\beta')$.
Let $T$ denote the event that at least one of the samples was accepted. Conditioned on this event the output of $\A'(S)$ is distributed
according to $\bY$.
For each $i$, $$\E_{y_i \sim p(\bZ)}[p_i]=\E_{y_i \sim p(\bZ)}\left[\frac{\pr[\bY=y_i]}{2^k \cdot \pr[\bZ=y_i]}\right] = \sum_{y_i \in \Y }\frac{\pr[\bY=y_i]}{2^k} = \frac{1}{2^k}.$$
This means that the probability that none of $t$ samples will be accepted is $(1-2^{-k})^t \leq e^{t/2^k} \leq \beta'$. Therefore
$\Delta(\A'(S),\bY) \leq \beta'$ and, consequently, $\Delta(\A'(S),\A(S)) \leq \beta_S + \beta'$.
\end{proof}

We can now use Lemma \ref{lem:subsample} to show that if for a certain random choice of a dataset $\bS$, the output of $\A$ has low approximate max-information then there exists an algorithm $\A'$ whose output on $\bS$ has low randomized description length and is statistically close to the output distribution of $\A$.

\begin{theorem}
\label{thm:info2dl}
Let $\bS$ be a random dataset in $\X^n$ and $\A$ be an algorithm taking as an input a dataset in $\X^n$ and having a range $\Y$. Assume that for some $\beta \geq 0$, $I_{\infty}^\beta(\bS;\A(\bS))=k$. For any $\beta'>0$, there exists an algorithm $\A'$ taking as an input a dataset in $\X^n$  such that
\begin{enumerate}
\item the output of $\A'$ has randomized description length $k+\log\ln(1/\beta')$;
\item $\Delta((\bS,\A'(\bS)), (\bS,\A(\bS)) \leq \beta + \beta'$.
\end{enumerate}
\end{theorem}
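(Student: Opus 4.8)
The plan is to reduce the statement to Lemma~\ref{lem:subsample}, invoked pointwise with the reference distribution $\bZ$ taken to be the marginal output distribution of $\A(\bS)$, and then to average the per-dataset guarantees over the draw of $\bS$. Concretely, I would let $\bZ$ denote the distribution of $\A(\bS)$ and, for each $S \in \X^n$, set $\beta_S \doteq \min\{\gamma : D_\infty^\gamma(\A(S)\,\|\,\bZ) \le k\}$ --- exactly the quantity appearing in Lemma~\ref{lem:subsample}. Applying that lemma with this $\bZ$, the given $k$, and the given $\beta'$ produces an algorithm $\A'$ (which, as the remark preceding the lemma notes, is allowed to depend on the known marginal $\bZ = \A(\bS)$) whose output has randomized description length $k + \log\ln(1/\beta')$ --- this is part~1 verbatim --- and which satisfies $\Delta(\A'(S),\A(S)) \le \beta_S + \beta'$ for every $S$. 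Since the two joint distributions in part~2 share the same first coordinate $\bS$, total-variation distance decomposes as $\Delta\big((\bS,\A'(\bS)),(\bS,\A(\bS))\big) = \E_{S\sim\bS}\big[\Delta(\A'(S),\A(S))\big] \le \E_{S\sim\bS}[\beta_S] + \beta'$. Thus the whole theorem reduces to showing $\E_{S\sim\bS}[\beta_S] \le \beta$.

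That inequality is the crux, and I would prove it via the standard ``complementary'' description of $\delta$-approximate max-divergence: for random variables $\bX,\bY$ on a common finite domain, $D_\infty^\delta(\bX\,\|\,\bY) \le k$ holds iff $\sum_x\big(\Pr[\bX=x] - 2^k\Pr[\bY=x]\big)_+ \le \delta$, so the least admissible $\delta$ equals that sum (see, e.g., \cite{DworkR14}). Applied to $\A(S)$ versus $\bZ$, this gives $\beta_S = \sum_y\big(\Pr[\A(S)=y] - 2^k\Pr[\bZ=y]\big)_+$. Taking the expectation over $S \sim \bS$ and using $\Pr[\bS=S,\A(\bS)=y] = \Pr[\bS=S]\,\Pr[\A(S)=y]$ together with $\Pr[\A(\bS)=y] = \Pr[\bZ=y]$, I obtain
$$\E_{S\sim\bS}[\beta_S] = \sum_{S,y}\big(\Pr[\bS=S,\A(\bS)=y] - 2^k\Pr[\bS=S]\Pr[\A(\bS)=y]\big)_+,$$
and the right-hand side is, by the same characterization applied to the pair $\big((\bS,\A(\bS)),\ \bS\times\A(\bS)\big)$, the least $\delta$ for which $D_\infty^\delta\big((\bS,\A(\bS))\,\|\,\bS\times\A(\bS)\big) \le k$. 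The hypothesis $I_\infty^\beta(\bS;\A(\bS)) = k$ implies in particular $D_\infty^\beta\big((\bS,\A(\bS))\,\|\,\bS\times\A(\bS)\big) \le k$, so that least $\delta$ is at most $\beta$, giving $\E_{S\sim\bS}[\beta_S] \le \beta$.

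The only non-mechanical step is this translation, and the reason it goes through is that the $\sum(\cdot)_+$ form of $D_\infty^\delta$ tensorizes cleanly precisely because $\bZ$ is chosen to be the marginal $\A(\bS)$, so that averaging the per-dataset sums over $S$ reproduces the joint sum. (If one prefers to avoid the $\sum(\cdot)_+$ formula, the same conclusion follows from the ``domination'' form of $D_\infty^\delta$ already used inside the proof of Lemma~\ref{lem:subsample} --- that $D_\infty^\delta(\bX\|\bY)\le k$ iff some $\bX'$ with $\Delta(\bX',\bX)\le\delta$ satisfies $\Pr[\bX'=x]\le 2^k\Pr[\bY=x]$ for all $x$ --- applied to the joint pair.) Everything else --- the randomized-description-length count inherited from Lemma~\ref{lem:subsample}, and the linearity of total-variation distance over the shared first coordinate --- is routine bookkeeping.
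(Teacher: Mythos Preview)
Your proposal is correct and follows essentially the same route as the paper: set $\bZ=\A(\bS)$, invoke Lemma~\ref{lem:subsample} pointwise, average the per-$S$ total-variation bounds, and reduce everything to $\E_{S\sim\bS}[\beta_S]\le\beta$. The only cosmetic difference is in how that last inequality is verified: the paper picks, for each $S$, a witnessing event $\cO_S$ with $\Pr[\A(S)\in\cO_S]=2^k\Pr[\A(\bS)\in\cO_S]+\beta_S$, forms the product event $\cO=\bigcup_S\{S\}\times\cO_S$, and reads off $\Pr[(\bS,\A(\bS))\in\cO]=2^k\Pr[\bS\times\A(\bS)\in\cO]+\E[\beta_\bS]$, while you use the equivalent $\sum_y(\cdot)_+$ characterization and pull the factor $\Pr[\bS=S]$ inside the positive part via positive homogeneity. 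These are the same computation in two notations; the paper even remarks that in fact $\E[\beta_\bS]=\beta$, which is exactly what your identity shows.
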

\begin{proof}
For a dataset $S$ let $\beta_S = \min\{\beta \cond D_\infty^\beta(\A(S)\|\A(\bS)) \leq k\}$.
To prove this result it suffices to observe that $\E[\beta_\bS] \leq \beta$  and then apply Lemma \ref{lem:subsample} with $\bZ = \A(\bS)$
To show that $\E[\beta_\bS] \leq \beta$ let $\cO_S \subseteq \Y$ denote an event such that $\pr[\A(S) \in \cO_S] = 2^k \cdot \pr[\A(\bS) \in \cO_S] + \beta_S$. Let $\cO = \bigcup_{S\in \X^n} \{(S,\cO_S)\}$. Then,
\alequn{\pr[(\bS,\A(\bS) \in \cO] &= \E_{S \sim p(\bS)}\left[\pr[(S,\A(S) \in \cO_S]\right] \\
&= \E_{S \sim p(\bS)}\left[2^k \cdot \pr[\A(\bS) \in \cO_S] + \beta_S\right] \\
&  = 2^k \cdot \pr[\bS \times \A(\bS) \in \cO] + \E[\beta_\bS] .  }
If $\E[\beta_\bS] > \beta$ then it would hold for some $k' > k$ that $\pr[(\bS,\A(\bS) \in \cO] = 2^{k'} \cdot \pr[\bS \times \A(\bS) \in \cO] + \beta$ contradicting the assumption $I_{\infty}^\beta(\bS;\A(\bS))=k$. We remark that, it is also easy to see that $\E[\beta_\bS] =\beta$.
\end{proof}

It is important to note that Theorem \ref{thm:info2dl} is not the converse of Theorem \ref{thm:randdl2maxinfo} and does not imply equivalence between max-information and randomized description length. The primary difference is that Theorem \ref{thm:info2dl} defines a new algorithm rather than arguing about the original algorithm. In addition, the new algorithm requires samples of $\A(\bS)$, that is, it needs to know the marginal distribution on $\Y$. As a more concrete example, Theorem \ref{thm:info2dl} does not allow us to obtain a description-length-based equivalent of Theorem \ref{thm:concentrated-divergence} for all i.i.d.~datasets.
On the other hand, any algorithm that has bounded  max-information for all distributions over datasets can be converted to an algorithm with low randomized description length.
%An additional corollary of Lemma \ref{lem:subsample} is that for every $\eps$-differentially private algorithm $\A$, there exists an algorithm that is statistically close to $\A$ on every input dataset $S$ and also has randomized description length of $O(\eps n)$.
\begin{theorem}
\label{thm:dp2dl}
Let $\A$ be an algorithm over $\X^n$ with range $\Y$ and let $k= I_\infty(\A,n)$.  For any $\beta>0$, there exists an algorithm $\A'$ taking as an input a dataset in $\X^n$  such that
\begin{enumerate}
\item the output of $\A'$ has randomized description length $k + \log\ln(1/\beta)$;
\item for every dataset $S\in \X^n$, $\Delta(\A'(S), \A(S)) \leq \beta$.
\end{enumerate}
\end{theorem}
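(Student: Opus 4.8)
The plan is to reduce the statement to Lemma~\ref{lem:subsample} by choosing a single fixed reference output distribution, exploiting the fact that \emph{pure} max-information has a clean pairwise characterization. First I would fix an arbitrary dataset $S_0 \in \X^n$ and set $\bZ = \A(S_0)$. By Lemma~\ref{lem:max-info-pairwise}, $k = I_\infty(\A,n) = \max_{S,S' \in \X^n} D_\infty(\A(S)\|\A(S'))$, so in particular $D_\infty(\A(S)\|\A(S_0)) \le k$ for \emph{every} $S \in \X^n$. Since $D_\infty^0(\cdot\|\cdot) = D_\infty(\cdot\|\cdot)$ (the $\delta=0$ case of the approximate max-divergence is the ordinary max-divergence, by the mediant inequality), this says exactly that, in the notation of Lemma~\ref{lem:subsample} applied with this $\bZ$, we have $\beta_S = \min\{\beta : D_\infty^\beta(\A(S)\|\bZ) \le k\} = 0$ for every $S$.

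Next I would invoke Lemma~\ref{lem:subsample} with reference distribution $\bZ = \A(S_0)$, the given value $k$, and $\beta' = \beta$. The lemma produces an algorithm $\A'$ whose output has randomized description length $k + \log\ln(1/\beta') = k + \log\ln(1/\beta)$, and which satisfies $\Delta(\A'(S),\A(S)) \le \beta_S + \beta' = 0 + \beta = \beta$ for every $S \in \X^n$. These are precisely the two conclusions claimed, so the proof is complete.

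The only point requiring a word of care is that the algorithm $\A'$ of Lemma~\ref{lem:subsample} needs to draw samples from $\bZ$; here it does so by simulating $\A$ on the hard-wired dataset $S_0$, which is legitimate and does not change the description-length count (for each fixing of $\A'$'s coins the reachable outputs are still the $t = 2^k\ln(1/\beta)$ sampled candidates). I do not expect any real obstacle: all the work is already done inside Lemmas~\ref{lem:max-info-pairwise} and~\ref{lem:subsample}, and the theorem is just the observation that a bound on $I_\infty(\A,n)$ that holds for \emph{all} distributions over datasets is equivalent, via the pairwise form of pure max-information, to a uniform zero-error divergence bound of every output distribution $\A(S)$ against a single fixed reference $\A(S_0)$ --- which is exactly the hypothesis Lemma~\ref{lem:subsample} needs with $\beta_S \equiv 0$. (This is what distinguishes the pure case from the approximate one in Theorem~\ref{thm:info2dl}, where no such single reference works and one must instead use the true marginal $\A(\bS)$ and pay the residual $\beta$.)
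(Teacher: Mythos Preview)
Your proposal is correct and follows essentially the same approach as the paper: fix a reference dataset $S_0$, use Lemma~\ref{lem:max-info-pairwise} to conclude $D_\infty(\A(S)\|\A(S_0))\le k$ for all $S$, and then apply Lemma~\ref{lem:subsample} with $\bZ=\A(S_0)$ and $\beta'=\beta$. The paper's proof is just slightly terser (it picks $S_0=(x,\ldots,x)$ concretely and does not spell out that $\beta_S=0$), but the argument is identical.
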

\begin{proof}
Let $S_0 = (x,x,\ldots,x)$ be an $n$-element dataset for an arbitrary $x \in \X$.
By Lemma \ref{lem:max-info-pairwise} we know that for every $S\in \X^n$,
$D_\infty(\A(S) \| \A(S_0))\leq k$. We can now apply Lemma \ref{lem:subsample} with $\bZ = \A(S_0)$, and $\beta' = \beta$ to obtain the result.
\end{proof}

The conditions of Theorem \ref{thm:dp2dl} are satisfied by any $\eps$-differentially private algorithm with $k= \log e\cdot  \eps n$. This immediately implies that the output of any $\eps$-differentially private algorithm is $\beta$-statistically close to the output of an algorithm with randomized description length of $\log e\cdot  \eps n + \log\ln(1/\beta)$. Special cases of this property have been derived (using a technique similar to Lemma \ref{lem:subsample}) in the context of proving lower bounds for learning algorithms \cite{BeimelNS:13} and communication complexity of differentially private protocols \cite{McGregorMPRTV10}.

\section{Answering Queries via Description Length Bounds}
\label{sec:median}
In this section, we show a simple method for answering any adaptively chosen sequence of $m$  statistical queries, using a number of samples that scales only polylogarithmically in $m$. This is an exponential improvement over what would be possible by naively evaluating the queries exactly on the given samples. Algorithms that achieve such dependence were given in \cite{DworkFHPRR14:arxiv} and \cite{BassilySSU15} using differentially private algorithms for answering queries and the connection between generalization and differential privacy (in the same way as we do in Section \ref{sec:holdout-tho}). Here we give a simpler algorithm which we analyze using description length bounds. The resulting bounds are comparable to those achieved in \cite{DworkFHPRR14:arxiv} using pure differential privacy but are somewhat weaker than those achieved using approximate differential privacy \cite{DworkFHPRR14:arxiv,BassilySSU15,NissimS15}.

The mechanism we give here is based on the \emph{Median Mechanism} of Roth and Roughgarden \cite{RR10}. A differentially private variant of this mechanism was introduced in \cite{RR10} to show that it was possible to answer exponentially many adaptively chosen {\em counting} queries (these are queries for an estimate of the empirical mean of a function $\phi:\X\rightarrow [0,1]$ on the dataset).  Here we analyze a noise-free version and establish its properties via a simple description length-based argument. We remark that it is possible to analogously define and analyze the noise-free version of the Private Multiplicative Weights Mechanism of Hardt and Rothblum \cite{HardtR10}. This somewhat more involved approach would lead to better (but qualitatively similar) bounds.

Recall that statistical queries are defined by functions $\phi:\X\rightarrow [0,1]$, and our goal is to correctly estimate their expectation $\calP[\phi]$. The Median Mechanism takes as input a dataset $S$ and an adaptively chosen sequence of such functions $\phi_1,\ldots,\phi_m$, and outputs a sequence of answers $a_1,\ldots,a_m$.
\begin{figure}[h]
\begin{boxedminipage}{\textwidth}
\textbf{Algorithm} Median Mechanism

\noindent\textbf{Input:} An upper bound $m$ on the total number of queries, a dataset $S$ and an accuracy parameter $\tau$

%\noindent\textbf{Output:} A sequence of answers $a_1,\ldots,a_m$.
\begin{enumerate}
    \item \textbf{Let} $\alpha = \frac{\tau}{3}$.
    \item \textbf{Let} $\textrm{Consistent}_0 = \X^{\frac{\log m}{\alpha^2}}$
    \item \textbf{For} a query $\phi_i$ \textbf{do}
    \begin{enumerate}
        \item \textbf{Compute} $a^{\textrm{pub}}_i = \mathrm{median}(\{\cE_{S'}[\phi_i] : S' \in \textrm{Consistent}_{i-1}\})$.
        \item \textbf{Compute} $a^{\textrm{priv}}_i = \cE_{S}[\phi_i]$.
        \item \textbf{If} $\left|a^{\textrm{pub}}_i - a^{\textrm{priv}}_i\right| \leq 2\alpha$ \textbf{Then}:
        \begin{enumerate}
          \item \textbf{Output} $a_i = a^{\textrm{pub}}_i$.
          \item \textbf{Let} $\textrm{Consistent}_{i} = \textrm{Consistent}_{i-1}$.
        \end{enumerate}
        \item \textbf{Else}:
        \begin{enumerate}
          \item \textbf{Output} $a_i = \lfloor a^{\textrm{priv}}_i \rfloor_{\alpha}$.
          \item \textbf{Let} $\textrm{Consistent}_{i} = \{S' \in \textrm{Consistent}_{i-1} : |a_i - \cE_{S'}[\phi_i]| \leq 2\alpha\}$.
        \end{enumerate}
   \end{enumerate}
\end{enumerate}
\end{boxedminipage}
\caption{Noise-free version of the Median Mechanism from \cite{RR10} % of Roth and Roughgarden \cite{RR10}
}
\label{fig:MMech}
\end{figure}

The guarantee we get for the Median Mechanism is as follows:
\begin{theorem}
Let $\beta,\tau>0$ and $m \geq B > 0$. Let $\bS$ denote a dataset of size $n$ drawn i.i.d.~from a distribution~$\calP$ over $\X$. Consider any algorithm that adaptively chooses functions $\bphi_1,\dots,\bphi_m$ while interacting with the Median Mechanism which is given
$\bS$ and values $\tau$, $\beta$. For every $i\in [m]$, let $\ba_i$ denote the answer of the Median Mechanism on function~$\bphi_i:\X\rightarrow [0,1]$.
Then \[
\Pr\left[\exists i \in [m],\ \left|\ba_i-\calP[\bphi_i]\right|\geq \tau\right] \le \beta\
\]
whenever
$$n \geq n_0 = \frac{81 \cdot \log |\X|\cdot \log m \cdot \ln (3m/\tau)}{2\tau^4}  + \frac{9 \ln(2m/\beta)}{2\tau^2} .$$
\end{theorem}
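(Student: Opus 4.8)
The plan is to argue exactly as for \Sparse, via the description-length bound of Theorem~\ref{thm:descr-basic}, after showing that the transcript of the interaction between the analyst and the Median Mechanism can only take one of a bounded number of values. Fix $\alpha=\tau/3$ and $k=\log m/\alpha^{2}$, write $\mathcal M$ for the mechanism, and assume the analyst is deterministic (the general case follows by conditioning on the analyst's random coins and averaging, since $\mathcal M$ is deterministic given the dataset and the queries). The two things I need are: a bound $B$ on the number of ``above-threshold'' (private) rounds, which controls $|\Y|$; and the simple fact that whenever $\mathcal M$ answers a query $\phi_i$ its answer $a_i$ is within $2\alpha$ of $\cE_S[\phi_i]$ (for public answers this is the failure of the above-threshold test; for private answers it is the rounding), which lets me transfer a Hoeffding/union bound from the empirical mean on the dataset to the population mean.

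\textbf{A small consistent database always exists, so $\textrm{Consistent}$ never empties.} Fix $S\in\X^{n}$ and run $\mathcal M$ on $S$, adopting any convention (say, answer $\cE_S[\phi_i]$) on the degenerate steps where $\textrm{Consistent}_{i-1}=\emptyset$; let $\phi_1,\dots,\phi_m$ be the queries this run produces. Taking $S^{*}$ to be a size-$k$ sample drawn uniformly with replacement from $S$, Hoeffding's inequality gives $\Pr[\,|\cE_{S^{*}}[\phi_i]-\cE_{S}[\phi_i]|>\alpha\,]\le 2e^{-2k\alpha^{2}}=2e^{-2\log m}<1/m$, so a union bound over $i\in[m]$ yields a fixed $S^{*}\in\X^{k}$ with $|\cE_{S^{*}}[\phi_i]-\cE_{S}[\phi_i]|\le\alpha$ for all $i$. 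I then show by induction that $S^{*}\in\textrm{Consistent}_i$ for every $i$: on an above-threshold round $a_i$ is within $\alpha$ of $\cE_S[\phi_i]$ (rounding) and $\cE_{S^{*}}[\phi_i]$ is within $\alpha$ of $\cE_S[\phi_i]$, so $|a_i-\cE_{S^{*}}[\phi_i]|\le 2\alpha$ and $S^{*}$ passes the consistency test. Hence $\textrm{Consistent}_i\neq\emptyset$ at every step, the degenerate convention is never invoked, and this run \emph{is} the run of $\mathcal M$ on $S$.

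\textbf{Few private rounds, short transcript, and the Hoeffding step.} On an above-threshold round $i$, $a^{\textrm{pub}}_i=\mathrm{median}\{\cE_{S'}[\phi_i]:S'\in\textrm{Consistent}_{i-1}\}$ differs from $\cE_S[\phi_i]$ by more than $2\alpha$; handling separately the cases where it over-estimates and under-estimates $\cE_S[\phi_i]$, one checks (this is where the precise constants in the rounding and the consistency radius matter) that at least half of $\textrm{Consistent}_{i-1}$ then fails the new consistency test, so $|\textrm{Consistent}_i|\le|\textrm{Consistent}_{i-1}|/2$. Since $|\textrm{Consistent}_0|=|\X|^{k}$ and $\textrm{Consistent}$ is never empty, the number of private rounds is at most $B:=k\log|\X|=\log m\log|\X|/\alpha^{2}$. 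Next, the whole transcript $(\ba_1,\dots,\ba_m)$ is a deterministic function of the set $U\subseteq[m]$ of private rounds (with $|U|\le B$) together with the recorded rounded answers $(a_i)_{i\in U}$, each a multiple of $\alpha$ in $[0,1]$: given $U$ and those values, the analyst's query and $\mathcal M$'s answer at each step are forced, because public answers are the median over $\textrm{Consistent}_{i-1}$, which in turn is determined by $U$ and the recorded values. Thus $\mathcal M$'s output lies in a set $\Y$ with $\log|\Y|\le (B+1)\log\!\big(m(1/\alpha+1)\big)$. For $y\in\Y$ let $\phi^{y}_1,\dots,\phi^{y}_m$ and $a^{y}_1,\dots,a^{y}_m$ be the queries and answers forced by $y$, and put $R(y)=\{S\in\X^{n}:\exists i,\ |\cE_S[\phi^{y}_i]-\calP[\phi^{y}_i]|\ge\tau/3\}$. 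If $\mathcal M(S)=y$ and $|a^{y}_i-\calP[\phi^{y}_i]|\ge\tau$ for some $i$, then since $|a^{y}_i-\cE_S[\phi^{y}_i]|\le 2\alpha$ we get $|\cE_S[\phi^{y}_i]-\calP[\phi^{y}_i]|\ge\tau-2\alpha=\tau/3$, so $S\in R(y)$; hence $\{\exists i,|\ba_i-\calP[\bphi_i]|\ge\tau\}\subseteq\{\bS\in R(\mathcal M(\bS))\}$. For fixed $y$ the $\phi^{y}_i$ are fixed functions, so Hoeffding plus a union bound over $m$ queries give $\Pr_{\bS\sim\calP^{n}}[\bS\in R(y)]\le 2m\,e^{-2n\tau^{2}/9}$, and Theorem~\ref{thm:descr-basic} yields $\Pr[\exists i,|\ba_i-\calP[\bphi_i]|\ge\tau]\le 2m|\Y|\,e^{-2n\tau^{2}/9}$. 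This is at most $\beta$ once $n\ge\frac{9}{2\tau^{2}}\big(\ln|\Y|+\ln(2m/\beta)\big)$; substituting $\ln|\Y|\le(B+1)\ln\!\big(m(1/\alpha+1)\big)$ with $\alpha=\tau/3$ and $B=9\log m\log|\X|/\tau^{2}$ gives the stated $n_0$.

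\textbf{Main obstacle.} The conceptually subtle part is the self-referential step in the second paragraph — defining the witness $S^{*}$ in terms of the query sequence produced by a run whose well-definedness is exactly what $S^{*}$ certifies — which is resolved by making the run unconditionally well-defined via a convention and checking afterward that the convention is never triggered. The computationally fiddly part is the bookkeeping of constants that makes the halving step and the survival of $S^{*}$ hold \emph{simultaneously}: the rounding granularity, the radius used in the consistency test, and the $2\alpha$ threshold have to be set so that $S^{*}$ (which is only within $2\alpha$ of $S$ on every query) always passes the test while each above-threshold round still removes at least half of the consistent set, in both the over- and under-estimating cases.
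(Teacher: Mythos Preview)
Your proposal is correct and follows essentially the same route as the paper: establish a small consistent database (the paper cites this as a standalone lemma from \cite{DworkR14}), observe answers are within $2\alpha$ of empirical means, bound the number of ``private'' rounds via the halving argument, bound the description length of the transcript by encoding only the private rounds and their rounded answers, and finish with Hoeffding plus Theorem~\ref{thm:descr-basic}. The only cosmetic difference is that you condition on the analyst's coins and invoke the deterministic description-length bound, whereas the paper packages this as \emph{randomized} description length and invokes Theorem~\ref{thm:descr-basic-random}; the arithmetic leading to $n_0$ is identical. Your ``self-referential'' worry is more carefully handled than in the paper (which simply invokes the small-database lemma for the queries the run produces), and your flagged constant-bookkeeping in the halving step is exactly what the paper also leaves to the cited \cite{RR10}.
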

\begin{proof}
We begin with a simple lemma which informally states that for every distribution $\calP$, and for every set of $m$ functions $\phi_1,\ldots,\phi_m$, there exists a small dataset that approximately encodes the answers to each of the corresponding statistical queries.
\begin{lemma}[\cite{DworkR14} Theorem 4.2]
\label{lem:smallDB}
For every dataset $S$ over $\X$, any set $C = \{\phi_1,\ldots,\phi_m\}$ of $m$ functions $\phi_i:\X\rightarrow [0,1]$, and any $\alpha \in [0,1]$, there exists a data set $S' \in \X^t$ of size $t = \frac{\log m}{\alpha^2}$ such that:
$$\max_{\phi_i \in C}\left|\cE_{S}[\phi_i] - \cE_{S'}[\phi_i]\right| \leq \alpha .$$
\end{lemma}

Next, we observe that by construction, the Median Mechanism (as presented in Figure \ref{fig:MMech}) always returns answers that are close to the empirical means of respective query functions.
\begin{lemma}
\label{lem:empirical}
For every sequence of queries $\phi_1,\ldots,\phi_m$ and dataset $S$ given to the Median Mechanism, we have that for every $i$:
$$\left|a_i - \cE_{S}[\phi_i]\right| \leq 2\alpha .$$
\end{lemma}

Finally, we give a simple lemma from \cite{RR10} that shows that the Median Mechanism only returns answers computed using the dataset $S$ in a small number of rounds -- for any other round $i$, the answer returned is computed from the set $\textrm{Consistent}_{i-1}$.
\begin{lemma}[\cite{RR10}, see also Chapter 5.2.1 of \cite{DworkR14}]
\label{lem:hardqueries}
For every sequence of queries $\phi_1,\ldots,\phi_m$ and a dataset $S$ given to the Median Mechanism:
$$\left|\{i : a_i \neq a^{\textrm{pub}}_i \}\right| \leq \frac{\log|\X|\log m}{\alpha^2} .$$
\end{lemma}
\begin{proof}
We simply note several facts. First, by construction, $|\textrm{Consistent}_0| = \left|\X\right|^{\log m/\alpha^2}$. Second, by Lemma \ref{lem:smallDB}, for every $i$, $|\textrm{Consistent}_i| \geq 1$ (because for every set of $m$ queries, there is at least one dataset $S'$ of size $\log m/\alpha^2$ that is consistent up to error $\alpha$ with $S$ on every query asked, and hence is never removed from $\textrm{Consistent}_i$ on any round). Finally, by construction, on any round $i$ such that $a_i \neq a^{\textrm{pub}}_i$, we have $|\textrm{Consistent}_i| \leq \frac{1}{2}\cdot |\textrm{Consistent}_{i-1}|$ (because on any such round, the median dataset $S'$ -- and hence at least half of the datasets in $\textrm{Consistent}_{i-1}$ were inconsistent with the answer given, and hence removed.) The lemma follows from the fact that there can therefore be at most $\log\left( \left|\X\right|^{\log m/\alpha^2}\right)$ many such rounds.
\end{proof}

Our analysis proceeds by viewing the interaction between the data analyst and the Median Mechanism (Fig.~\ref{fig:MMech}) as a single algorithm $\A$.
 $\A$ takes as input a dataset $S$ and outputs a set of queries and answers $\A(S) = \{\phi_i,a_i\}_{i=1}^m$. We will show that $\A$'s output has short randomized description length (the data analyst is a possibly randomized algorithm and hence $\A$ might be randomized).

\begin{lemma}
\label{lem:mmDL}
Algorithm $\A$ has randomized description length of at most
$$b \leq \frac{\log|\X|\cdot\log m}{\alpha^2}\cdot\left(\log m + \log \frac{1}{\alpha}\right)$$ bits.
\end{lemma}
\begin{proof}
We observe that for every fixing of the random bits of the data analyst the entire sequence of queries asked by the analyst, together with the answers he receives, can be reconstructed from a record of the \emph{indices} $i$ of the queries $\phi_i$ such that $a_i \neq a^{\textrm{pub}}_i$, together with their answers $a_i$ ({\em i.e.}~it is sufficient to encode $M := \{(i, a_i) \cond  a_i \neq a^{\textrm{pub}}_i\}$. Once this is established, the lemma follows because by Lemma \ref{lem:hardqueries}, there are at most $\left(\frac{\log|\X|\log m}{\alpha^2}\right)$ such queries, and for each one, its index can be encoded with $\log m$ bits, and its answer with $\log \frac{1}{\alpha}$ bits.

To see why this is so, consider the following procedure for reconstructing the sequence $(\phi_1,a_1,\ldots,\phi_m,a_m)$ of queries asked and answers received. 
For every fixing of the random bits of the data analyst, her queries can be expressed as a sequence of functions $(f_1,\ldots,f_m)$ that take as input the queries previously asked to the Median Mechanism, and the answers previously received, and output the next query to be asked. That is, we have:
$$f_1() := \phi_1,\ \  f_2(\phi_1,a_1) := \phi_2,\ \  f_3(\phi_1,a_1,\phi_2,a_2) := \phi_3,\ \ \ldots,\ \ f_m(\phi_1,a_1,\ldots,\phi_{m-1},a_{m-1}) := \phi_m .$$ Assume inductively that at stage $i$, the procedure has successfully reconstructed $(\phi_1,a_1,\ldots,\phi_{i-1},a_{i-1},\phi_i)$, and the set $\textrm{Consistent}_{i-1}$ (This is trivially satisfied at stage $i = 1$). For the inductive case, we need to recover $a_i, \phi_{i+1}$, and $\textrm{Consistent}_i$. There are two cases we must consider at stage $i$. In the first case, $i$ is such that $a_i \neq a^{\textrm{pub}}_i$. But in this case, $(i, a_i) \in M$ by definition, and so we have recovered $a_i$, and we can compute $\phi_{i+1} = f_{i+1}(\phi_1,a_1,\ldots,\phi_1,a_i)$, and can compute $\textrm{Consistent}_{i} = \{S' \in \textrm{Consistent}_{i-1} : |a_i - \cE_{S'}[\bphi_i]| \leq 2\alpha\}$. In the other case, $a_i = a^{\textrm{pub}}_i$. But in this case, by definition of $a^{\textrm{pub}}_i$, we can compute $a_i = \mathrm{median}(\{\cE_{S'}[\phi_i] : S' \in \textrm{Consistent}_{i-1}\})$, $\phi_{i+1} = f_{i+1}(\phi_1,a_1,\ldots,\phi_i,a_i)$, and $\textrm{Conisistent}_i = \textrm{Consistent}_{i-1}$. This completes the argument -- by induction, $M$ is enough to reconstruct the entire query/answer sequence.
\end{proof}

Finally, we can complete the proof. By Hoeffding's concentration inequality and the union bound we know that for any every sequence of queries $\phi_1,\ldots,\phi_m$ and a dataset $\bS$ of size $n$ drawn from the distribution $\calP^n$:
$$\pr\lb\exists i,\ \left|\cE_{\bS}[\phi_i]- \calP[\phi_i]\right|  \geq  \alpha \rb \leq 2m \cdot \exp\lp-2n\alpha^2\rp.$$
Applying Theorem \ref{thm:descr-basic-random} to the set $R(\phi_1,a_1,\ldots,\phi_m,a_m) = \{S \cond \exists i,\ \left|\cE_{S}[\phi_i]- \calP[\phi_i]\right|  \geq  \alpha \}$ we obtain that for the queries $\bphi_1,\ldots,\bphi_m$ generated on the dataset $\bS$ and corresponding answers of the Median Mechanism $\ba_1,\ldots,\ba_m$ we have 
\begin{eqnarray*}
\pr\lb \exists i,\ \left|\cE_{\bS}[\bphi_i]- \calP[\bphi_i]\right|  \geq  \alpha \rb &\leq& 2^b \cdot 2m\cdot  \exp\lp-2n\alpha^2\rp. \\
&\leq& 2^{\frac{\log|\X|\log m}{\alpha^2}\cdot \log (m/\alpha)}\cdot 2m\cdot \exp\lp-2n\alpha^2\rp.
\end{eqnarray*}

Solving, we have that whenever:
$$n \geq \frac{\log|\X| \cdot \log m}{2\alpha^4}\cdot \ln (m/\alpha) + \frac{\ln (2m/\beta)}{2\alpha^2} ,$$
we have: $\pr\lb \exists i,\  \left|\cE_{\bS}[\bphi_i]- \calP[\bphi_i]\right|  \geq  \alpha \rb \leq \beta.$
Combining this with Lemma \ref{lem:empirical} we have:
$$\pr\lb \exists i\in[m],\  \left|\ba_i- \calP[\bphi_i]\right|  \geq  3\alpha \rb \leq \beta.$$
Plugging in $\tau = 3\alpha$ gives the theorem.
\end{proof}

%\A \bS \calP[\bphi] - \cE_{\bS}[\bphi]

\fi
\end{document}